\pgfplotsset{compat=1.17}
\title{Proof of HP Transfer for a Linear Network}
\newmdtheoremenv[
  backgroundcolor=blue!3,
  linecolor=black!60,
  linewidth=0.8pt,
  roundcorner=4pt,
  innertopmargin=6pt,
  innerbottommargin=6pt,
  innerleftmargin=8pt,
  innerrightmargin=8pt
]{thm}{Theorem}
\newtheorem{lemma}{Lemma}
\newtheorem{definition}{Definition}
\crefname{thm}{Theorem}{theorems}
\newcommand{\reals}{\mathbb{R}}
\newcommand{\normal}{\mathcal{N}}
\newcommand{\E}{\mathbb{E}}
\newcommand{\loss}{\mathcal{L}}
\newcommand{\bigO}{\mathcal{O}}
\newcommand{\data}{\mathcal{D}}
\newcommand{\Prob}{\mathbb{P}}
\newcommand{\tr}{\operatorname{Tr}}
\begin{document}

\twocolumn[

\aistatstitle{A Proof of Learning Rate Transfer under $\mu$P}

\aistatsauthor{ Soufiane Hayou }

\aistatsaddress{ Department of Applied Mathematics and Statistics \\
Johns Hopkins University} ]

\begin{abstract}
    We provide the first proof of learning rate transfer with width in a linear multi-layer perceptron (MLP) parametrized with $\mu$P, a neural network parameterization designed to ``maximize'' feature learning in the infinite-width limit. We show that under $\mu P$, the optimal learning rate converges to a \emph{non-zero constant} as width goes to infinity, providing a theoretical explanation to learning rate transfer. In contrast, we show that this property fails to hold under alternative parametrizations such as Standard Parameterization (SP) and Neural Tangent Parametrization (NTP). We provide intuitive proofs and support the theoretical findings with extensive empirical results.
\end{abstract}

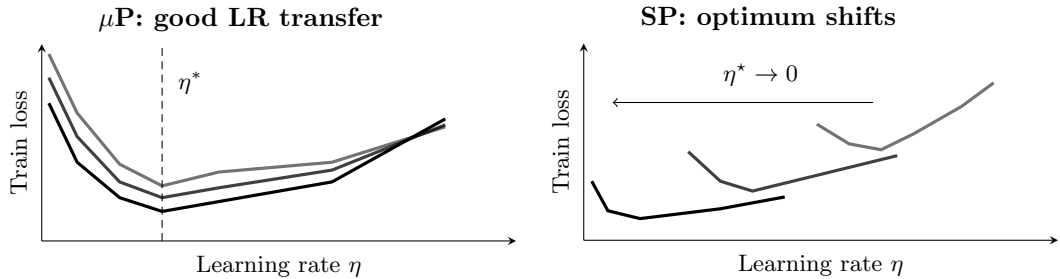
\begin{figure*}[htp]
\centering
\begin{tikzpicture}[every node/.style={font=\small}]
  \begin{axis}[
    name=muP, width=0.46\linewidth, height=4.2cm,
    xmin=0.015, xmax=0.35, ymin=0.0, ymax=1.0,
    axis x line=bottom, axis y line=left,
    xtick=\empty, ytick=\empty,
    xlabel={Learning rate $\eta$}, ylabel={Train loss},
    clip=false
  ]
    \addplot[no markers, very thick, opacity=1.00] coordinates {
      (0.02,0.7) (0.04,0.40) (0.07,0.22) (0.10,0.15) (0.14,0.20) (0.22,0.3) (0.30,0.62)
    };
    \addplot[no markers, very thick, opacity=0.75] coordinates {
      (0.02,0.83) (0.04,0.53) (0.07,0.30) (0.10,0.22) (0.14,0.27) (0.22,0.36) (0.30,0.59)
    };
    \addplot[no markers, very thick, opacity=0.55] coordinates {
      (0.02,0.95) (0.04,0.65) (0.07,0.39) (0.10,0.28) (0.14,0.35) (0.22,0.40) (0.30,0.58)
    };

    \draw[densely dashed] (axis cs:0.10,0.0) -- (axis cs:0.10,1.0);
    \node[anchor=south] at (axis cs:0.12,+0.7) {$\eta^*$};

    
    \node[anchor=south west,font=\bfseries] at (rel axis cs:0.1,1.02) {$\mu$P: good LR transfer};
  \end{axis}

  \begin{axis}[
    at={(muP.east)}, xshift=0.9cm, yshift=-1.3cm,
    width=0.46\linewidth, height=4.2cm,
    xmin=0.005, xmax=0.30, ymin=0.0, ymax=1.0,
    axis x line=bottom, axis y line=left,
    xtick=\empty, ytick=\empty,
    xlabel={Learning rate $\eta$}, ylabel={Train loss},
    clip=false
  ]
    \addplot[no markers, very thick, opacity=1.00] coordinates {
      (0.01,0.30) (0.02,0.15) (0.04,0.11) (0.08,0.15) (0.09,0.16) (0.13,0.22)
    }; 
    \addplot[no markers, very thick, opacity=0.75] coordinates {
      (0.07,0.45) (0.09,0.30) (0.11,0.25) (0.15,0.33) (0.17,0.37) (0.20,0.43)
    }; 
    \addplot[no markers, very thick, opacity=0.55] coordinates {
      (0.15,0.59) (0.17,0.49) (0.19,0.46) (0.21,0.54) (0.24,0.68) (0.26,0.80)
    }; 

    \draw[->] (axis cs:0.185,0.7) -- (axis cs:0.023,0.7);
    \node[anchor=south] at (axis cs:0.115,0.74) {$\eta^\star \to 0$};

    \node[anchor=south west,font=\bfseries] at (rel axis cs:0.1,1.02) {SP: optimum shifts};
  \end{axis}
\end{tikzpicture}
\caption{\textbf{Conceptual illustration of learning-rate transfer.}
Left: Under $\mu$P, loss curves across widths share (approximately) the same optimal learning rate $\eta^*$.
Right: Under SP, the optimal learning rate $\eta^\star_n$ shifts toward $0$ as width grows.
Curves illustrating different widths (darker $\Rightarrow$ wider).}
\label{fig:intro-lr-transfer-concept}
\end{figure*}

\section{Introduction}
    The recent successes in AI are mostly fueled by scale: large neural networks trained on large corpuses of data. Given a fixed training dataset, the size of a neural network can be scaled by increasing the width (hidden dimension) and/or depth (number of layers). As we scale these dimensions, several hyperparameters (HPs) must be adjusted with scale to avoid numerical overflows.  Motivated by this empirical observation, several works have explored the large-width limit of neural networks and its impact on optimal HPs. \citet{he2016deep} introduced the ``$1/\textrm{fan-in}$'' initialization which normalizes the weights to achieve order one activations as width grows (Note that \citet{Neal1996} was the first to introduce the ``$1/\textrm{fan-in}$'' initialization in the context of Bayesian neural networks). The Neural Tangent Kernel (NTK, \citep{jacot2018neural}) was one of the first attempts to understand training dynamics of large-width neural networks. The authors showed that under the neural tangent parametrization, training dynamics converge to a kernel regime in the infinite-width limit, a phenomenon known as lazy training \citep{chizat2020lazytrainingdifferentiableprogramming}. In this regime, neural features are almost identical to their values at initialization and training dynamics can be linearized around initialization. It quickly became clear that NTK regime does not represent practical training of neural network, which exhibit significant feature learning. \citet{yang2021tensor} reverse-engineered this problem by investigating neural parametrizations that result in feature learning in the infinite-width limit and introduced the Maximal Update Parametrization ($\mu$P) which sets precise scaling exponents for the initialization and learning rate. A nice by-product of $\mu$P is HP transfer, or where optimal HPs seem to converge as width increases, a very useful property since it allows tuning HPs on relatively small models and using them for larger models with no additional tuning cost (see \cref{fig:intro-lr-transfer-concept} for a conceptual illustration). The authors conjectured that HP transfer resulted from the fact that $\mu$P achieves ``maximal'' feature learning, and therefore the limiting dynamics are ``optimal'' in the sense that no other limit (corresponding to other parametrizations) is better in terms of training loss, thus leading to the convergence of the optimal HPs as width grows. While this intuition is valid to some extent, to the best of our knowledge, no rigorous proof of HP transfer exists in the literature. 

    Perhaps the most important hyperparameter is the learning rate, which generally requires some tuning in practice. Motivated by this, we focus on \emph{learning rate transfer} in this work and present the first proof for this phenomenon in deep linear networks parametrized with $\mu$P. Specifically, we consider a linear Multi-Layer Perceptron (MLP) and show that at training steps $t$, the optimal learning rate converges to a non-zero constant as width goes to infinity, providing a theoretical proof for learning rate transfer observed in practice. Our proof is based on the observation that with linear MLPs, the loss function at any training step can be expressed as a polynomial function of the learning rate. We study convergence dynamics of these polynomials and their roots and conclude on the convergence of the optimal learning rate as width goes to infinity. We further show that other parametrizations such as Standard Parametrization (SP) (and Neural Tangent Parametrization (NTP)) lead to significant shift in optimal learning rate as width grows, thus requiring expensive tuning.  

The paper is structured as follows. In \cref{sec:setup}, we introduce notation and definitions. In \cref{sec:step_1}, we provide a full characterization of LR transfer after one step and study the convergence rate of the optimal LR. In \cref{sec:general_t}, we provide a proof for LR transfer for general step $t$. In both \cref{sec:step_1} and \cref{sec:general_t}, extensive simulations are provided to support the theoretical results. In \cref{sec:exps}, we provide additional empirical results with varying setups: activation function, optimizer, depth, training time.

\subsection{Related work}

\paragraph{Infinite-width analysis.} There is a rich literature on the theory of infinite-width neural networks. The first works on infinite-width theory are related to approximation results showing that neural networks are universal approximators when the width to infinity (see e.g. \cite{HORNIK1989359, Cybenko1989}). Perhaps the first methodological work on infinite-width neural networks was a study of priors in large-width Bayesian neural network by \citet{Neal1996}, where the author studied how Gaussian prior should be scaled as network width increases, and showed that single-layer Bayesian networks converge to a Gaussian process in the infinite-width limit, a result that was later used in \cite{Williams1996ComputingWI} to compute infinite-width posteriors, and was later generalized to multi-layer networks in \citep{lee2018deepneuralnetworksgaussian,matthews2018gaussianprocessbehaviourwide}. Subsequent research has examined the impact of initialization \citep{deepinfoprop2017, hayou19activation, li2021future, daniely2017deeperunderstandingneuralnetworks}, the activation functions \citep{hayou19activation}, learning rate \citep{yang2022tensor}, batch size \citep{zhang2025doescriticalbatchsize}, etc. Others works studied how these HPs should scale with depth (assuming large-width) \citep{hayou2021stableresnet, yang2023tensorprogramsvifeature, bordelon2023depthwisehyperparametertransferresidual}. There is also a rich literature on training dynamics of infinite-width neural networks, including the literature on the neural tangent kernel \citep{jacot2018neural, hayou2022exactconvergenceratesneural, arora2019exactcomputationinfinitelywide, chizat2020lazytrainingdifferentiableprogramming, allenzhu2019convergencetheorydeeplearning}, and the literature on mean-field neural networks \citep{sirignano2019meanfieldanalysisneural, mei2019meanfieldtheorytwolayersneural, Mignacco_2021, chizat2022infinitewidthlimitdeeplinear}.

\paragraph{Hyperparameter transfer.} \citet{yang2021tensor} introduced $\mu$P, a neural parametrization that specifies how initialization and learning rate should scale with model width $n$. The authors derived this parametrization by searching for HPs that yield feature learning in the infinite-width limit, in contrast to neural tangent parametrization  which leads to a kernel regime in the limit \citep{jacot2018neural}. In particular, the authors observed that $\mu$P leads to an interesting phenomenon: HP transfer with width, where optimal HPs tend to stabilize as width increases.  It was conjectured that feature learning properties of the infinite-width limit under $\mu$P is the main factor behind HP transfer. In \cite{yang2022tensor}, the authors showed that $\mu$P yields HP transfer in Large Language Models (LLMs) of GPT-3 scale. However, other works showed mixed results on the efficacy of $\mu$P with LLMs and Diffusion model \citep{falcon2023falconseriesopenlanguage, blake2025umupunitscaledmaximalupdate, lingle2025empiricalstudymuplearning, everett2024scalingexponentsparameterizationsoptimizers, hayou2025optimalembeddinglearningrate, kosson2025weightdecaymattermup, zheng2025scalingdiffusiontransformersefficiently}. Other works include  \cite{noci2024superconsistencyneuralnetwork} where the authors studied learning rate transfer studied from the angle of Hessian geometry and its connection to the edge of stability \cite{cohen2022gradientdescentneuralnetworks}, \citep{bordelon2025deep} where the authors studied learning rate transfer in linear networks, \citep{chizat2025featurespeedformulaflexible} where the authors considered a feature based approach where learning rate transfer is automatically achieved, and other works that extended HP transfer to cover other optimizers \citep{jordan2024muon, ahn2025diondistributedorthonormalizedupdates, pethick2025trainingdeeplearningmodels}, depth scaling \citep{yang2023tensorprogramsvifeature, bordelon2023depthwisehyperparametertransferresidual, dey2025dontlazycompletepenables}, etc.

\section{Setup and Definitions}\label{sec:setup}
We consider a linear Multi-Layer Perceptron (MLP) given by
\begin{equation}\label{eq:mlp}
f(x) = V^\top W_L W_{L-1} \dots W_1 W_0 x,
\end{equation}
where $x \in \reals^d$ is the input, $W_0 \in \reals^{n \times d}$, $W_\ell \in \reals^{n\times n}$ for $\ell \in \{1,2, \dots, L\}$, and $V \in \reals^n$, are the weights. While we consider one-dimensional output, our results can be generalized to neural networks with multi-dimensional outputs.

Model \cref{eq:mlp} is trained by minimizing the quadratic loss $\loss = \frac{1}{2m} \, \sum_{i=1}^m (f(x_i) - y_i)^2$, where $\data = \{(x_i, y_i), i=1\dots m\}$ is the training dataset. For the sake of simplicity, we only train the weight matrices $W_1, W_2, \dots, W_L$, and fix $W_0$ and $V$ to their initialization values.\footnote{Our results can be extended to the case where $W_0$ and $V$ are trainable. For $\mu$P, the learning rate for $W_0$ should be parametrized as $\eta \times n$.} For weight updates, we use gradient descent (GD)
    \begin{equation}\label{eq:gd}
        W_\ell^{(t+1)} = W_\ell^{(t)} - \eta \nabla_{W_\ell^{(t)}} \loss, 
    \end{equation}
where $t \in \{1, 2, \dots, T\}$ is the step, $\eta$ is the learning rate, and $W^{(0)}_\ell$ is randomly initialized.

When training a neural network, we should first set the hyperparameters (HPs) such as initialization and learning rate. Generally speaking, as width grows, it should be expected that optimal HPs shift with width, indicating dependence on width $n$. Therefore, it makes sense to explicitly parametrize HPs as a function of width. For instance, He initialization \citep{he2016deep} sets the initialization weights as centred gaussian random variables with ``1/fan\_in'' variance, where ``fan\_in'' refers to the dimension of the previous layer, e.g. $n$ for $\ell \in \{1,2, \dots, L\}$, and $d$ for $\ell = 0$. For the learning rate, $\mu$P scaling parametrizes the learning rate as $\eta n^{-1}$ for Adam \citep{yang2022tensor} and $\eta$ for gradient descent. We call these \emph{neural parametrizations}, a notion that we formalize in the next definition.
\begin{definition}[Neural Parametrization]
A neural parametrization for model \cref{eq:mlp} specifies the constants $(\alpha_\ell)_{0 \leq \ell \leq L}, \alpha_V$, and $\alpha_{\eta}$:
\begin{itemize}
    \item Initialization: $W_0 \sim \normal(0, d^{-\alpha_0})$, $W_\ell \sim \normal(0, n^{-\alpha_\ell})$, and $V \sim \normal(0, n^{-\alpha_V})$. 
    \item Learning rate: $\eta \times n^{-c}$.  
\end{itemize}

\end{definition}
While a neural parametrization should in-principle cover all HPs (initialization, learning rate, batch size, Adam's ($\beta_1, \beta_2$), etc), we consider only the initialization and learning rate in this work. Here are two examples of such neural parametrizations: 
\begin{itemize}
    \item Standard Parametrization (SP): $\alpha_\ell = 1$ for $\ell \in \{0,\dots, L\}$, $\alpha_V = 1$, and $c=0$. SP does not specify width exponent for the learning rate, hence the choice of $c=0$. \footnote{While some works introduce a learning rate scaling for SP (see e.g. \cite{everett2024scalingexponentsparameterizationsoptimizers}), the standard parametrization represents common practice (e.g. PyTorch defaults) which do not set default scaling rules for the learning rate.}

    \item Maximal Update Parametrization ($\mu P$): $\alpha_\ell = 1$ for $\ell \in \{0,\dots, L\}$, $\alpha_V = 2$, and $c=0$. Notice that the only difference with SP is the choice of $\alpha_V = 2$. For the learning rate, $\mu$P coincides with SP when the training algorithm is GD, however, when considering Adam \citep{kingma2017adammethodstochasticoptimization}, the learning rate exponent becomes $c=1$.
\end{itemize}

\subsection{What is Learning Rate (LR) Transfer?}
In the context of $\mu$P, LR transfer refers to the \emph{stability of optimal LR as model width grows}. Let $\eta_n$ be the optimal learning rate for neural network \cref{eq:mlp} of width $n$; LR transfer occurs if $\eta_n$ converges to a constant $\eta_\infty > 0$. As a result of this convergence, we can expect the optimal learning rate to remain stable for $n \gg 1$, i.e. increasing model beyond some base width $n_0 \gg 1$ does not significantly affect optimal LR. This is a highly desirable property as it implies that optimal LR can be tuned on model width $n_0$ and used for models of widths $n \gg n_0$, thus reducing tuning costs. However, for such property to be useful, $\eta_n$ should converge fast enough so that considering $|\eta_n-\eta_\infty|$ is small enough for practical model widths (e.g. $n=10^3$). A recent concurrent work by \citet{ghosh2025understandingmechanismsfasthyperparameter} studied the mechanisms of fast HP transfer and connects it to the geometry of the gradients.

\paragraph{Learning rate transfer as described in \citet{yang2021tensor}.} The authors showed empirically that learning rate transfer occurs under $\mu P$. They justified this observation with the intuition that $\mu P$ is associated with ``maximal'' feature learning. Specifically, $\mu$P is the only parametrization that achieves $\Delta z = \Theta(1)$ asymptotically in width $n$ for any activation $z$ in the neural network, while other parametrizations such as Standard Parametrization (SP) and Neural Tangent Parametrization (NTP) lead to suboptimal learning dynamics as model width $n$ grows (e.g. vanishing feature updates $\Delta z = \bigO(n^{-\beta})$ or exploding feature updates $\Delta z = \Omega(n^{\alpha})$ for some $\alpha, \beta >0$). While heuristic arguments were provided as to why learning rate transfer occurs under $\mu$P, to the best of our knowledge, no formal proof was provided showing the convergence of $\eta_n$ in the case of multi-layer neural networks.

\paragraph{Proving learning rate transfer is non-trivial.} From a mathematical perspective, proving learning rate transfer requires proving the convergence of the optimal learning rate $\eta_n$ to a non-zero constant as width goes to infinity. Optimal learning rate is (naturally) defined as the argmin of the training loss over a some set of possible values for the learning rate $\eta$. Since the loss is a random variable (from the random initialization), proving convergence of optimal learning rate requires proving convergence of the argmin of a stochastic process.

We provide the \emph{first proof to LR transfer} with width in linear MLPs of any depth (model \ref{eq:mlp}). We further show that with other parameterizations such as SP (or NTP), learning rate doesn't transfer. Let us first introduce some notation that will be consistently be used throughout the paper. 

\paragraph{Notation.}  Hereafter, $n$ will always denote model width. As $n$ grows, given sequences $c_n \in \reals$ and $d_n \in \reals^+$, we write $c_n = \bigO(d_n)$ when $c_n < \kappa d_n$ for $n$ large enough, for some constant $\kappa > 0$. We write $c_n = \Theta(d_n)$ if we have $\kappa_1 d_n\leq c_n \leq \kappa_2 d_n$ for some $\kappa_1, \kappa_2 >0$. For vector sequences $c_n = (c_n^i)_{1 \leq i \leq k} \in \reals^k$ (for some $k >0$), we write $c_n = \bigO(d_n)$ when $c_n^i = \bigO(d_n^i)$ for all $i \in [k]$, and same holds for other asymptotic notation. Finally, when the sequence $c_n$ is a vector of random variables, asymptotics are defined in the sense of the second moment ($L_2$ norm). For a vector $z\in \reals^n$, we will use the following norms: $\|z\| = \left(\sum_{i=1}^n z_i^2\right)^{1/2}$ (euclidean norm), and $\|z\|_1 = \sum_{i=1}^n |z_i|$ ($\ell_1$ norm). For two vectors $z,z' \in \reals^n$, $z' \otimes z$ denotes the outer product. Finally, all expectations in our analysis are taken with respect to random initialization weights.

The training dataset $\data$ is considered fixed, and the weights $(W_\ell)_{1\leq \ell \leq L}$ are updated with GD (\cref{eq:gd}). We use superscript $(t)$ for $t \in \{0, 1, \dots, T\}$ to denote the gradient step, e.g. $W_\ell^{(t)}$ is the weight matrix at the $\ell^{th}$ layer at training step $t$. Finally, since our goal is to study the asymptotics of the optimal learning rate, we abuse the notation and write $\loss^{(t)}_n(\eta)$ for the loss function of a neural network of width $n$ trained for $t$ steps with GD with learning rate $\eta$. Given width $n$ and training step $t$, an optimal LR can be defined as $\eta_{n}^{(t)} \in \textrm{argmin}_{\eta > 0} \loss_n^{(t)}(\eta)$. Note that the loss function $\loss^{(t)}_n$ depends on the random initialization weights, and therefore is a random variable itself. As a result, the optimal learning rate $\eta^{(t)}_n$ is also a random variable that is measurable with respect to the sigma-algebra generated by the initialization weights. When $\eta^{(t)}_n$ converges to some non-zero deterministic constant $\eta^{(t)}_\infty$ as width $n$ goes to infinity, we say that LR transfer occurs .

\begin{definition}[LR Transfer]
Let $t \in \{1,2, \dots, T\}$. We say that LR transfers with width $n$ if there exists a deterministic constant $\eta_\infty^{(t)}> 0$ such that the optimal learning rate $\eta_n^{(t)}$ converges in probability to a $\eta_\infty^{(t)}$ as $n$ goes to infinity.
\end{definition}

The condition $\eta_\infty^{(t)}> 0$ is crucial for LR transfer. In the case where $\eta_\infty^{(t)}=0$, all we can say is that $\eta_n^{(t)}$ converges to $0$ but setting the learning rate to $0$ results in no training. When $\eta_\infty^{(t)}> 0$, the limiting training loss is different by a $\Theta(1)$ factor in width $n$, i.e. achieving non-trivial feature updates.

Note that we consider convergence in probability for the definition of LR transfer, but it is equivalent to convergence in distribution since convergence in distribution to a constant implies convergence in probability. In the next section, we provide a comprehensive analysis of LR transfer for $t=1$ with explicit convergence rates. We later prove LR transfer for general step $t$.

\section{Learning Rate Transfer: Full Characterization at $t=1$}\label{sec:step_1}
We characterize the asymptotic behavior of the optimal learning rate after one gradient step. We show that under $\mu$P, LR transfer occurs. For other parametrizations such as SP and NTP, the optimal learning rate converges to zero or diverges, respectively, which implies that LR transfer doesn't occur in these cases. Here, we only study $\mu$P and SP, the result for NTP is straightforward.

\subsection{Learning Rate Transfer under $\mu$P}
We assume that initialization and learning rate exponents are set according to $\mu$P, namely 
\begin{itemize}
    \item Initialization: $W_0 \sim \normal(0,d^{-1})$, $W_\ell \sim \normal(0,n^{-1})$, and $V \sim \normal(0,n^{-2})$.
    \item Learning rate: constant $\eta >0$.
\end{itemize}

\paragraph{Intuitive analysis.}
Consider the simple case where the dataset consists of a single datapoint $(x,y)$. We will later state the result for general dataset size. The loss function at step $t=1$ is given by $\loss_n^{(1)}(\eta) = \frac 1 2 (f^{(1)}(x) - y)^2$, and the gradients are given by rank-1 matrices
$$
\nabla_{W_\ell} \loss_n^{(0)} = \chi \, b_{\ell + 1} \otimes a_{\ell -1}
$$
where 
\begin{equation*}
\begin{cases}
b_\ell = (W_{\ell}^{(0)})^\top (W_{\ell +1}^{(0)})^\top \dots (W_L^{(0)})^\top V,\\
a_\ell = W_{\ell}^{(0)} \dots W_{1}^{(0)} W_0 x,\\
\chi = f^{(0)}(x) - y.
\end{cases}
\end{equation*}

At $t=1$, model output for input $x$ is given by 
\begin{align*}
f^{(1)}(x) = V^\top \left[\prod_{\ell = 1}^L (W_\ell^{(0)} - \eta \, \chi \, b_{\ell+1} \otimes a_{\ell -1}) \right] W_0 x,
\end{align*}
which can be expressed as a polynomial in $\eta$. For integers $p_2\geq p_1$, define the products 
$$J_{p_2:p_1}= W_{p_2}^{(0)} W_{p_2 -1}^{(0)} \dots W_{p_1}^{(0)},$$ 
and $J_{p_2:p_1}= I_n$ for $p_2< p_1$. We can write 
$$f^{(1)}(x) = f^{(0)}(x) + \sum_{\ell = 1}^L \phi_\ell \eta^\ell,$$
where for $k \in \{1,\dots, L\}$,
$$
\phi_k = (-\chi)^k \, \sum_{\scriptstyle{1\leq \ell_1 < \dots < \ell_k \leq L}} \|b_{\ell_k+1}\|^2 \|a_{\ell_1 - 1}\|^2 \Psi(\ell_1, \dots, \ell_k),
$$
with $\Psi$ is given by 
$$
\Psi(\ell_1, \ell_2, \dots, \ell_k) =
\prod_{j=1}^k a_{\ell_{j}-1}^\top \, J_{\ell_{j} - 1: \ell_{j-1}+1}\, b_{\ell_{j-1} + 1}
$$
for $k\geq 2$, and $\Psi(\ell) = 1$ for all $\ell$ by definition when $k=1$.

Now define the optimal learning rate for width $n$, $\eta^{(1)}_n = \operatorname{argmin}_{\eta>0} \frac 1 2 (f^{(1)}(x) - y)^2$ at step $t=1$, which  we assume to be unique for convenience.  The asymptotic behavior of $\eta^{(1)}_n$ w.r.t $n$ depends mainly on the coefficients $\phi_\ell$:

\begin{itemize}
    \item \textbf{$\ell = 1$} (the coefficient of degree 1 monomial): 
$$\phi_1 = (-\chi) \sum_{\ell = 1}^L \|b_{\ell + 1}\|^2 \|a_{\ell-1}\|^2.$$
Strong Law of Large Numbers (SLLN) as $n \to \infty$ yields convergence  to $y\,L\|x\|^2 \, d^{-1}$ almost surely.

\item \textbf{$\ell \geq 2$}: we prove that $\phi_\ell$ converges to $0$ in $\mathbb L_2$ for $\ell \geq 2$. Intuitively, the convergence of $\phi_\ell$ to $0$ is a result of the fact that $f^{(0)}(x)$ converges to zero because of the Mean-field-type initialization of the projection layer $V \sim \normal(0,n^{-2})$. We now state these results below for general dataset size $m$. 

\end{itemize}

\paragraph{Results.}
Recall the training dataset consisting of $m$ samples $\data = \{(x_i,y_i), i=1,\dots, m\}$. Similar to the notation above, define 
\begin{equation*}
\begin{cases}
    a_{\ell,i} := W_\ell W_{\ell-1}\cdots W_0 x_i,\\
    b_\ell := W_\ell^\top W_{\ell+1}^\top\cdots W_L^\top V,\\
    \chi_i:=f^{(0)}(x_i)-y_i, \textrm{ for } i \in [m],
\end{cases}
\end{equation*}

with  $a_{-1,i}:=x_i$ and $b_{L+1}:=V$ by definition. The loss at step $t=1$ is given by $\loss_n^{(1)}(\eta)=\tfrac{1}{2m}\sum_{i=1}^m (f^{(1)}(x_i)-y_i)^2$ and the gradients are weighted sums of rank-1 matrices
\begin{equation}\label{eq:m-grad}
\nabla_{W_\ell}\loss_n^{(0)}
= \frac1m\sum_{i=1}^m \chi_i\, b_{\ell+1}\otimes a^{(i)}_{\ell-1}.
\end{equation}

Model output $f^{(1)}(x)$ can be expressed as a polynomial function in learning rate $\eta$. The next result characterizes the asymptotic behavior of its coefficients.

\begin{lemma}[Asymptotic coefficients]\label{lemma:m-vanish}
Fix $x \in \reals^d$. Then, there exists random scalars $(\phi_{\ell})_{1\leq \ell \leq L}$ such that $f^{(1)}(x) = f^{(0)}(x) +\sum_{\ell = 1}^L \phi_\ell \eta^\ell$, and
for $\ell \in\{2,\dots,L\}$,
\(
\|\phi_\ell\|_{L_2}=\bigO\!\left(n^{-(\ell-1)/2}\right).
\)
Moreover, we have 
$$\phi_1 \overset{a.s.}{\underset{n\to\infty}{\longrightarrow}} \frac{L}{m}\sum_{i=1}^m y_i\, \frac{\langle x, x_i \rangle}{d}.$$
\end{lemma}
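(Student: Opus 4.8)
The plan is to make the polynomial expansion explicit and read off the size of each coefficient from the telescoping rank-one structure of the gradients. The first observation I would record is that, because the factor $b_{\ell+1}$ in \cref{eq:m-grad} does not depend on the sample index $i$, each gradient is in fact \emph{rank one}: $\nabla_{W_\ell}\loss_n^{(0)}=\tfrac1m\,b_{\ell+1}\otimes\tilde a_{\ell-1}$, where $\tilde a_{\ell-1}:=\sum_{i=1}^m\chi_i\,a^{(i)}_{\ell-1}$. Substituting $W_\ell^{(1)}=W_\ell^{(0)}-\eta\,\nabla_{W_\ell}\loss_n^{(0)}$ into \cref{eq:mlp} and expanding the product over $\ell$ yields a degree-$L$ polynomial in $\eta$ whose $\eta^k$-coefficient $\phi_k$ is a sum over increasing tuples $1\le\ell_1<\dots<\ell_k\le L$, in which the chosen factors contribute the rank-one blocks $b_{\ell_j+1}\otimes\tilde a_{\ell_j-1}$ and the remaining factors contribute the initialization products $J_{p_2:p_1}$. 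Since every chosen block is rank one, each term collapses into a scalar chain consisting of a \emph{head} $V^\top J_{L:\ell_k+1}b_{\ell_k+1}=\|b_{\ell_k+1}\|^2$, a \emph{tail} $\tilde a_{\ell_1-1}^\top J_{\ell_1-1:1}W_0x$, and $k-1$ \emph{cross factors} $\tilde a_{\ell_{j+1}-1}^\top J_{\ell_{j+1}-1:\ell_j+1}\,b_{\ell_j+1}$.

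For $k=1$ there are no cross factors, and I would establish the stated limit by analyzing the three scales separately. Writing $a^{(x)}_{\ell-1}:=J_{\ell-1:1}W_0x$, an induction on the layer index using $\E[W_\ell^\top W_\ell]=I_n$ and $\E[W_0^\top W_0]=(n/d)I_d$ gives $\E\|b_{\ell+1}\|^2=n^{-1}$ and $\E\langle a^{(i)}_{\ell-1},a^{(x)}_{\ell-1}\rangle=(n/d)\langle x_i,x\rangle$ for every $\ell$, with vanishing relative fluctuations from a variance computation. Because $V\sim\normal(0,n^{-2})$, the initial output obeys $\E(f^{(0)}(x_i))^2=\bigO((nd)^{-1})$, so $\chi_i\to-y_i$. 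Writing $\phi_1=-\tfrac1m\sum_{\ell=1}^L\|b_{\ell+1}\|^2\sum_i\chi_i\langle a^{(i)}_{\ell-1},a^{(x)}_{\ell-1}\rangle$, multiplying the scales $n^{-1}$, $\chi_i\to-y_i$, and $(n/d)\langle x_i,x\rangle$, and summing the $L$ identical layer contributions yields $\tfrac Lm\sum_i y_i\langle x,x_i\rangle/d$; Slutsky (with summable fourth moments for the almost-sure statement) upgrades the $L_2$/in-probability limits to the claimed convergence.

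For $k\ge2$ the decisive quantities are the cross factors, which I claim are of order $n^{-1/2}$ in $L_2$. The naive bound $|\tilde a^\top Jb|\le\|\tilde a\|\,\|J\|_{\mathrm{op}}\,\|b\|=\Theta(\sqrt n)\cdot\Theta(1)\cdot\Theta(n^{-1/2})=\Theta(1)$ is too weak; the gain comes from near-orthogonality of the two ends. To expose this I would use $a^{(i)}_{\ell_{j+1}-1}=J_{\ell_{j+1}-1:\ell_j+1}\,a^{(i)}_{\ell_j}$ and $b_{\ell_j+1}=J_{\ell_{j+1}-1:\ell_j+1}^\top b_{\ell_{j+1}}$, so that, dropping the negligible $f^{(0)}$-part of $\chi_i$, the cross factor becomes $\hat a^\top(J^\top JJ^\top)\tilde b$ with $J:=J_{\ell_{j+1}-1:\ell_j+1}$, where $\hat a:=-\sum_i y_i a^{(i)}_{\ell_j}$ depends on $W_0,\dots,W_{\ell_j}$, the block $J$ on the intermediate layers, and $\tilde b:=b_{\ell_{j+1}}$ on $W_{\ell_{j+1}},\dots,W_L,V$ — three \emph{mutually independent} ingredients. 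Conditioning on everything but $V$ and using $\E[\tilde b\tilde b^\top]=n^{-2}I_n$ gives conditional second moment $n^{-2}\|JJ^\top J\,\hat a\|^2$; then $\E[\|JJ^\top J\hat a\|^2]=\E[\hat a^\top(J^\top J)^3\hat a]=\bigO(1)\cdot\E\|\hat a\|^2=\bigO(n)$, so $\E(\mathrm{cross})^2=\bigO(n^{-1})$. Finally I would combine head $=\bigO(n^{-1})$, tail $=\bigO(n)$, and the $k-1$ cross factors $=\bigO(n^{-1/2})$ by generalized H\"older to bound each tuple's term in $L_2$ by $\bigO(n^{-1}(n^{-1/2})^{k-1}n)=\bigO(n^{-(k-1)/2})$, and sum over the $\binom Lk=\bigO(1)$ tuples by the triangle inequality.

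The main obstacle is precisely the cross-factor estimate: before the rewriting the three ingredients share the intermediate weight matrices, so the independence that produces the $n^{-1/2}$ saving is not available until one re-expresses $\tilde a$ and $b$ through the common block $J$ and isolates the independent boundary pieces $\hat a$, $J$, $\tilde b$. Making the H\"older step rigorous additionally requires higher-moment (not merely $L_2$) control of each factor, i.e.\ bounds of the form $\E[(J^\top J)^r]=\bigO_r(1)$ (rotational invariance forces $\E[(J^\top J)^r]$ to be a scalar multiple of $I_n$, of size $\bigO(1)$ for fixed depth $L$) and $\E\|\hat a\|^{2r}=\bigO_r(n^r)$ for products of independent Gaussian matrices; these are standard Wishart/free-probability estimates at fixed $L$ but form the technical heart of the argument. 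The remaining work is bookkeeping: verifying that the $f^{(0)}$-fluctuation of $\chi_i$ contributes only lower-order terms, and that the $\bigO(1)$ number of index tuples does not affect the stated rate.
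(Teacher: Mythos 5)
Your proposal is correct and follows essentially the same route as the paper's proof: expand the one-step product into head/tail/cross-factor scalar chains, get the $O_{L_2}(n^{-1/2})$ gain per cross factor by rewriting it as $\hat a^\top J^\top J J^\top \tilde b$ with independent ends (your conditional-variance computation is the same estimate as the paper's \cref{lemma:tri_concentration}, which fixes the constant via the Marchenko--Pastur third moment), combine the $k-1$ factors with head $\Theta(n^{-1})$ and tail $\Theta(n)$ by Cauchy--Schwarz/H\"older, and identify the $\phi_1$ limit by a law-of-large-numbers argument on $\|b_{\ell+1}\|^2\langle a^{(i)}_{\ell-1},a^{(x)}_{\ell-1}\rangle$ with $\chi_i\to -y_i$. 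The only differences are cosmetic: you treat general $m$ directly where the paper reduces to $m=1$, and you explicitly flag the higher-moment/H\"older bookkeeping and the $f^{(0)}$-part of $\chi_i$ that the paper handles implicitly.
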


The proof of \cref{lemma:m-vanish} is provided in \cref{app:proofs} and is based on the intuition developed above. The result shows that coefficients of degree $\ell \geq 2$ vanish as $n\to \infty$ with a rate of $n^{-(\ell-1)/2}$ in width. Interestingly, only the monomial of degree one does not vanish in the limit, and converges to a deterministic constant. As a result, asymptotically, the loss is quasi-quadratic in $\eta$. This allows us to fully characterize the convergence of the optimal learning rate $\eta_n^{(1)}$ at $t=1$.

For the remainder of the paper, we define the $m\times m$ normalized input Gram matrix
\(
K=\left(d^{-1}\,\langle x_i,x_j\rangle\right)_{1\leq i,j \leq m} \in\mathbb R^{m\times m},
\), and the vector containing all outputs $y = (y_1, \dots, y_m)^\top \in \reals^m$. The next result shows LR transfer at $t=1$ and characterizes the limiting optimal learning rate and the convergence rate.

\begin{figure*}[h]
    \centering
    \includegraphics[width=0.37\linewidth]{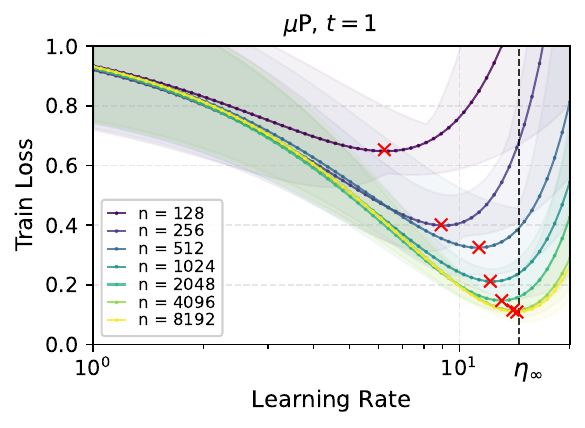}
    \includegraphics[width=0.37\linewidth]{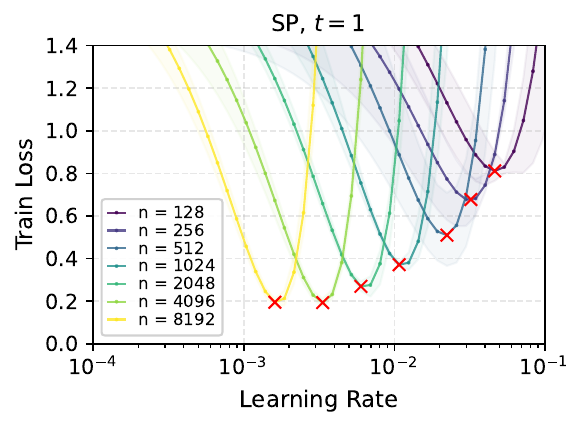}
    \includegraphics[width=0.36\linewidth]{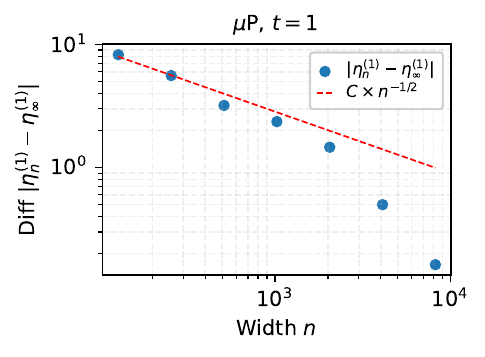}
    \includegraphics[width=0.363\linewidth]{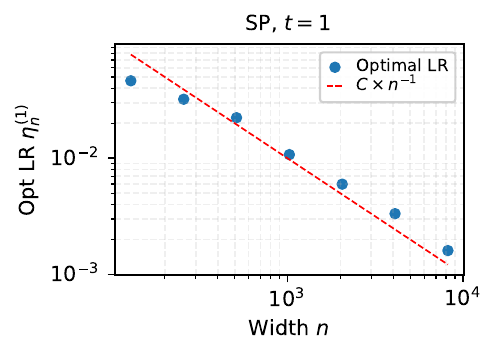}
    \caption{Optimal LR as a function of model width with 3 random seeds. \textbf{(Top)} Train loss as function of LR $\eta^{(1)}_n$ at $t=1$ for both $\mu$P and SP. \textbf{(Bottom)} Convergence of optimal LR $\eta^{(1)}_n$ as width grows.}
    \label{fig:one_step}
\end{figure*}

\begin{thm}[LR transfer at $t=1$]\label{thm:m-argmin}
Assume that $Ky\neq 0$ and define 
$$
\eta_\infty^{(1)}
\;=\; \frac{m}{L}\,
\frac{y^\top K y}{\| K y\|^2}.
$$ 
Then, for any compact interval $I \subset [0,\infty)$ containing $\eta_\infty^{(1)}$, and any $\eta^{(1)}_n \in \textrm{argmin}_{\eta \in I} \loss^{(1)}_n(\eta)$, we have
\[
\eta_n^{(1)}-\eta_\infty^{(1)} = O_{\mathbb P}(n^{-1/2}).
\]
\end{thm}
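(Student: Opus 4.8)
The plan is to treat $\loss_n^{(1)}$ as a random polynomial in $\eta$ whose coefficients concentrate, identify the deterministic limiting polynomial, and then transfer concentration from the coefficients to the argmin using strict convexity of the limit.

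\textbf{Step 1 (the limiting polynomial).} Applying \cref{lemma:m-vanish} to each input $x_i$ and writing $\phi_\ell^{(i)}$ for the resulting coefficients, squaring and averaging gives a polynomial of degree $2L$,
\[
\loss_n^{(1)}(\eta)=\frac{1}{2m}\sum_{i=1}^m\Big(\chi_i+\sum_{\ell=1}^L\phi_\ell^{(i)}\eta^\ell\Big)^2=\sum_{k=0}^{2L}c_{k,n}\,\eta^k .
\]
By \cref{lemma:m-vanish}, $\chi_i\to -y_i$ (since $f^{(0)}(x_i)\to0$ under $\mu$P) and $\phi_1^{(i)}\to \tfrac{L}{m}(Ky)_i$, whereas $\phi_\ell^{(i)}\to0$ for $\ell\ge2$. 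Hence the only surviving coefficients are $c_{0,\infty}=\tfrac{1}{2m}\|y\|^2$, $c_{1,\infty}=-\tfrac{L}{m^2}\,y^\top Ky$ and $c_{2,\infty}=\tfrac{L^2}{2m^3}\|Ky\|^2$, with $c_{k,\infty}=0$ for $k\ge3$, so the limit is the quadratic $\loss_\infty(\eta)=\tfrac{1}{2m}\sum_i\big(-y_i+\tfrac{L}{m}(Ky)_i\,\eta\big)^2$. Because $K\succeq0$, the assumption $Ky\neq0$ forces $c_{2,\infty}>0$, so $\loss_\infty$ is strictly convex with unique minimizer $-c_{1,\infty}/(2c_{2,\infty})=\tfrac{m}{L}\,y^\top Ky/\|Ky\|^2=\eta_\infty^{(1)}$; this also certifies $\eta_\infty^{(1)}>0$.

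\textbf{Step 2 (uniform convergence with rate).} I would show that $R_n:=\loss_n^{(1)}-\loss_\infty$ obeys $\sup_{\eta\in I}|R_n(\eta)|+\sup_{\eta\in I}|R_n'(\eta)|=O_{\Prob}(n^{-1/2})$. Since $I$ is compact, every power $\eta^k$ is bounded on $I$, so it suffices to bound $|c_{k,n}-c_{k,\infty}|=O_{\Prob}(n^{-1/2})$ for each $k$. For $k\ge3$ this is immediate from Chebyshev and the estimate $\|\phi_\ell\|_{L_2}=O(n^{-(\ell-1)/2})$ of \cref{lemma:m-vanish}, because every monomial contributing to $c_{k,n}$ carries a factor $\phi_\ell^{(i)}$ with $\ell\ge2$. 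For $k\in\{1,2\}$ I would split $c_{k,n}$ into a surviving piece (built only from $\chi_i$ and $\phi_1^{(i)}$) and a vanishing piece (containing some $\phi_\ell$, $\ell\ge2$, hence $O_{\Prob}(n^{-1/2})$ by the Lemma). The surviving piece requires sharpening the almost-sure convergence in \cref{lemma:m-vanish} into quantitative fluctuation bounds $\chi_i=-y_i+O_{\Prob}(n^{-1})$ and $\phi_1^{(i)}=\tfrac{L}{m}(Ky)_i+O_{\Prob}(n^{-1/2})$, via second-moment estimates.

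\textbf{Step 3 (from coefficients to the argmin).} First, optimality of $\eta_n^{(1)}$ together with $\eta_\infty^{(1)}\in I$ give $\loss_n^{(1)}(\eta_n^{(1)})\le\loss_n^{(1)}(\eta_\infty^{(1)})$, so
\[
c_{2,\infty}\big(\eta_n^{(1)}-\eta_\infty^{(1)}\big)^2=\loss_\infty(\eta_n^{(1)})-\loss_\infty(\eta_\infty^{(1)})\le 2\sup_{\eta\in I}|R_n(\eta)|=O_{\Prob}(n^{-1/2}),
\]
which already yields consistency $\eta_n^{(1)}\xrightarrow{\Prob}\eta_\infty^{(1)}$. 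To reach the sharp rate, assume $\eta_\infty^{(1)}\in\mathrm{int}(I)$ (the endpoint case is analogous via a one-sided condition); then for $n$ large $\eta_n^{(1)}$ is interior, so $(\loss_n^{(1)})'(\eta_n^{(1)})=0$. As $\loss_\infty$ is exactly quadratic, $\loss_\infty'(\eta)=2c_{2,\infty}(\eta-\eta_\infty^{(1)})$, hence $2c_{2,\infty}(\eta_n^{(1)}-\eta_\infty^{(1)})=-R_n'(\eta_n^{(1)})$ and $|\eta_n^{(1)}-\eta_\infty^{(1)}|\le (2c_{2,\infty})^{-1}\sup_{\eta\in I}|R_n'(\eta)|=O_{\Prob}(n^{-1/2})$.

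\textbf{Main obstacle.} The crux is the quantitative concentration of the non-vanishing coefficients in Step 2, i.e. upgrading the SLLN limit of $\phi_1$ to an $O_{\Prob}(n^{-1/2})$ fluctuation bound. This is a variance computation for the multi-sample analogue of $\phi_1=(-\chi)\sum_\ell\|b_{\ell+1}\|^2\|a_{\ell-1}\|^2$, namely a sum of products of backward-signal norms $\|b_{\ell+1}\|^2$ and forward-signal inner products $\langle a_{\ell-1,i},a_{\ell-1,j}\rangle$. The difficulty is that these factors are correlated through the shared weight matrices, so the variance bound cannot rely on a plain i.i.d. CLT and instead needs a layerwise conditioning argument (or moment bounds for the relevant Gaussian chaos) to control the cross-layer covariances.
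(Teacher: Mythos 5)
Your proposal is correct and follows essentially the same route as the paper's proof: the same quadratic-plus-remainder decomposition with uniform $O_{\mathbb P}(n^{-1/2})$ control of $\loss_n^{(1)}$ and of its $\eta$-derivative on the compact $I$ (the paper's \cref{lem:decomp} and \cref{lem:uniform+sc}), followed by the same transfer from derivative error to argmin error via the first-order condition — your use of the exact quadraticity of $\loss_\infty$ is interchangeable with the paper's uniform lower bound on $\loss_n''$ plus the mean-value theorem. The concentration step you single out as the main obstacle is precisely the paper's \cref{lem:baseline}, and it is proved there by exactly the layerwise conditioning recursion you sketch: conditionally on $W_{\ell-2},\dots,W_0$, the inner product $\langle a_{\ell-1,i},a_{\ell-1,j}\rangle$ is a sum of i.i.d.\ coordinates, yielding elementwise $H_n=\tfrac{L}{m}K+O_{\mathbb L_2}(n^{-1/2})$ together with $\chi=-y+o_{\mathbb L_2}(1)$, so your plan closes as written.
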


\cref{thm:m-argmin} shows convergence of the optimal LR to a deterministic limit $\eta_\infty^{(1)} > 0$, thus proving learning rate transfer at $t=1$. The convergence rate is $\bigO(n^{-1/2})$ which is expected with large-width asymptotics. The compact interval $I$ can be arbitrarily large as long as it contains $\eta_\infty^{(1)}$. The proof is provided in \cref{app:proofs} and is based on several technical lemmas used to control large-width deviations.

To verify LR transfer empirically, we trained a three layers linear MLP parametrized with $\mu$P with varying widths $n \in \{2^k, k=7,\dots, 13\}$ with GD. Training data consists of synthetically generated data $y = w^\top x + \epsilon$ where $x \sim \mathcal N(0,I_d)$ and $w \sim \mathcal N(0,d^{-1} I_d)$ ($d=1$), and $\epsilon \sim \mathcal N(0,0.01)$. We use $N=1000$ samples for training (see \cref{sec:exps} for more details about experimental setup). \cref{fig:one_step} (top left) shows optimal learning rate with $\mu$P as a function of width. Convergence analysis is displayed in the bottom left figure. We observe convergence of the optimal LR $\eta^{(1)}_n$ to the theoretical value $\eta^{(1)}_\infty$ as $n$ grows which confirms the theoretical findings. Interestingly, the empirical convergence rate seems to match the theoretical prediction of $n^{-1/2}$ up to width $n=1024$ then becomes faster for larger widths. This indicates that our upperbound $\bigO(n^{-1/2})$ is likely not tight for large widths and we currently do not have an explanation for this sudden change in convergence rate.\footnote{Note that LR transfer is most usefull when convergence is fast.}

\subsection{Failure of LR Transfer under SP/NTP}
With standard parametrization, the only difference with $\mu$P lies in how the projection layer weight $V$ is initialized: $V\sim\mathcal N(0,n^{-1})$ for SP, instead $n^{-2}$ variance with $\mu$P. Other weights are initialized as
$W_0\sim\mathcal N(0,d^{-1})$ and $W_\ell\sim\mathcal N(0,n^{-1})$ for $\ell=1,\dots,L$, and the learning rate is a constant $\eta$ that is not parametrized with width. Note that this is only true for GD (and SGD). For Adam \citep{kingma2017adammethodstochasticoptimization}, SP and $\mu$P also differ in the learning rate exponent ($c=1$ for $\mu$P and $c=0$ for SP). 

The next result shows that optimal learning rate with SP converges to $0$ as width grows, suggesting that LR transfer cannot occur under this parametrization.
\begin{thm}[No LR transfer under SP]\label{thm:std-noHP}
Let $\bar \eta>0$ be an arbitrary constant, and $\eta_n^{(1)}\in\arg\min_{\eta \in [0,\bar \eta]} \loss_n^{(1)}(\eta)$ for the one-step loss, and assume $Ky\neq 0$. Then  $\eta_n^{(1)} \xrightarrow{\mathbb P} 0$ as $n\to \infty$.
\end{thm}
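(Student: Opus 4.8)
The plan is to reuse the polynomial representation from the $\mu$P analysis and simply track how the single change $V\sim\normal(0,n^{-1})$ rescales every coefficient. Writing $f^{(1)}(x_i)=f^{(0)}(x_i)+p_i(\eta)$ with $p_i(\eta):=\sum_{k=1}^L\phi_{k,i}\eta^k$ and expanding the square, the loss is a degree-$2L$ polynomial $\loss_n^{(1)}(\eta)=\sum_{k=0}^{2L}c_k\eta^k$, and I would control its coefficients by concentration. The decisive contrast with \cref{lemma:m-vanish} is that under SP one has $\|V\|^2=\Theta(1)$ instead of $\Theta(n^{-1})$, hence $\|b_\ell\|^2=\Theta(1)$ for every $\ell$ and $\chi_i=f^{(0)}(x_i)-y_i=\Theta(1)$. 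Combined with the forward-kernel estimate $\langle a_{\ell,i},a_{\ell,j}\rangle=nK_{ij}+\Theta_{\mathbb P}(\sqrt n)$ and the forward--backward estimate $\langle a_{\ell,i},b_{\ell+1}\rangle=\Theta_{\mathbb P}(1)$, this forces \emph{every} coefficient $\phi_{k,i}$ to be $\Theta(n)$, rather than vanishing as under $\mu$P. The heuristic is then immediate: rescaling $\eta=s/n$ gives $\phi_{k,i}\eta^k=\Theta(n^{1-k}s^k)$, so all terms with $k\ge2$ die and the one-step loss collapses to a strictly convex quadratic in $s$ whose minimizer $s^\star>0$ yields an optimal learning rate of order $s^\star/n\to0$. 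The theorem amounts to upgrading this heuristic to control of the \emph{global} minimizer over $[0,\bar\eta]$.

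Concretely, I would first establish the scalings $c_0=\loss_n^{(0)}=O_{\mathbb P}(1)$, $c_1=-\tfrac{L}{m^2}\,n\,\chi^\top K\chi\,(1+o_{\mathbb P}(1))<0$, and $c_k=\Theta_{\mathbb P}(n^2)$ for $2\le k\le 2L$, the leading $\eta^2$ coefficient being $c_2=\tfrac1{2m}\sum_i\phi_{1,i}^2=\tfrac{L^2}{2m^3}\,n^2\|K\chi\|^2(1+o_{\mathbb P}(1))>0$; here the needed non-degeneracy $K\chi\neq0$ holds almost surely because $\chi=f^{(0)}-y$ is a non-degenerate continuous random vector and $\{K\chi=0\}$ is a null event. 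Setting $g_n(s):=\loss_n^{(1)}(s/n)$, these scalings give $g_n(s)\to g_\infty(s)=c_0^\infty+\bar c_1 s+\bar c_2 s^2$ locally uniformly in probability, with $\bar c_2>0$ and minimizer $s^\star=-\bar c_1/(2\bar c_2)=\tfrac{m}{L}\,\chi^\top K\chi/\|K\chi\|^2>0$ (reassuringly the exact SP analogue of $\eta_\infty^{(1)}$). Since $0\in[0,\bar\eta]$, the global minimum value is already bounded, $\min_{\eta\in[0,\bar\eta]}\loss_n^{(1)}\le\loss_n^{(1)}(0)=O_{\mathbb P}(1)$, while the rescaling pins the candidate minimizer at $s^\star/n\to0$.

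It then remains to rule out a competing minimizer at scale $\eta=\Theta(1)$, for which I would lower bound $\loss_n^{(1)}$ on any fixed $[\delta,\bar\eta]$. Normalizing, $n^{-2}\loss_n^{(1)}(\eta)\to\mathcal P(\eta):=\tfrac1{2m}\sum_i P_i(\eta)^2$, where $P_i(\eta)=\lim_n n^{-1}p_i(\eta)$. The key structural observation is that the only $i$-dependence in the leading $\Theta(n)$ part of each $\phi_{k,i}$ enters through the final contraction $\langle a_{\ell-1,j_1},a_{\ell-1,i}\rangle\to nK_{j_1 i}$, so $P_i(\eta)=(K\chi)_i\,\rho(\eta)$ for a \emph{common} scalar polynomial $\rho$; hence $\mathcal P(\eta)=\tfrac{\|K\chi\|^2}{2m}\,\rho(\eta)^2$ is a perfect square. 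Away from the finitely many positive roots of $\rho$ we get $\loss_n^{(1)}(\eta)\sim n^2\mathcal P(\eta)\to\infty$, which disposes of those $\eta$. \textbf{The main obstacle is precisely the roots of $\rho$:} at such an $\eta_0$ the $n^2$ profile vanishes ($P_i(\eta_0)=0$ for all $i$) and one must descend to the next scale, where the residual is $\chi_i+p_i(\eta_0)=\chi_i+\Theta_{\mathbb P}(\sqrt n)$. The hard part will be showing that this $\Theta(\sqrt n)$ fluctuation vector cannot be simultaneously cancelled across all $i$: minimizing over a shrinking neighborhood of $\eta_0$, the residual norm is bounded below by the norm of the fluctuation's component orthogonal to the rank-one direction $(K\chi)_i$, which I expect to be $\Theta_{\mathbb P}(\sqrt n)$ by a non-degenerate (anti-concentration) CLT for the forward--backward contractions. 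This gives $\min_{\text{near }\eta_0}\loss_n^{(1)}=\Theta_{\mathbb P}(n)\to\infty$, and this two-scale estimate near the roots is the technically delicate step.

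Finally I would assemble the pieces. Combining the $n^2$ bound off the roots with the $n$ bound at the roots yields $\min_{\eta\in[\delta,\bar\eta]}\loss_n^{(1)}(\eta)\xrightarrow{\mathbb P}\infty$ for every fixed $\delta\in(0,\bar\eta]$, whereas the global minimum value is $O_{\mathbb P}(1)$. Hence with probability tending to one no point of $[\delta,\bar\eta]$ can be a global minimizer, so $\eta_n^{(1)}\in[0,\delta)$ eventually; since $\delta>0$ is arbitrary, $\eta_n^{(1)}\xrightarrow{\mathbb P}0$. Throughout, the only genuinely new analytic input beyond the $\mu$P computation is the fluctuation/anti-concentration control at the roots of $\rho$; all coefficient asymptotics follow from the same concentration lemmas used to prove \cref{lemma:m-vanish}, now applied with $\|V\|^2=\Theta(1)$.
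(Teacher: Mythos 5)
Your proposal departs from the paper's proof at the very first quantitative step, and on that step you are right and the paper's appendix is not. The paper asserts that under SP all coefficients satisfy $\|\phi_\ell\|_{L_2}=O(\sqrt n)$ (``normalize $V$ by $\sqrt n$ and we're back to $\mu$P'') and then applies \cref{lemma:sp_convergence} with the rescaling $\eta=\beta/\sqrt n$, whose hypotheses require $\E[a_1^2]=O(n)$ and $a_1/\sqrt n\to \bar b_1\neq 0$ deterministic. But the $\eta^\ell$ coefficient is of degree $\ell+1$ in $V$ (one factor from the readout $V^\top$, one from each $b_{\ell_j+1}$), not degree one, so the substitution $V\mapsto \sqrt n\,V$ multiplies it by $n^{(\ell+1)/2}$ and yields $\phi_\ell=\Theta_{L_2}(n)$ for every $\ell$ --- exactly your claim. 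The case $L=1$, $m=1$ settles this unambiguously: $f^{(1)}(x)=f^{(0)}(x)-\eta\,\chi\,\|V\|^2\|W_0x\|^2$ with $\|V\|^2\|W_0x\|^2=\Theta(n)$ under SP, so $\E[\phi_1^2]=\Theta(n^2)$, and moreover the limit of $\phi_1/n$ is random (through $\chi$, which converges only weakly to a nondegenerate Gaussian), not a deterministic constant; both hypotheses of \cref{lemma:sp_convergence} fail. Your rescaling $\eta=s/n$, the limiting quadratic with random minimizer $s^\star=\tfrac m L\,\chi^\top K\chi/\|K\chi\|^2$, and the implied $n^{-1}$ rate are also consistent with the paper's own empirical remark that the SP rate looks ``closer to $n^{-1}$'' than $n^{-1/2}$. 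So your local analysis is a correction of the paper's argument, not a paraphrase of it, and your rank-one identification $P_i(\eta)=(K\chi)_i\,\rho(\eta)$ of the leading part is correct (the $i$-dependence enters only through the single $\Theta(n)$ contraction $\langle a_{\ell_1-1,i_1},a_{\ell_1-1,i}\rangle$).

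The step you yourself flag as delicate is, however, a genuine gap, and for small $m$ it cannot be repaired as stated. Near a positive root $\eta_0$ of $\rho$, your lower bound is the norm of the $\Theta_{\mathbb P}(\sqrt n)$ fluctuation component orthogonal to the rank-one direction $K\chi$. For $m=1$ that orthogonal complement is $\{0\}$: the residual is a single scalar, and since $p_1(\eta)$ is a continuous polynomial whose leading part sweeps an interval of length $\Theta(\sqrt n)$ across any window of width $n^{-1/2}$ around $\eta_0$, the intermediate value theorem produces an $\eta$ near $\eta_0$ at which $\chi+p_1(\eta)=0$ exactly and the one-step loss is \emph{zero}. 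Hence, whenever $\rho$ has a root in $(0,\bar\eta)$ with positive probability --- which for $L\ge 2$ you cannot exclude a priori, since the higher coefficients of $\rho$ are zero-mean $\Theta(1)$ fluctuations --- the argmin over $[0,\bar\eta]$ contains points bounded away from zero with positive probability, and no anti-concentration estimate restores your $\Theta_{\mathbb P}(n)$ lower bound; you would need to additionally prove that $\rho$ has no positive real roots with high probability, or restrict to $m\ge 2$ with a nondegenerate joint CLT for the contractions $a^\top J b$, an input that is genuinely new relative to the paper's toolkit (\cref{lemma:tri_concentration} controls second moments only, not nondegeneracy of the limit law). To be fair, the paper's own proof never confronts this global issue either: the proof of \cref{lemma:sp_convergence} restricts to $\beta\in[0,K]$, i.e.\ $\eta=O(n^{-1/2})$, and the phrase ``for $K$ large enough (so that the global minimizer is covered)'' assumes precisely what must be shown over the full interval $[0,\bar\eta]$. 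In short: your scalings and the $\eta=s/n$ local analysis are right and sharper than the paper's, but the two-scale global lower bound is an open hole --- merely unproven for $m\ge 2$, and false as formulated for $m=1$.
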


Intuitively, because of the $n^{-1}$ variance in $V$ initialization, all coefficients are amplified by a factor \(\sqrt{n}\) compared to $\mu$P, so the optimal one-step LR compensates for that growth. The proof of \cref{thm:std-noHP} is provided in \cref{app:proofs}.

With NTP \citep{jacot2018neural}, the opposite occurs. To see this, recall that NTP involves multipliers in front of the weights. Specifically, we take \(\widetilde W_\ell,\widetilde V\) with i.i.d.\ $\mathcal N(0,1)$ entries and define
\[
W_0 = \frac{1}{\sqrt{d}} \widetilde W_0, \quad W_\ell=\frac{1}{\sqrt n}\,\widetilde W_\ell, \quad
V=\frac{1}{\sqrt n}\,\widetilde V .
\]
This is distributionally identical to $W_\ell\sim\mathcal N(0,n^{-1})$ and $V\sim\mathcal N(0,n^{-1})$. However, the ``effective'' learning rate is now scaled by the $n^{-1/2}$ factor in front of the weights, which leads to a kernel regime in the limit (no feature learning). Hence, optimal learning rate tends to compensate for this down-scaling by blowing-up with width.

\cref{fig:one_step} (right) shows the optimal LR as a function of width $n$ under SP. Unlike with $\mu$P, the optimal LR $\eta^{(1)}_n$ does not exhibit convergence to a non-zero constant, but rather shifts significantly with width, converging to zero. Therefore, LR transfer does not occur with SP. The bottom right figure shows the empirical convergence rate which seems to be faster than $n^{-1/2}$ and closer to $n^{-1}$.

\section{Learning Rate Transfer at any Step}\label{sec:general_t}
We generalize the results from the previous section and prove LR transfer for general gradient step $t$ under mild conditions. The proof relies on the fact that for any step $t$ and input $x$, model output $f^{(t)}(x)$ can be expressed as a polynomial function in $\eta$, similar to the previous section, although with coefficients that depend on initialization in a more complex way. By studying the behavior of this polynomial for $\eta$ small/large enough, we show that optimal $\eta$ converges almost surely to a non-zero deterministic constant under $\mu$P; hence proving LR transfer for general $t$.

\subsection{Understanding the difficulty at $t\geq 2$}
In the previous section, we showed that after one step the network output becomes asymptotically linear in $\eta$. This significantly simplified the asymptotic analysis of $\eta^{(1)}_n$ and allowed derivation of a closed-form expression for the limit $\eta^{(1)}_\infty$. For $t\geq 2$, such analysis is nontrivial since the linear asymptotics no longer hold. Indeed, for $t\geq 2$, higher-order monomials in $\eta$ are no longer negligible when $n$ is large. For instance, for $t=2$, we show that a coefficient of order $3L - 1$  in $f^{(2)}(x)$ converges to a non-zero constant as $n \to \infty$. Recall model output for a given input $x$
$$
f^{(2)}(x) = V^\top \left(\prod_{\ell = 1}^L W^{(2)}_\ell \right) W_0 x,
$$
where 
$$W^{(2)}_\ell = W^{(1)}_\ell - \eta m^{-1} \, \sum_{i=1}^m \chi_{i}^{(1)}\, b_{\ell+1}^{(1)} (a_{\ell - 1, i}^{(1)})^\top,$$
and, extending the notation from previous section, 
\begin{equation*}
\begin{cases}
b_\ell^{(t)} = (W_\ell^{(t)})^\top (W_{\ell+1}^{(t)})^\top \dots (W_L^{(t)})^\top \, V,\\
a_{\ell,i}^{(t)} = W_\ell^{(t)} W_{\ell-1}^{(t)} \dots W_1^{(t)} W_0 x_i,\\
\chi_{i}^{(t)} = f^{(t)}(x_i) - y_i.
\end{cases}
\end{equation*}

Unlike in the one-step analysis, model output at $t=2$ depends on the terms $b_\ell^{(1)}$, $a_\ell^{(1)}$, and $\chi^{(1)}$, which are all functions of the learning rate $\eta$. The leading monomial in $b_\ell^{(1)}$ is of degree $L-\ell + 1$ while in $a_\ell^{(1)}$ is of degree $\ell$. $\chi^{(1)}$ is a polynomial of degree $L$ in $\eta$. As a result, the leading monomial in $f^{(2)}(x)$ is of degree $L \times (1 + L + (L-\ell+1) + \ell) = 2L(L+1)$ in $\eta$. However, as in the analysis of the first step, the limiting polynomial as $n$ goes to infinity may not be of degree $2L(L+1)$. Expanding the product in $f^{(2)}(x)$ yields 
$$
f^{(2)}(x) = f^{(1)}(x) + \sum_{\ell=1}^L \phi_{\ell}(\eta) \eta^L,
$$
where $\phi_L(\eta) = (-1)^L V^\top \left(\prod_{\ell=1}^L  \gamma_\ell \right) W_0 x$, and 
$
\gamma_\ell = m^{-1} \, \sum_{i=1}^m \chi_{i}^{(1)}\, b_{\ell+1}^{(1)} (a_{\ell - 1, i}^{(1)})^\top
$.

Note that we emphasized the dependence of $\phi_L$ on learning rate $\eta$ in the notation. In the next result, we show that $\phi_L(\eta)$ converges to a non-zero constant as width goes to infinity, which is different from what we saw in the one-step loss.

\begin{lemma}[Non-linear asymptotics at $t=2$]\label{lemma:non_zero_coef_step2}
The limit of the coefficient $\phi_L(\eta)$ can be expressed as 
$$
\lim_{n\to \infty} \phi_{L}(\eta) = (-m)^L \sum_{i= 1}^m \gamma_{i} \frac{\langle x_{i}, x\rangle}{d},
$$
where, 
\begin{equation*}
\begin{cases}
    \gamma_i = \sum_{1\leq i_2 ,\dots, i_{L} \leq m} \zeta_{i, i_2, \dots, i_{L}},\\
    \zeta_{i_1, i_2, \dots, i_L} = \left(\prod_{j=1}^L \left(f_\infty^{(1)}(x_{i_j}) - y_{i_j}\right)\right) \left(\prod_{j=2}^L f_\infty^{(1)}(x_{i_j}) \right),
\end{cases}
\end{equation*}
with $f^{(1)}_\infty(x) = \eta \, \frac L m \sum_{i=1}^m y_i \frac{\langle x_i, x \rangle}{d}$.
\end{lemma}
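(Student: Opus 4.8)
The plan is to expand the matrix product $\prod_{\ell=1}^{L}\gamma_\ell$ explicitly, exploiting that each $\gamma_\ell=m^{-1}\sum_{i=1}^m\chi_i^{(1)}\,b_{\ell+1}^{(1)}(a_{\ell-1,i}^{(1)})^\top$ is a sum of rank-one outer products. When the $L$ factors are multiplied, adjacent blocks contract through the scalar inner products $(a_{\ell-1,i_\ell}^{(1)})^\top b_\ell^{(1)}$, and the whole expression collapses to a finite sum over data indices $i_1,\dots,i_L\in[m]$. Sandwiching between $V^\top$ on the left and $W_0x$ on the right, I would first record the exact (purely algebraic) expansion
\[
\phi_L(\eta)=(-1)^L m^{-L}\!\!\sum_{i_1,\dots,i_L=1}^{m}\Bigl(\prod_{j=1}^{L}\chi_{i_j}^{(1)}\Bigr)\bigl(V^\top b_{L+1}^{(1)}\bigr)\Bigl(\prod_{\ell=2}^{L}(a_{\ell-1,i_\ell}^{(1)})^\top b_\ell^{(1)}\Bigr)\bigl((a_{0,i_1}^{(1)})^\top W_0x\bigr),
\]
where the $L-1$ internal contractions range over $\ell=2,\dots,L$ and the two boundary contractions sit at the ends.

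The key simplifying identity is that, for \emph{every} layer $\ell$, one has $(a_{\ell-1,i}^{(1)})^\top b_\ell^{(1)}=f^{(1)}(x_i)$: the chain $W_{\ell-1}^{(1)}\cdots W_1^{(1)}W_0$ hidden in $a_{\ell-1,i}^{(1)}$ recombines with the chain $(W_\ell^{(1)})^\top\cdots(W_L^{(1)})^\top V$ in $b_\ell^{(1)}$ to reconstruct the full forward map, independently of $\ell$. Consequently the internal contractions produce $\prod_{j=2}^{L}f^{(1)}(x_{i_j})$ and the residual factors give $\prod_{j=1}^{L}\bigl(f^{(1)}(x_{i_j})-y_{i_j}\bigr)$ — exactly the two products appearing in $\zeta$ once I invoke the one-step limit $f^{(1)}(x_i)\to f_\infty^{(1)}(x_i)$, hence $\chi_i^{(1)}\to f_\infty^{(1)}(x_i)-y_i$, which follows from \cref{lemma:m-vanish} (convergence of $\phi_1$ together with vanishing of the higher-order coefficients).

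The boundary terms require the most care and are the main obstacle. Since $b_{L+1}^{(1)}=V$, the left contraction is $V^\top b_{L+1}^{(1)}=\|V\|^2$, which under $\mu$P is $\Theta(n^{-1})$, while the right contraction is $(a_{0,i_1}^{(1)})^\top W_0x=x_{i_1}^\top W_0^\top W_0\,x$, which is $\Theta(n)$. Neither converges on its own, so I must treat their \emph{product} jointly: writing $V=n^{-1}g$ with $g$ standard Gaussian, the strong law gives $n\|V\|^2\to1$ almost surely, and independently $n^{-1}W_0^\top W_0\to d^{-1}I_d$ almost surely, so the diverging and vanishing $n$-factors cancel and the product converges almost surely to $\langle x_{i_1},x\rangle/d$. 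This is precisely where the $\mu$P exponent $\alpha_V=2$ is essential: any other scaling of $V$ would send this product to $0$ or $\infty$. The delicacy is that these two factors are correlated through the shared initialization and are individually divergent, so marginal limits cannot simply be multiplied.

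Finally I would pass to the limit inside the finite sum. Because $L$ and $m$ are fixed, the summand is a finite product of factors each converging (in probability for the $\chi^{(1)}$ and $f^{(1)}$ blocks, almost surely for the boundary block) to its stated deterministic limit; products of such convergent sequences converge by Slutsky / the continuous mapping theorem, so limit and the finite sum interchange. Collecting the sign $(-1)^L$ together with the $m^{-1}$ normalization carried by each $\gamma_\ell$, relabelling $i_1=i$, and folding the sum over $i_2,\dots,i_L$ into $\gamma_i$ then yields the stated expression for $\lim_{n\to\infty}\phi_L(\eta)$. Everything beyond the joint boundary limit is bookkeeping layered on top of \cref{lemma:m-vanish}.
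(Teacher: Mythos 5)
Your proposal is correct and is essentially the paper's argument made explicit: the paper's own proof consists of the single remark that the lemma is ``straightforward by taking the infinite-width limit,'' and your rank-one expansion of $\prod_\ell \gamma_\ell$, the key identity $(a_{\ell-1,i}^{(1)})^\top b_\ell^{(1)} = f^{(1)}(x_i)$ (independent of $\ell$), and the joint limit of the boundary pair $\|V\|^2\cdot x_{i_1}^\top W_0^\top W_0\,x \to \langle x_{i_1},x\rangle/d$ are exactly the omitted computation. Two small remarks: your prefactor $(-1)^L m^{-L}$ is in fact the correct one, so the $(-m)^L$ in the lemma statement appears to be a typo for $(-m)^{-L}$; and your caution about correlation between the two boundary factors is unnecessary, since $V$ and $W_0$ are independent at initialization and both are frozen during training --- though your pathwise almost-sure argument is valid regardless of any dependence.
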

\cref{lemma:non_zero_coef_step2} shows that $\phi_L(\eta)$ converges to a polynomial of degree $2L -1$ in $\eta$ as $n$ goes to infinity.\footnote{Note that here, we are implicitly assuming that $f^{(1)}_\infty(x_i) \neq y_i$ for all $i$, which is a realistic assumption since it is highly unlikely to interpolate the data after one gradient step.} Adding the $\eta^L$ term in $f^{(2)}(x)$, we obtain that $f^{(2)}(x)$ converges to a polynomial that has a non-zero term of order $3L -1$. Therefore, in contrast to step $1$, step 2 involves more complex dependencies in $\eta$, and a full characterization of the minimum is highly non-trivial in this case. This complexity should be expected to ``increase'' with step $t$ as gradient dependencies on $\eta$ become more complex with $t$. 

However, under an additional mild condition, we show that optimal LR converges to a non-zero constant for any step $t$, proving LR transfer for general $t$. Similar to the previous section, let $ K = \left(d^{-1} \langle x_i, x_j \rangle\right)_{1\leq i,j\leq m}$ be the input Gram matrix and $y = (y_1, y_2, \dots, y_m)^\top \in \reals^m$ be the vector containing all inputs from the training dataset.

\begin{figure*}
    \centering
    \includegraphics[width=0.3\linewidth]{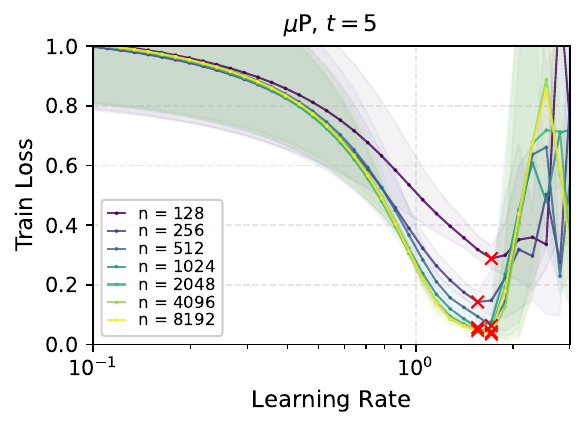}
    \includegraphics[width=0.3\linewidth]{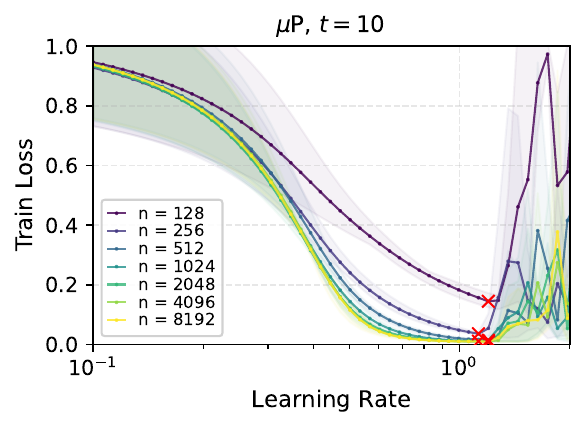}
    \includegraphics[width=0.3\linewidth]{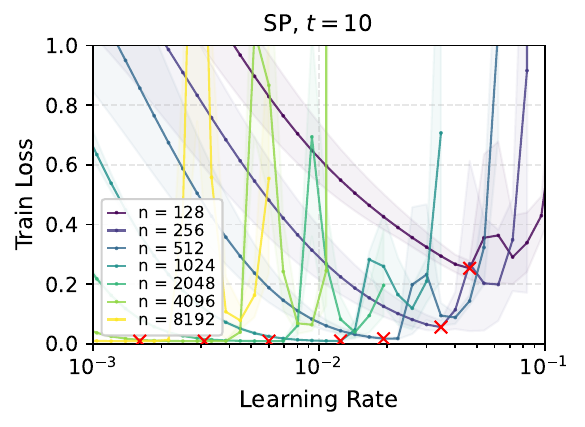}
    \caption{Train loss as function of LR at $t=5$ and $t=10$ for both $\mu$P and SP. Results are shown with 3 random seeds }
    \label{fig:linear_mlp_transfer_t}
\end{figure*}

\begin{thm}[LR transfer at step $t$]\label{thm:transfer_general_t} 
Assume that $ K y \neq 0$. Then the following holds:

\begin{enumerate}
    \item Given a fixed input $x$, the $t$-step model output $f^{(t)}(x)$ can be expressed as a polynomial function in $\eta$ where the coefficients depend only on initialization. As $n\to \infty$, all the coefficients converge almost surely to deterministic constants. We denote the limiting polynomial by $f_{\infty}^{(t)}$.
    \item The $t$-step loss $\loss^{(t)}_n(\eta)$ converges almost surely to $\loss^{(t)}_\infty(\eta) = \frac{1}{2m} \sum_{i=1}^m (f^{(t)}_{\infty}(\eta) - y_i)^2$ uniformly over $\eta$ on any compact set. Moreover, there exists \underbar{$\eta$}$, \bar{\eta}>0$ 
    such that $\operatorname{argmin}_{\eta \in [0,\infty)} \loss_\infty^{(t)} \subset [$\underbar{$\eta$}$, \bar{\eta}]$.
    
    \item Assume that $\loss^{(t)}_\infty$ has a unique minimizer $\eta^{(t)}_\infty$, let $I$ be an arbitrary compact set containing $\eta^{(t)}_\infty$, and let $\eta^{(t)}_n \in \operatorname{argmin}_{\eta \in I} \loss_n^{(t)}$.  Then, as $n\to \infty$, $$\eta^{(t)}_n \to \eta^{(t)}_\infty, \quad a.s.$$
\end{enumerate}

\end{thm}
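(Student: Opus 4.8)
The plan is to prove the three claims in order, with essentially all the difficulty concentrated in the first. For part 1, I would argue by induction on $t$ that every object propagated by \cref{eq:gd} — the matrices $W_\ell^{(t)}$, the forward/backward vectors $a_{\ell,i}^{(t)},b_\ell^{(t)}$, the residuals $\chi_i^{(t)}$, and hence $f^{(t)}(x)$ — is a polynomial in $\eta$ whose degree is bounded by a constant depending only on $t$ and $L$ (never on $n$), and whose coefficients are measurable functions of the initialization alone. The inductive step is immediate from the update rule: substituting the step-$t$ polynomial expressions into $W_\ell^{(t+1)}=W_\ell^{(t)}-\tfrac{\eta}{m}\sum_i\chi_i^{(t)}\,b_{\ell+1}^{(t)}\otimes a_{\ell-1,i}^{(t)}$ and collecting powers of $\eta$ writes each coefficient at step $t+1$ as a fixed multilinear combination of step-$t$ coefficients. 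This mirrors exactly the $t=1$ expansion preceding \cref{lemma:m-vanish} and the $t=2$ expansion preceding \cref{lemma:non_zero_coef_step2}.

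The substantive content of part 1 is the almost-sure convergence of these coefficients to deterministic constants. Since each coefficient of $f^{(t)}(x)$ is a scalar contraction of the form $V^\top(\cdots)W_0x$ built from the $\mu$P-scaled initialization, I would reduce everything to a master concentration lemma for the normalized inner products and matrix contractions that appear, such as $n^{-1}\langle a_{\ell,i},a_{\ell',j}\rangle$, $n\,\langle b_\ell,b_{\ell'}\rangle$, and their generalizations through products $J_{p_2:p_1}$. Under $\mu$P ($W_\ell\sim\normal(0,n^{-1})$, $V\sim\normal(0,n^{-2})$) these quantities have deterministic means and variances of order $n^{-1}$, so $\mathbb L_2$ convergence follows as in \cref{lemma:m-vanish}; upgrading to almost-sure convergence uses that they are low-degree Gaussian polynomials, for which hypercontractivity yields summable deviation probabilities and hence Borel--Cantelli applies. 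The main obstacle is that at steps $t\geq 1$ the vectors $a^{(t)},b^{(t)}$ are themselves functions of the initialization and are correlated across layers, so a naive law of large numbers does not apply; I expect to handle this by a Gaussian-conditioning argument in the spirit of Tensor Programs, peeling off one layer at a time and tracking which contractions survive versus vanish in the limit. This is where the bulk of the appendix lemmas are needed.

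Parts 2 and 3 are comparatively routine. For part 2, once the finitely many, fixed-degree coefficients converge almost surely, $f_n^{(t)}(\eta)\to f_\infty^{(t)}(\eta)$ as polynomials with converging coefficients, hence uniformly on every compact set; squaring, summing and averaging preserves uniform convergence, giving $\loss_n^{(t)}\to\loss_\infty^{(t)}$ uniformly on compacts almost surely. To bound the minimizers away from $0$, a short chain-rule computation gives the exact identity $(\loss_n^{(t)})'(0)=-t\,\|\nabla_W\loss_n^{(0)}\|^2$, whose limit is a positive multiple of $-t\,y^\top Ky<0$ precisely because $Ky\neq 0$ (this is the same quantity governing the $\phi_1$ limit in \cref{lemma:m-vanish}); hence no minimizer of $\loss_\infty^{(t)}$ lies at $0$, producing $\underbar{$\eta$}>0$. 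At the other end, some $f_\infty^{(t)}(x_i;\cdot)$ is a non-constant polynomial in $\eta$, since its linear coefficient is a nonzero multiple of $(Ky)_i$; the corresponding squared residual then diverges and $\loss_\infty^{(t)}(\eta)\to\infty$ (coercivity), producing $\bar{\eta}$. Thus $\operatorname{argmin}_{\eta\in[0,\infty)}\loss_\infty^{(t)}\subset[\underbar{$\eta$},\bar{\eta}]$.

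Finally, part 3 is the standard argmin-consistency lemma. Given almost-sure uniform convergence of $\loss_n^{(t)}$ to $\loss_\infty^{(t)}$ on the compact set $I$ together with a unique minimizer $\eta_\infty^{(t)}$, any subsequential limit $\eta^\ast$ of $\eta_n^{(t)}$ satisfies $\loss_\infty^{(t)}(\eta^\ast)\leq\loss_\infty^{(t)}(\eta_\infty^{(t)})$ by passing the near-optimality of $\eta_n^{(t)}$ through the uniform limit, so uniqueness forces $\eta^\ast=\eta_\infty^{(t)}$. Since $I$ is compact the sequence is precompact and every limit point equals $\eta_\infty^{(t)}$, whence $\eta_n^{(t)}\to\eta_\infty^{(t)}$ almost surely.
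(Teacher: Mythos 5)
Your proposal is correct, and its overall architecture coincides with the paper's: polynomial-in-$\eta$ structure by induction on the GD recursion, almost-sure coefficient convergence, uniform convergence on compacts, a strictly negative derivative of $\loss_\infty^{(t)}$ at $\eta=0$ to produce $\underline{\eta}$, coercivity to produce $\bar\eta$, and a deterministic argmin-stability lemma (the paper's \cref{thm:stability_argmin}) for part 3. You differ in two places, both worth noting. First, on the crux of part 1 the paper does not prove coefficient convergence from scratch: it observes that each coefficient is a scalar in a Tensor Program and invokes the Master Theorem of \citet{yang2021tensor} (via \cref{thm:convergence_step_t}), whereas you sketch a self-contained route through hypercontractivity of fixed-degree Gaussian polynomials, Borel--Cantelli, and layer-peeling Gaussian conditioning. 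That route is feasible here precisely because the network is linear (every coefficient is a polynomial of $n$-independent degree in the Gaussian initialization entries), but it amounts to re-deriving the relevant piece of the Master Theorem, and your plan leaves it as a sketch exactly where the paper leans on the citation; what the paper's choice buys is a short proof, what yours would buy is self-containedness and explicit control of which contractions survive. Second, your exact chain-rule identity $(\loss_n^{(t)})'(0)=-t\,\|\nabla_W\loss_n^{(0)}\|^2$ is a slicker derivation than the paper's \cref{lemma:derivative_at_zero}, which expands each $\chi^{(s)}, a^{(s)}, b^{(s)}$ to first order in $\eta$ to extract the same factor $t$; both yield the limit $-\tfrac{tL}{m^2}\,y^\top K y<0$, using that $K\succeq 0$ so $Ky\neq 0$ forces $y^\top Ky>0$. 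Your coercivity argument is in fact slightly more careful than the paper's: the paper asserts that the leading monomial of $\loss_\infty^{(t)}$ has a positive coefficient, which presupposes that the limit is non-constant in $\eta$, while you justify non-constancy of some $f_\infty^{(t)}(x_i;\cdot)$ from its linear coefficient $\tfrac{tL}{m}(Ky)_i\neq 0$ — closing a small gap the paper glosses over.
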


The proof of \cref{thm:transfer_general_t} is provided in \cref{app:proofs_general_t}. The following sketch summarizes the proof machinery: the fact that $f^{(t)}(x)$ is a polynomial in $\eta$ is straightforward. The convergence of the coefficients to deterministic limit follows from the ``Master Theorem'' in \cite{yang2021tensor}. This convergence implies that $\loss_\infty^{(t)}$ is a polynomial with the leading monomial having a positive coefficient (quadratic loss). Therefore, the minimizer $\eta^{(t)}_\infty$ of $\loss_\infty^{(t)}$ is finite which yields a probabilistic bound on $\eta^{(t)}_n$ for $n$ large enough. We further show that the derivative of $\loss_{n}^{(t)}(\eta)$ at $\eta = 0$ converges to a negative real number which bounds the minimizer (in $\eta$) away from $0$. We conclude by observing that bounded roots of a converging sequence of polynomials converge to the roots of the limiting polynomial. Note that we show almost sure convergence, a much stronger convergence than convergence in probability or in $\mathbb L_2$ (almost sure convergence yields $\mathbb L_2$ convergence by Dominated Convergence Theorem). This stems from using almost sure convergence of scalar quantities from the Tensor Programs framework.

\cref{thm:transfer_general_t} shows that under the mild assumption that the limiting loss has a unique minimizer, LR transfer occurs under $\mu$P. This assumption is realistic as it is commonly observed in practice that training loss has a unique minimizer at any training step $t$.

\cref{fig:linear_mlp_transfer_t} shows the same results of \cref{fig:one_step} at different training steps. With $\mu$P, we observe that optimal LR $\eta^{(1)}_n$ converges as width $n$ grows for different training steps $t \in \{5, 10\}$, confirming the result of \cref{thm:convergence_step_t}. Note that we consider small number of steps here because training converges after 10 to 15 iterations since the dataset is relatively simple (linear) and we use full batch GD. With SP, we observe a similar pattern to the one-step analysis; the optimal LR vanishes with width, and therefore optimal LR doesn't transfer with width in this case.

In the next section, we provide additional experiments with more challenging setups, including non-linear synthetic data, networks with ReLU activation function, varying depth, and varying optimizers.

\section{Additional Experiments}\label{sec:exps}
We provide additional experiments to assess learning transfer with $\mu$P under several setups that are not necessarily covered by our theory. Our results shed light on the impact of the following factors: non-linearity (ReLU), network depth, training step, and optimizer.

\begin{figure*}
    \centering
    \includegraphics[width=0.3\linewidth]{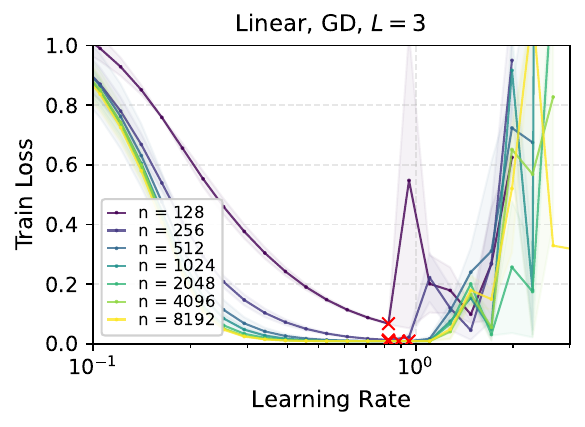}
    \includegraphics[width=0.3\linewidth]{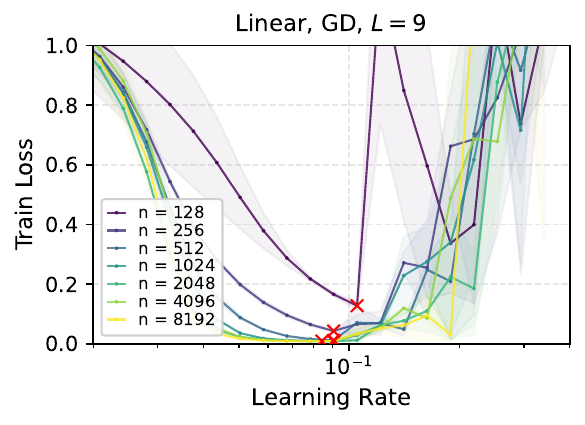}
    \includegraphics[width=0.3\linewidth]{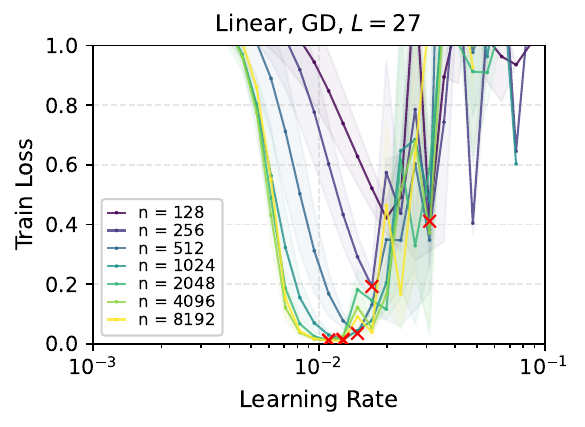}
    \includegraphics[width=0.3\linewidth]{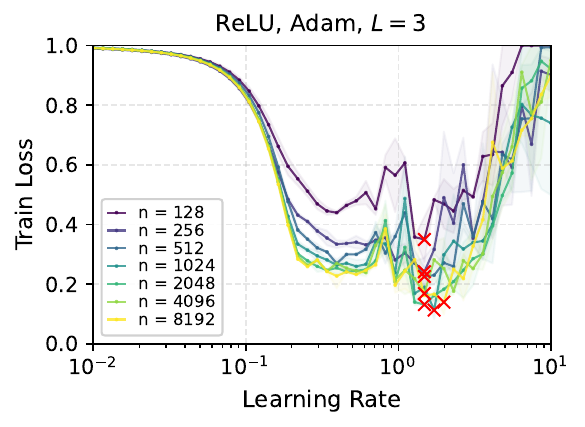}
    \includegraphics[width=0.3\linewidth]{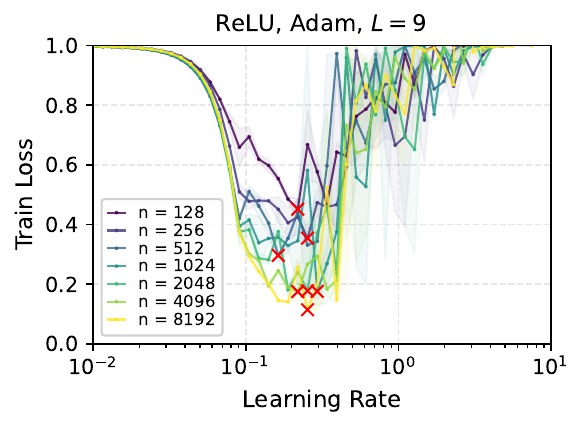}
    \includegraphics[width=0.3\linewidth]{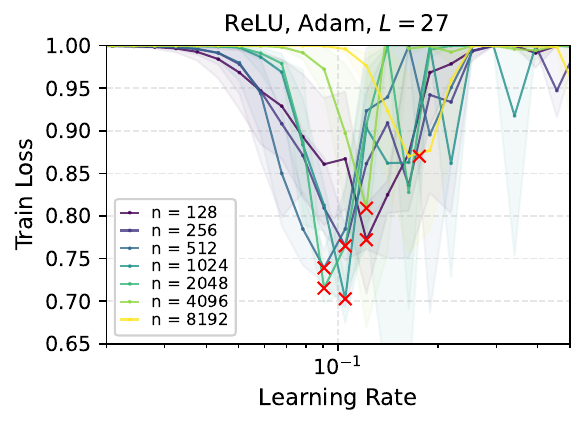}
    \caption{Train loss as a function of learning rate at $t=20$ with 3 random seeds. Red crosses highlight the optimal LR for each width. \textbf{(Top)} Linear MLP of varying depth trained with SGD. \textbf{(Bottom)} MLP with ReLU activation of varying depth trained with Adam.}
    \label{fig:sgd_adam_t20}
\end{figure*}

\paragraph{Training data.} We fix input dimension $d=100$ in all experiments. We generate a ground truth vector $\omega \sim \mathcal N(0, d^{-1} I_d)$ and generate $N$ inputs $x \sim \mathcal N(0, I_d)$ where $N=1000$ is fixed. We generate $N$ noise terms $\epsilon \sim \mathcal N(0,0.01)$ and consider two output generating processes:

\begin{itemize}
    \item \emph{Linear}: the outputs are generated as $y = \omega^\top x + \epsilon$. This setup is used for the linear networks (no activation function).
    \item \emph{Non-linear}: the outputs are generated as $y = \textrm{Sign}(\omega^\top x + \epsilon)$, where $Sign(.)$ is the sign function ($+1$ if non-negative and $-1$ otherwise). This setup is used for neural networks with ReLU activation function.
\end{itemize}

We train MLPs with varying depths $L \in \{3, 9, 27\}$ and discuss the results below.

\paragraph{Impact of Depth.} From \cref{fig:sgd_adam_t20}, we observe that LR transfer occurs at different depths, confirming the result of \cref{thm:convergence_step_t} which holds for any depth. Interestingly, the optimal LR seems to decrease with depth, which confirms depth-dependency predicted by the result of \cref{thm:m-argmin} (see expression of $\eta_\infty^{(1)}$).\footnote{There a depth version of $\mu$P called Depth-$\mu$P, see \citet{yang2023tensorprogramsvifeature}.}

\paragraph{ReLU and Adam.} \cref{fig:sgd_adam_t20} shows that LR transfer holds for non-linear MLPs (with ReLU) trained with Adam. While our theory does not cover this case, empirical results suggest that LR transfer remains valid for non-linear architectures and more advanced training algorithms.

\paragraph{Impact of Training Step.} \cref{fig:longer_training} shows LR transfer also holds near convergence. Interestingly, the range of close-to optimal learning rates widens with the number of steps, suggesting that when the number of training steps is large enough, optimal LR has low resolution in the sense that choosing the right order of magnitude for the LR should be enough to obtain near-best performance.

\begin{figure}
    \centering
    \includegraphics[width=0.8\linewidth]{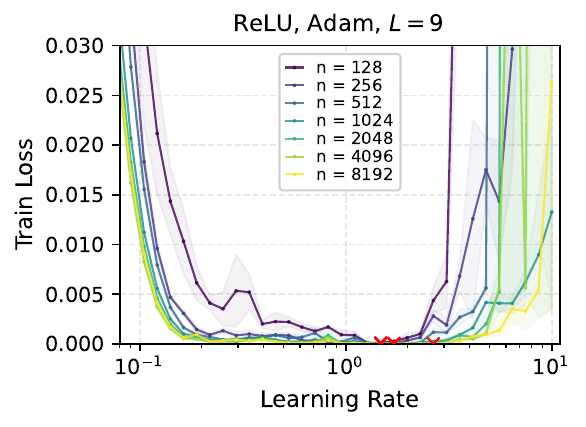}
    \caption{Train loss as a function of learning rate at $t=100$ with 3 random seeds. MLP of depth $L=9$ with ReLU activation trained with Adam.}
    \label{fig:longer_training}
\end{figure}

\section{Discussion and Limitations}
We presented the first of learning rate transfer under $\mu$P. Our theoretical results rely on expressing the training loss of a deep linear network as a polynomial function of the learning rate. By studying the infinite-width limit, we derived convergence results for the optimal LR. While our results are limited to linear networks trained with GD, we believe they can be extended to non-linear MLPs and different optimizers. However, this will likely require different proof machinery especially when dealing when large-width deviations. We leave this question for future work.

\section{Acknowledgment}
The author would like to thank Ruijia Zhang for their help with fixing some typos in this paper.

\bibliographystyle{plainnat}
\bibliography{sample}

\begin{thebibliography}{44}
\providecommand{\natexlab}[1]{#1}
\providecommand{\url}[1]{\texttt{#1}}
\expandafter\ifx\csname urlstyle\endcsname\relax
  \providecommand{\doi}[1]{doi: #1}\else
  \providecommand{\doi}{doi: \begingroup \urlstyle{rm}\Url}\fi

\bibitem[Ahn et~al.(2025)Ahn, Xu, Abreu, Fan, Magakyan, Sharma, Zhan, and
  Langford]{ahn2025diondistributedorthonormalizedupdates}
Kwangjun Ahn, Byron Xu, Natalie Abreu, Ying Fan, Gagik Magakyan, Pratyusha
  Sharma, Zheng Zhan, and John Langford.
\newblock Dion: Distributed orthonormalized updates, 2025.
\newblock URL \url{https://arxiv.org/abs/2504.05295}.

\bibitem[Allen-Zhu et~al.(2019)Allen-Zhu, Li, and
  Song]{allenzhu2019convergencetheorydeeplearning}
Zeyuan Allen-Zhu, Yuanzhi Li, and Zhao Song.
\newblock A convergence theory for deep learning via over-parameterization,
  2019.
\newblock URL \url{https://arxiv.org/abs/1811.03962}.

\bibitem[Arora et~al.(2019)Arora, Du, Hu, Li, Salakhutdinov, and
  Wang]{arora2019exactcomputationinfinitelywide}
Sanjeev Arora, Simon~S. Du, Wei Hu, Zhiyuan Li, Ruslan Salakhutdinov, and
  Ruosong Wang.
\newblock On exact computation with an infinitely wide neural net, 2019.
\newblock URL \url{https://arxiv.org/abs/1904.11955}.

\bibitem[Blake et~al.(2025)Blake, Eichenberg, Dean, Balles, Prince, Deiseroth,
  Cruz-Salinas, Luschi, Weinbach, and
  Orr]{blake2025umupunitscaledmaximalupdate}
Charlie Blake, Constantin Eichenberg, Josef Dean, Lukas Balles, Luke~Y. Prince,
  Björn Deiseroth, Andres~Felipe Cruz-Salinas, Carlo Luschi, Samuel Weinbach,
  and Douglas Orr.
\newblock u-$\mu$p: The unit-scaled maximal update parametrization, 2025.
\newblock URL \url{https://arxiv.org/abs/2407.17465}.

\bibitem[Bordelon and Pehlevan(2025)]{bordelon2025deep}
Blake Bordelon and Cengiz Pehlevan.
\newblock Deep linear network training dynamics from random initialization:
  Data, width, depth, and hyperparameter transfer.
\newblock In \emph{Forty-second International Conference on Machine Learning},
  2025.

\bibitem[Bordelon et~al.(2023)Bordelon, Noci, Li, Hanin, and
  Pehlevan]{bordelon2023depthwisehyperparametertransferresidual}
Blake Bordelon, Lorenzo Noci, Mufan~Bill Li, Boris Hanin, and Cengiz Pehlevan.
\newblock Depthwise hyperparameter transfer in residual networks: Dynamics and
  scaling limit, 2023.
\newblock URL \url{https://arxiv.org/abs/2309.16620}.

\bibitem[Chizat et~al.(2020)Chizat, Oyallon, and
  Bach]{chizat2020lazytrainingdifferentiableprogramming}
Lenaic Chizat, Edouard Oyallon, and Francis Bach.
\newblock On lazy training in differentiable programming, 2020.
\newblock URL \url{https://arxiv.org/abs/1812.07956}.

\bibitem[Chizat and Netrapalli(2025)]{chizat2025featurespeedformulaflexible}
Lénaïc Chizat and Praneeth Netrapalli.
\newblock The feature speed formula: a flexible approach to scale
  hyper-parameters of deep neural networks, 2025.
\newblock URL \url{https://arxiv.org/abs/2311.18718}.

\bibitem[Chizat et~al.(2022)Chizat, Colombo, Fernández-Real, and
  Figalli]{chizat2022infinitewidthlimitdeeplinear}
Lénaïc Chizat, Maria Colombo, Xavier Fernández-Real, and Alessio Figalli.
\newblock Infinite-width limit of deep linear neural networks, 2022.
\newblock URL \url{https://arxiv.org/abs/2211.16980}.

\bibitem[Cohen et~al.(2022)Cohen, Kaur, Li, Kolter, and
  Talwalkar]{cohen2022gradientdescentneuralnetworks}
Jeremy~M. Cohen, Simran Kaur, Yuanzhi Li, J.~Zico Kolter, and Ameet Talwalkar.
\newblock Gradient descent on neural networks typically occurs at the edge of
  stability, 2022.
\newblock URL \url{https://arxiv.org/abs/2103.00065}.

\bibitem[Cybenko(1989)]{Cybenko1989}
G.~Cybenko.
\newblock Approximation by superpositions of a sigmoidal function.
\newblock \emph{Mathematics of Control, Signals and Systems}, 2\penalty0
  (4):\penalty0 303--314, Dec 1989.
\newblock ISSN 1435-568X.
\newblock \doi{10.1007/BF02551274}.
\newblock URL \url{https://doi.org/10.1007/BF02551274}.

\bibitem[Daniely et~al.(2017)Daniely, Frostig, and
  Singer]{daniely2017deeperunderstandingneuralnetworks}
Amit Daniely, Roy Frostig, and Yoram Singer.
\newblock Toward deeper understanding of neural networks: The power of
  initialization and a dual view on expressivity, 2017.
\newblock URL \url{https://arxiv.org/abs/1602.05897}.

\bibitem[de~G.~Matthews et~al.(2018)de~G.~Matthews, Rowland, Hron, Turner, and
  Ghahramani]{matthews2018gaussianprocessbehaviourwide}
Alexander~G. de~G.~Matthews, Mark Rowland, Jiri Hron, Richard~E. Turner, and
  Zoubin Ghahramani.
\newblock Gaussian process behaviour in wide deep neural networks, 2018.
\newblock URL \url{https://arxiv.org/abs/1804.11271}.

\bibitem[Dey et~al.(2025)Dey, Zhang, Noci, Li, Bordelon, Bergsma, Pehlevan,
  Hanin, and Hestness]{dey2025dontlazycompletepenables}
Nolan Dey, Bin~Claire Zhang, Lorenzo Noci, Mufan Li, Blake Bordelon, Shane
  Bergsma, Cengiz Pehlevan, Boris Hanin, and Joel Hestness.
\newblock Don't be lazy: Completep enables compute-efficient deep transformers,
  2025.
\newblock URL \url{https://arxiv.org/abs/2505.01618}.

\bibitem[Everett et~al.(2024)Everett, Xiao, Wortsman, Alemi, Novak, Liu, Gur,
  Sohl-Dickstein, Kaelbling, Lee, and
  Pennington]{everett2024scalingexponentsparameterizationsoptimizers}
Katie Everett, Lechao Xiao, Mitchell Wortsman, Alexander~A. Alemi, Roman Novak,
  Peter~J. Liu, Izzeddin Gur, Jascha Sohl-Dickstein, Leslie~Pack Kaelbling,
  Jaehoon Lee, and Jeffrey Pennington.
\newblock Scaling exponents across parameterizations and optimizers, 2024.
\newblock URL \url{https://arxiv.org/abs/2407.05872}.

\bibitem[Ghosh et~al.(2025)Ghosh, Wu, and
  Bietti]{ghosh2025understandingmechanismsfasthyperparameter}
Nikhil Ghosh, Denny Wu, and Alberto Bietti.
\newblock Understanding the mechanisms of fast hyperparameter transfer, 2025.
\newblock URL \url{https://arxiv.org/abs/2512.22768}.

\bibitem[Hayou and Liu(2025)]{hayou2025optimalembeddinglearningrate}
Soufiane Hayou and Liyuan Liu.
\newblock Optimal embedding learning rate in llms: The effect of vocabulary
  size, 2025.
\newblock URL \url{https://arxiv.org/abs/2506.15025}.

\bibitem[Hayou et~al.(2019)Hayou, Doucet, and Rousseau]{hayou19activation}
Soufiane Hayou, Arnaud Doucet, and Judith Rousseau.
\newblock On the impact of the activation function on deep neural networks
  training.
\newblock In Kamalika Chaudhuri and Ruslan Salakhutdinov, editors,
  \emph{Proceedings of the 36th International Conference on Machine Learning},
  volume~97 of \emph{Proceedings of Machine Learning Research}, pages
  2672--2680. PMLR, 09--15 Jun 2019.
\newblock URL \url{https://proceedings.mlr.press/v97/hayou19a.html}.

\bibitem[Hayou et~al.(2021)Hayou, Clerico, He, Deligiannidis, Doucet, and
  Rousseau]{hayou2021stableresnet}
Soufiane Hayou, Eugenio Clerico, Bobby He, George Deligiannidis, Arnaud Doucet,
  and Judith Rousseau.
\newblock Stable resnet.
\newblock In Arindam Banerjee and Kenji Fukumizu, editors, \emph{Proceedings of
  The 24th International Conference on Artificial Intelligence and Statistics},
  volume 130 of \emph{Proceedings of Machine Learning Research}, pages
  1324--1332. PMLR, 13--15 Apr 2021.

\bibitem[Hayou et~al.(2022)Hayou, Doucet, and
  Rousseau]{hayou2022exactconvergenceratesneural}
Soufiane Hayou, Arnaud Doucet, and Judith Rousseau.
\newblock Exact convergence rates of the neural tangent kernel in the large
  depth limit, 2022.
\newblock URL \url{https://arxiv.org/abs/1905.13654}.

\bibitem[He et~al.(2016)He, Zhang, Ren, and Sun]{he2016deep}
Kaiming He, Xiangyu Zhang, Shaoqing Ren, and Jian Sun.
\newblock Deep residual learning for image recognition.
\newblock In \emph{Proceedings of the IEEE conference on computer vision and
  pattern recognition}, pages 770--778, 2016.

\bibitem[Hornik et~al.(1989)Hornik, Stinchcombe, and White]{HORNIK1989359}
Kurt Hornik, Maxwell Stinchcombe, and Halbert White.
\newblock Multilayer feedforward networks are universal approximators.
\newblock \emph{Neural Networks}, 2\penalty0 (5):\penalty0 359--366, 1989.
\newblock ISSN 0893-6080.
\newblock \doi{https://doi.org/10.1016/0893-6080(89)90020-8}.
\newblock URL
  \url{https://www.sciencedirect.com/science/article/pii/0893608089900208}.

\bibitem[Jacot et~al.(2018)Jacot, Gabriel, and Hongler]{jacot2018neural}
Arthur Jacot, Franck Gabriel, and Cl{\'e}ment Hongler.
\newblock Neural tangent kernel: Convergence and generalization in neural
  networks.
\newblock \emph{Advances in neural information processing systems}, 31, 2018.

\bibitem[Jordan et~al.(2024)Jordan, Jin, Boza, Jiacheng, Cesista, Newhouse, and
  Bernstein]{jordan2024muon}
Keller Jordan, Yuchen Jin, Vlado Boza, You Jiacheng, Franz Cesista, Laker
  Newhouse, and Jeremy Bernstein.
\newblock Muon: An optimizer for hidden layers in neural networks, 2024.
\newblock URL \url{https://kellerjordan.github.io/posts/muon/}.

\bibitem[Kingma and Ba(2017)]{kingma2017adammethodstochasticoptimization}
Diederik~P. Kingma and Jimmy Ba.
\newblock Adam: A method for stochastic optimization, 2017.
\newblock URL \url{https://arxiv.org/abs/1412.6980}.

\bibitem[Kosson et~al.(2025)Kosson, Welborn, Liu, Jaggi, and
  Chen]{kosson2025weightdecaymattermup}
Atli Kosson, Jeremy Welborn, Yang Liu, Martin Jaggi, and Xi~Chen.
\newblock Weight decay may matter more than mup for learning rate transfer in
  practice, 2025.
\newblock URL \url{https://arxiv.org/abs/2510.19093}.

\bibitem[Lee et~al.(2018)Lee, Bahri, Novak, Schoenholz, Pennington, and
  Sohl-Dickstein]{lee2018deepneuralnetworksgaussian}
Jaehoon Lee, Yasaman Bahri, Roman Novak, Samuel~S. Schoenholz, Jeffrey
  Pennington, and Jascha Sohl-Dickstein.
\newblock Deep neural networks as gaussian processes, 2018.
\newblock URL \url{https://arxiv.org/abs/1711.00165}.

\bibitem[Li et~al.(2021)Li, Nica, and Roy]{li2021future}
Mufan Li, Mihai Nica, and Dan Roy.
\newblock The future is log-gaussian: Resnets and their
  infinite-depth-and-width limit at initialization.
\newblock In M.~Ranzato, A.~Beygelzimer, Y.~Dauphin, P.S. Liang, and J.~Wortman
  Vaughan, editors, \emph{Advances in Neural Information Processing Systems},
  volume~34, pages 7852--7864. Curran Associates, Inc., 2021.
\newblock URL
  \url{https://proceedings.neurips.cc/paper_files/paper/2021/file/412758d043dd247bddea07c7ec558c31-Paper.pdf}.

\bibitem[Lingle(2025)]{lingle2025empiricalstudymuplearning}
Lucas Lingle.
\newblock An empirical study of $\mu$p learning rate transfer, 2025.
\newblock URL \url{https://arxiv.org/abs/2404.05728}.

\bibitem[Mei et~al.(2019)Mei, Misiakiewicz, and
  Montanari]{mei2019meanfieldtheorytwolayersneural}
Song Mei, Theodor Misiakiewicz, and Andrea Montanari.
\newblock Mean-field theory of two-layers neural networks: dimension-free
  bounds and kernel limit, 2019.
\newblock URL \url{https://arxiv.org/abs/1902.06015}.

\bibitem[Mignacco et~al.(2021)Mignacco, Krzakala, Urbani, and
  Zdeborová]{Mignacco_2021}
Francesca Mignacco, Florent Krzakala, Pierfrancesco Urbani, and Lenka
  Zdeborová.
\newblock Dynamical mean-field theory for stochastic gradient descent in
  gaussian mixture classification*.
\newblock \emph{Journal of Statistical Mechanics: Theory and Experiment},
  2021\penalty0 (12):\penalty0 124008, December 2021.
\newblock ISSN 1742-5468.
\newblock \doi{10.1088/1742-5468/ac3a80}.
\newblock URL \url{http://dx.doi.org/10.1088/1742-5468/ac3a80}.

\bibitem[Neal(1996)]{Neal1996}
Radford~M. Neal.
\newblock Priors for infinite networks.
\newblock In \emph{Bayesian Learning for Neural Networks}, volume 118 of
  \emph{Lecture Notes in Statistics}, pages 29--53. Springer New York, 1996.
\newblock ISBN 978-0-387-94724-2.
\newblock \doi{10.1007/978-1-4612-0745-0_2}.

\bibitem[Noci et~al.(2024)Noci, Meterez, Hofmann, and
  Orvieto]{noci2024superconsistencyneuralnetwork}
Lorenzo Noci, Alexandru Meterez, Thomas Hofmann, and Antonio Orvieto.
\newblock Super consistency of neural network landscapes and learning rate
  transfer, 2024.
\newblock URL \url{https://arxiv.org/abs/2402.17457}.

\bibitem[Pethick et~al.(2025)Pethick, Xie, Antonakopoulos, Zhu, Silveti-Falls,
  and Cevher]{pethick2025trainingdeeplearningmodels}
Thomas Pethick, Wanyun Xie, Kimon Antonakopoulos, Zhenyu Zhu, Antonio
  Silveti-Falls, and Volkan Cevher.
\newblock Training deep learning models with norm-constrained lmos, 2025.
\newblock URL \url{https://arxiv.org/abs/2502.07529}.

\bibitem[Schoenholz et~al.(2017)Schoenholz, Gilmer, Ganguli, and
  Sohl-Dickstein]{deepinfoprop2017}
S.S. Schoenholz, J.~Gilmer, S.~Ganguli, and J.~Sohl-Dickstein.
\newblock Deep information propagation.
\newblock In \emph{International Conference on Learning Representations}, 2017.

\bibitem[Sirignano and
  Spiliopoulos(2019)]{sirignano2019meanfieldanalysisneural}
Justin Sirignano and Konstantinos Spiliopoulos.
\newblock Mean field analysis of neural networks: A law of large numbers, 2019.
\newblock URL \url{https://arxiv.org/abs/1805.01053}.

\bibitem[Team(2023)]{falcon2023falconseriesopenlanguage}
Falcon Team.
\newblock The falcon series of open language models, 2023.
\newblock URL \url{https://arxiv.org/abs/2311.16867}.

\bibitem[Vershynin(2026)]{VershyninHDP2}
Roman Vershynin.
\newblock \emph{High-Dimensional Probability: An Introduction with Applications
  in Data Science}.
\newblock Second edition, 2026.
\newblock Pre-publication PDF dated February 19, 2026.

\bibitem[Williams(1996)]{Williams1996ComputingWI}
Christopher K.~I. Williams.
\newblock Computing with infinite networks.
\newblock In \emph{Neural Information Processing Systems}, 1996.
\newblock URL \url{https://api.semanticscholar.org/CorpusID:16883702}.

\bibitem[Yang and Hu(2021)]{yang2021tensor}
Greg Yang and Edward~J Hu.
\newblock Tensor programs iv: Feature learning in infinite-width neural
  networks.
\newblock In \emph{International Conference on Machine Learning}, pages
  11727--11737. PMLR, 2021.

\bibitem[Yang et~al.(2022)Yang, Hu, Babuschkin, Sidor, Liu, Farhi, Ryder,
  Pachocki, Chen, and Gao]{yang2022tensor}
Greg Yang, Edward~J Hu, Igor Babuschkin, Szymon Sidor, Xiaodong Liu, David
  Farhi, Nick Ryder, Jakub Pachocki, Weizhu Chen, and Jianfeng Gao.
\newblock Tensor programs v: Tuning large neural networks via zero-shot
  hyperparameter transfer.
\newblock \emph{arXiv preprint arXiv:2203.03466}, 2022.

\bibitem[Yang et~al.(2023)Yang, Yu, Zhu, and
  Hayou]{yang2023tensorprogramsvifeature}
Greg Yang, Dingli Yu, Chen Zhu, and Soufiane Hayou.
\newblock Tensor programs vi: Feature learning in infinite-depth neural
  networks, 2023.
\newblock URL \url{https://arxiv.org/abs/2310.02244}.

\bibitem[Zhang et~al.(2025)Zhang, Morwani, Vyas, Wu, Zou, Ghai, Foster, and
  Kakade]{zhang2025doescriticalbatchsize}
Hanlin Zhang, Depen Morwani, Nikhil Vyas, Jingfeng Wu, Difan Zou, Udaya Ghai,
  Dean Foster, and Sham Kakade.
\newblock How does critical batch size scale in pre-training?, 2025.
\newblock URL \url{https://arxiv.org/abs/2410.21676}.

\bibitem[Zheng et~al.(2025)Zheng, Zhang, Wang, Huang, Tian, Huang, Zhu, and
  Li]{zheng2025scalingdiffusiontransformersefficiently}
Chenyu Zheng, Xinyu Zhang, Rongzhen Wang, Wei Huang, Zhi Tian, Weilin Huang,
  Jun Zhu, and Chongxuan Li.
\newblock Scaling diffusion transformers efficiently via $\mu$p, 2025.
\newblock URL \url{https://arxiv.org/abs/2505.15270}.

\end{thebibliography}

\newpage

\onecolumn
\appendix
\section{Proofs}\label{app:proofs}

\subsection{Proof of \cref{lemma:m-vanish}}

We prove the result for $m=1$ (single sample dataset). Extending the result to general $m$ is straightforward.

\begin{lemma}\label{lemma:asymptotic_coefficients}
Assume $m=1$. Then, for all $\ell \in \{2, 3, \dots, L\}$, we have $\| \phi_\ell \|_{L_2} = \bigO(n^{-(\ell-1)/2})$.
\end{lemma}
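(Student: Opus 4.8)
The plan is to expand the degree-$k$ coefficient $\phi_k$ (for $2\le k\le L$) into its $\binom{L}{k}$ monomials indexed by tuples $1\le\ell_1<\cdots<\ell_k\le L$. Following the computation preceding the statement, each monomial equals $(-\chi)^k$ times a product of one ``self-pairing'' factor $\|a_{\ell_1-1}\|^2$ at the right boundary, one self-pairing factor $\|b_{\ell_k+1}\|^2$ at the left boundary, and exactly $k-1$ ``cross'' factors of the form $a_{\ell_j-1}^\top J_{\ell_j-1:\ell_{j-1}+1}b_{\ell_{j-1}+1}$. The first ingredient I would establish is the $L_p$ scale, for every fixed $p$, of these elementary pieces: $\|\chi\|_{L_p}=\Theta(1)$, because $f^{(0)}(x)=V^\top a_L$ has $\|f^{(0)}(x)\|_{L_2}=\Theta(n^{-1/2})\to0$ (as $V\sim\normal(0,n^{-2})$ is independent of $a_L$ and $\|a_L\|^2=\Theta(n)$), so $\chi\to -y$; the forward factor $\|a_{\ell_1-1}\|^2$ concentrates at $\Theta(n)$ and the backward factor $\|b_{\ell_k+1}\|^2$ at $\Theta(n^{-1})$ in every $L_p$, the latter $n^{-1}$ originating entirely from the mean-field initialization $V\sim\normal(0,n^{-2})$. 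These concentration statements are standard moment computations for fixed-length products of Gaussian matrices and rely on the rotational invariance of the Gaussian weights.

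The crux, and the step I expect to be the main obstacle, is the cross-factor bound $\|a_{\ell_j-1}^\top J_{\ell_j-1:\ell_{j-1}+1}b_{\ell_{j-1}+1}\|_{L_p}=\bigO(n^{-1/2})$, which is delicate because $a_{\ell_j-1}$, the connecting product $J$, and $b_{\ell_{j-1}+1}$ all involve the shared weights $W_{\ell_{j-1}+1},\dots,W_{\ell_j-1}$, so the inner product is not a product of independent factors. Writing $q=\ell_{j-1}+1$ and $p=\ell_j-1$ and using $a_p=J_{p:q}\,a_{q-1}$, I would rewrite the cross factor as $a_{q-1}^\top v$ with $v:=J_{p:q}^\top J_{p:q}\,b_q$. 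Now $a_{q-1}$ depends only on $W_0,\dots,W_{q-1}$, hence is independent of $v$, which depends on $W_q,\dots,W_L,V$; conditioning on $v$ and invoking isotropy $\E[a_{q-1}a_{q-1}^\top]=c_a I_n$ with $c_a=\Theta(1)$ gives $\E[(a_p^\top J b_q)^2\mid v]=c_a\|v\|^2$. It then remains to bound $\E\|v\|^2\le\sqrt{\E\|J_{p:q}\|_{\mathrm{op}}^{8}}\,\sqrt{\E\|b_q\|^4}=\Theta(1)\cdot\Theta(n^{-1})$, using that a fixed-length product of Gaussian matrices has all operator-norm moments bounded uniformly in $n$ and that $\|b_q\|^2$ concentrates at $\Theta(n^{-1})$. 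This yields the $L_2$ cross bound; the higher $L_p$ versions follow identically after applying moment equivalence (hypercontractivity) for the fixed-degree polynomial $a_{q-1}^\top v$ in the Gaussian weights.

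Finally, I would assemble the monomials. Since each monomial is a product of the fixed number $M=k+2$ of factors above (counting $\chi^k$ as one factor, whose $L_{2M}$ norm is controlled by $\|\chi\|_{L_p}=\Theta(1)$), the generalized Hölder inequality $\|\prod_i g_i\|_{L_2}\le\prod_i\|g_i\|_{L_{2M}}$ bounds its $L_2$ norm, independently of the dependence between the factors, by
$$
\Theta(1)\cdot\Theta(n)\cdot\Theta(n^{-1})\cdot\big(\bigO(n^{-1/2})\big)^{k-1}=\bigO\!\left(n^{-(k-1)/2}\right),
$$
where the two boundary factors cancel ($\Theta(n)\cdot\Theta(n^{-1})=\Theta(1)$) and the entire rate is produced by the $k-1$ cross factors. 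Summing over the $\binom{L}{k}=\bigO(1)$ tuples with the triangle inequality preserves this order, giving $\|\phi_k\|_{L_2}=\bigO(n^{-(k-1)/2})$, which is exactly the claim for $\ell=k$.
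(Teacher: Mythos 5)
Your proof is correct, and at the structural level it matches the paper's: you expand $\phi_k$ into its $\binom{L}{k}$ monomials, note that the two boundary self-pairings cancel ($\|a_{\ell_1-1}\|^2=\Theta(n)$ against $\|b_{\ell_k+1}\|^2=\Theta(n^{-1})$, the latter coming from $V\sim\normal(0,n^{-2})$), and extract one factor of $n^{-1/2}$ from each of the $k-1$ cross terms, which you rewrite via the same algebraic identity the paper uses, $a_{\ell_j-1}^\top J\, b_{\ell_{j-1}+1}=a_{\ell_{j-1}}^\top J^\top J J^\top b_{\ell_j}$. Where you genuinely diverge is in how this trilinear form is estimated. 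The paper invokes its \cref{lemma:tri_concentration}, an exact second-moment computation $\E[(x^\top W^\top W W^\top y)^2]=\Theta(n)$ obtained from $\tfrac1n\tr(M_n^3)\to 5$ under the Marchenko--Pastur law, which gives a sharp two-sided $\Theta(n^{-1})$ bound for the cross factor; you instead condition on the independent block $v=J^\top J b_q$, use isotropy of $a_{q-1}$ to get $\E[(a_{q-1}^\top v)^2\mid v]=c_a\|v\|^2$, and control $\E\|v\|^2$ by Cauchy--Schwarz against operator-norm moments of the fixed-length Gaussian product. Your route yields only an upper bound, but that is all the statement requires, and it is more elementary, avoiding random-matrix limit theory entirely. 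Your assembly step is also \emph{more} careful than the paper's: the paper combines the $k-1$ mutually dependent factors with a bare appeal to ``Cauchy--Schwartz'' from $L_2$ bounds on each factor, which strictly speaking needs higher-moment control; your explicit use of Gaussian hypercontractivity (moment equivalence for fixed-degree polynomials in the weights, with constants uniform in $n$ since $L$ and $k$ are fixed) together with the generalized H\"older bound $\|\prod_i g_i\|_{L_2}\le\prod_i\|g_i\|_{L_{2M}}$ closes exactly that gap, at the mild price of losing the matching lower bound that the paper's MP computation provides.
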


\begin{proof}
Let $k \in \{2, \dots, L\}$. We show that all the terms inside $\phi_k$ are $\big(n^{-1/2})$ which concludes the proof. Let $1\leq \ell_1 < \ell_2 < \dots < \ell_k \leq L$. Then, we can write the summand as 

\begin{align*}
V^\top J_{L:\ell_k + 1} b_{\ell_{k}+1} &a_{\ell_k - 1}^\top  J_{\ell_k - 1:\ell_{k-1} + 1} \dots b_{\ell_1 + 1} a_{\ell_1 - 1}^\top J_{\ell_{1} - 1 : 1} W_0 x\\
&= \|b_{\ell_k +1}\|^2 \|a_{\ell_1 - 1}\|^2 \prod_{j=2}^k a_{\ell_j - 1}^\top J_{\ell_j - 1 : \ell_{j-1} + 1} b_{\ell_{j-1} + 1}.
\end{align*}

For some $j \in \{2, \dots, k\}$, let $J_j := J_{\ell_j - 1 : \ell_{j-1} + 1}$. We have 
$$
a_{\ell_j - 1}^\top J_{\ell_j - 1 : \ell_{j-1} + 1} b_{\ell_{j-1} + 1} = u^\top J_j^\top J_j J_j^\top v,  
$$
where $u = a_{\ell_{j-1}}$ and $v = b_{\ell_j}$.

Using Holder's inequality (product of $k$ random variables) and Lemma \ref{lemma:tri_concentration}, we obtain that
\begin{align*}
\E (V^\top J_{L:\ell_k + 1} b_{\ell_{k}+1} a_{\ell_k - 1}^\top  J_{\ell_k - 1:\ell_{k-1} + 1} \dots b_{\ell_1 + 1} a_{\ell_1 - 1}^\top J_{\ell_{1} - 1 : 1} W_0 x)^2 = \bigO(n^{-k+1}).
\end{align*}

We conclude by observing that $\lim_{n\to \infty} \chi = -y$.

\end{proof}

\paragraph{Proof for \cref{lemma:m-vanish}.} Identical to Lemma~\ref{lemma:asymptotic_coefficients}: each inner product block has second moment $\Theta(n^{-1})$ by Lemma~\ref{lemma:tri_concentration}. Products of $k-1$ such factors contribute $\Theta(n^{-(k-1)})$ to the second moment; the extra sum over $i_r\in[m]$ only changes constants, not the $n$-scaling. The convergence of $\phi_1$ is straightforward by Strong Law of Large Numbers (SLLN), and is a consequence of \cref{lemma:m-gram} below, which proves convergence of a kernel matrix to the Gram matrix $K$ of input data.

\subsection{Proof of \cref{thm:m-argmin}}

The proof proceeds as follows: we first characterize the infinite-width limit of $\phi_1$, then we study the asymptotics of the loss function and conclude on the convergence of the optimal learning rate.

\paragraph{First-order term and a layerwise Gram matrix.}
Fox $(x_j, y_j)$ in the training dataset, the degree one coefficient $\phi_1$ in the expression of $f^{(1)}(x_j)$ as a polynomial in $\eta$ is given by 
\begin{equation}\label{eq:phi1-m}
\phi_{1} \;=\; -\frac1m\sum_{\ell=1}^L \sum_{i=1}^m
\chi_i\, \|b_{\ell+1}\|^2\, \big\langle a_{\ell-1, i},\, a_{\ell-1, j}\big\rangle .
\end{equation}
Let $G_{\ell-1}\in\mathbb R^{m\times m}$ be the layerwise Gram with $(G_{\ell-1})_{ij}=\langle a_{\ell-1, i}, a_{\ell-1, j}\rangle$, and define the normalized input Gram
\(
K\in\mathbb R^{m\times m},\quad K_{ij}=\langle x_i,x_j\rangle/d.
\)
The next results characterizes the infinite-width limit of a kernel matrix from which the limit of $\phi_1$ follows.
\begin{lemma}[Layerwise Gram limit; $m$ points]\label{lemma:m-gram}
As $n\to\infty$,
\[
\frac{1}{L}\sum_{\ell=1}^L \|b_{\ell+1}\|^2\, G_{\ell-1}
\ \xrightarrow{\text{a.s.}}\ K.
\]
\end{lemma}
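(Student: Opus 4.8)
The plan is to rewrite each summand as a product of two \emph{independent} factors that each concentrate, and then use that only finitely many layers $\ell\in\{1,\dots,L\}$ appear. Set $\beta_{\ell+1}:=n\|b_{\ell+1}\|^2$ and $\hat G_{\ell-1}:=n^{-1}G_{\ell-1}$, so that $\|b_{\ell+1}\|^2 G_{\ell-1}=\beta_{\ell+1}\,\hat G_{\ell-1}$. I would prove $\beta_{\ell+1}\to 1$ almost surely and $\hat G_{\ell-1}\to K$ almost surely (entrywise); since a product of a.s.\ convergent sequences converges a.s.\ to the product of the limits, each summand tends to $K$, and averaging the $L$ summands gives the claim. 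The structural fact making the scales decouple is that $b_{\ell+1}$ is built only from $W_{\ell+1},\dots,W_L,V$ while $a_{\ell-1,i}$ (hence $G_{\ell-1}$) is built only from $W_0,\dots,W_{\ell-1}$, so the two factors are independent, and $W_\ell$ itself enters neither.

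For the backward factor I would use rotational invariance. Writing $\rho_k:=\|b_k\|^2/\|b_{k+1}\|^2=\|W_k^\top u_k\|^2$ with $u_k:=b_{k+1}/\|b_{k+1}\|$ a unit vector independent of $W_k$, the conditional law of $\rho_k$ given $b_{k+1}$ is $n^{-1}\chi^2_n$ regardless of $u_k$; hence the $\rho_k$ are mutually independent, each $n^{-1}\chi^2_n$, while the base term is $\beta_{L+1}=n\|V\|^2=n^{-1}\sum_{i=1}^n g_i^2$ with $g\sim\normal(0,I_n)$. Standard $\chi^2$ tail bounds give $\Prob(|n^{-1}\chi^2_n-1|>\epsilon)\le 2e^{-c n\epsilon^2}$, which is summable in $n$, so Borel--Cantelli yields $n^{-1}\chi^2_n\to 1$ almost surely on the product space coupling distinct widths. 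Since $\beta_{\ell+1}=\beta_{L+1}\prod_{k=\ell+1}^L\rho_k$ is a product of at most $L+1$ such factors, $\beta_{\ell+1}\to 1$ almost surely.

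For the forward factor I would induct on $\ell$ to show $\hat G_{\ell-1}\to K$ almost surely. The base case $\hat G_0$ has entries $n^{-1}\sum_{r=1}^n (W_0 x_i)_r(W_0 x_j)_r$, an average of i.i.d.\ terms with mean $\langle x_i,x_j\rangle/d=K_{ij}$, handled by the SLLN (or, on the common product space, by Borel--Cantelli applied to its $\bigO(n^{-2})$ fourth moment). For the step I would condition on $a_{\ell-1}$ and use $\E[W_\ell^\top W_\ell]=I_n$ to get $\E[(\hat G_\ell)_{ij}\mid a_{\ell-1}]=(\hat G_{\ell-1})_{ij}$; a direct bivariate-Gaussian computation gives $\mathrm{Var}[(\hat G_\ell)_{ij}\mid a_{\ell-1}]=n^{-1}\big((\hat G_{\ell-1})_{ii}(\hat G_{\ell-1})_{jj}+(\hat G_{\ell-1})_{ij}^2\big)=\bigO(n^{-1})$ on the eventually-a.s.\ event that $\hat G_{\ell-1}$ is bounded. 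A Hanson--Wright-type concentration bound for the quadratic form $a_{\ell-1,i}^\top W_\ell^\top W_\ell a_{\ell-1,j}$ (using the same second-moment estimates as \cref{lemma:tri_concentration}) then gives an exponentially small tail, so $\hat G_\ell-\hat G_{\ell-1}\to 0$ almost surely by Borel--Cantelli, closing the induction against the hypothesis $\hat G_{\ell-1}\to K$.

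Assembling, each summand satisfies $\beta_{\ell+1}\hat G_{\ell-1}\to 1\cdot K=K$ almost surely (independence confirming the limiting mean is $K$), and averaging the finitely many ($L$) summands gives $\tfrac1L\sum_{\ell=1}^L\|b_{\ell+1}\|^2 G_{\ell-1}\to K$ almost surely. The main obstacle is the inductive almost-sure control of the Gram recursion: upgrading the conditional second-moment bound to almost-sure convergence requires handling the random conditioning (via a good event on which $(G_{\ell-1})_{ii}=\Theta(n)$) and invoking Borel--Cantelli simultaneously across the finitely many layers and the $m^2$ entries, all on a single probability space coupling the different widths. The backward factor and the base case are comparatively routine by contrast.
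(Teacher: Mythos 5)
Your proof is correct, and it rests on the same structural observation as the paper's proof --- that $b_{\ell+1}$ depends only on $W_{\ell+1},\dots,W_L,V$ while $G_{\ell-1}$ depends only on $W_0,\dots,W_{\ell-1}$, so the two factors are independent with $\E[\|b_{\ell+1}\|^2 G_{\ell-1}]=K$ --- but your route to the almost-sure limit is genuinely more substantial than what the paper writes. The paper's proof is three lines: compute $\E\|b_{\ell+1}\|^2=n^{-1}$, assert that the $a_{\ell-1,i}$ are jointly Gaussian with per-coordinate covariance $K_{ij}$, and close with ``a simple application of the SLLN.'' You replace that appeal with explicit concentration: for the backward factor, the rotational-invariance factorization $\beta_{\ell+1}=\beta_{L+1}\prod_k\rho_k$ into mutually independent $n^{-1}\chi^2_n$ variables with exponential tails and Borel--Cantelli; for the forward factor, a layerwise induction using $\E[W_\ell^\top W_\ell]=I_n$, the exact conditional variance $n^{-1}\bigl((\hat G_{\ell-1})_{ii}(\hat G_{\ell-1})_{jj}+(\hat G_{\ell-1})_{ij}^2\bigr)$, and Hanson--Wright tails to make deviations summable. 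This buys two things the paper's sketch lacks: it repairs the loose Gaussianity claim, which holds exactly only at $\ell=1$ (deeper $a_{\ell-1,i}$ are only \emph{conditionally} Gaussian, which is precisely what your conditioning-on-$a_{\ell-1}$ step handles); and it makes the almost-sure statement meaningful by explicitly flagging the coupling of different widths $n$ on a single probability space, which the paper leaves implicit. Your forward induction is in effect an almost-sure upgrade of the paper's own $L_2$ recursion in \cref{lem:baseline} (proved there separately for the convergence-rate analysis), so the quantitative content is consistent with the paper; the paper's version buys brevity, while yours buys a self-contained argument whose exponential tails would also yield rates for free.
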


\begin{proof}
For $\ell \in \{1,\dots, L\}$, we have $\E\|b_{\ell+1}\|^2=1/n$. The vectors $a_{\ell-1, i}$ are jointly Gaussian with per-coordinate covariance $\langle x_i,x_j\rangle/d$. Independence between $b_{\ell+1}$ and $(a_{\ell-1, i})_{i=1}^m$ gives $\E[\|b_{\ell+1}\|^2 G_{\ell-1}]=K$. A simple application of the SLLN implies the a.s.\ convergence of the layerwise average to $K$.
\end{proof}

\paragraph{Limiting one-step loss and optimal step size.}
Let $\chi=(\chi_1^{(1)},\dots,\chi_m^{(1)})^\top$, $y=(y_1,\dots,y_m)^\top$. Using Lemma~\ref{lemma:m-vanish} and \eqref{eq:phi1-m}, uniformly for $\eta$ on compact intervals,
\begin{equation}\label{eq:loss-m}
\loss_n(\eta)
= \frac{1}{2m}\big\|\chi - \eta\,H_n\,\chi\big\|^2 + o_{\mathbb L_2}(1),
\qquad
H_n \ =\ \sum_{\ell=1}^L \frac1m\,\|b_{\ell+1}\|^2\, G_{\ell-1}.
\end{equation}
By Lemma~\ref{lemma:m-gram}, $H_n\xrightarrow{\text{a.s.}} \tfrac{L}{m}K$, and since $\chi\to -y$ in $\mathbb L_2$ (as $f^{(0)}(x_i)\to 0$ in $\mathbb L_2$), we obtain the deterministic limit
\begin{equation}\label{eq:limit-loss-m}
\loss_\infty^{(1)}(\eta)
\overset{def}{=} \lim_{n\to\infty}\loss_n^{(1)}(\eta)
= \frac{1}{2m}\big\| -y + \eta\,\tfrac{L}{m}\,K\,y\big\|^2. \quad \textrm{a.s.}
\end{equation}

The next result shows convergence of the optimal learning rate $\eta^{(1)}_n$.
\begin{lemma}[LR transfer; limiting minimizer]\label{lemma:m-argmin}
Assume $Ky\neq 0$, then $\loss_\infty^{(1)}(\eta)$ is strictly convex quadratic with the unique minimizer
\begin{equation}\label{eq:eta-star-m}
\eta_\infty^{(1)}
\;=\; \frac{m}{L}\,
\frac{y^\top K y}{\| K y\|^2}.
\end{equation}
Moreover, for any compact set $I \subset [0, \infty)$ containing $\eta_\infty^{(1)}$, we have for any $\eta_n^{(1)}\in\operatorname{argmin}_{\eta \in I} \loss_n^{(1)}(\eta)$, $\eta_n^{(1)}\to \eta_\infty^{(1)}$ in $\mathbb L_2$.
\end{lemma}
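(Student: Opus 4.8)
The plan is to handle the two assertions separately: the first is a one-variable optimization of an explicit deterministic quadratic, and the second is an argmin-consistency statement that I would reduce to a deterministic continuity lemma via a subsequence device.

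For the strict convexity and the formula for the minimizer, I would simply expand the limiting loss \eqref{eq:limit-loss-m},
$$
\loss_\infty^{(1)}(\eta)=\frac{1}{2m}\big\|{-y}+\eta\tfrac{L}{m}Ky\big\|^2
=\frac{1}{2m}\Big(\|y\|^2-2\eta\tfrac{L}{m}\,y^\top Ky+\eta^2\tfrac{L^2}{m^2}\|Ky\|^2\Big).
$$
This is a quadratic in $\eta$ whose leading coefficient $\tfrac{L^2}{2m^3}\|Ky\|^2$ is strictly positive exactly because $Ky\neq 0$; hence $\loss_\infty^{(1)}$ is strictly convex and has a unique minimizer, obtained by setting the derivative to zero, giving \eqref{eq:eta-star-m}.

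For the convergence of the finite-$n$ minimizers, the main tool is a deterministic argmin-continuity lemma: if $g_n\to g$ uniformly on a compact set $I$ with $g$ continuous and admitting a \emph{unique} minimizer $\eta_\infty^{(1)}$ on $I$, then every $x_n\in\operatorname{argmin}_I g_n$ satisfies $x_n\to\eta_\infty^{(1)}$. I would prove this by the standard subsequence argument: any subsequential limit $x'$ of $(x_n)$ satisfies $g(x')=\lim g_n(x_n)\le \lim g_n(\eta_\infty^{(1)})=g(\eta_\infty^{(1)})$ by uniform convergence and minimality, so uniqueness forces $x'=\eta_\infty^{(1)}$; as every subsequence has a further subsequence converging to $\eta_\infty^{(1)}$, the whole sequence converges. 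To supply the hypothesis, I would invoke \eqref{eq:loss-m}: since $\loss_n^{(1)}(\eta)=\tfrac{1}{2m}\|\chi-\eta H_n\chi\|^2+o_{\mathbb L_2}(1)$ uniformly on compacts, and $H_n\xrightarrow{\text{a.s.}}\tfrac{L}{m}K$ by \cref{lemma:m-gram} together with $\chi\to -y$ in $\mathbb L_2$, I obtain $\sup_{\eta\in I}\big|\loss_n^{(1)}(\eta)-\loss_\infty^{(1)}(\eta)\big|\to 0$ in probability.

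To transfer the deterministic lemma to the stochastic setting I would argue along subsequences. Given any subsequence of $(\eta_n^{(1)})$, the scalar quantities governing the coefficients converge in $\mathbb L_2$ (and almost surely for the surviving first-order term), so along a further subsequence all of them converge almost surely; on that event $\loss_n^{(1)}\to\loss_\infty^{(1)}$ uniformly on $I$, and the deterministic lemma forces $\eta_n^{(1)}\to\eta_\infty^{(1)}$ almost surely. Since every subsequence admits a further subsequence converging to the same deterministic limit, $\eta_n^{(1)}\xrightarrow{\Prob}\eta_\infty^{(1)}$; and because $\eta_n^{(1)}\in I$ is uniformly bounded, bounded convergence upgrades this to the claimed $\mathbb L_2$ convergence. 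The step I expect to be the main obstacle is precisely this passage from uniform convergence of the loss to convergence of its minimizer: pointwise convergence in $\eta$ is not enough, and one must use the \emph{separation} guaranteed by the unique minimizer of the strictly convex limit, together with uniform control over the compact $I$, to prevent $\eta_n^{(1)}$ from drifting to the boundary or oscillating — the subsequence-to-a.s.\ device is what makes the clean deterministic lemma applicable.
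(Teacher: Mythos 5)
Your proposal is correct and takes essentially the same route as the paper's proof: verify that $\loss_\infty^{(1)}$ is a strictly convex quadratic with the stated minimizer, establish uniform convergence of $\loss_n^{(1)}$ to $\loss_\infty^{(1)}$ on compact sets via \eqref{eq:loss-m}, \cref{lemma:m-gram}, and $\chi\to -y$, and conclude by argmin stability. Your subsequence-to-a.s.\ device and the bounded-convergence upgrade to $\mathbb L_2$ merely spell out the details that the paper compresses into its one-line ``uniform convergence plus strict convexity implies convergence of minimizers.''
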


\begin{proof}
The limiting loss \eqref{eq:limit-loss-m} is a strictly convex quadratic in $\eta$ whenever $K y \neq 0$. Differentiating yields \eqref{eq:eta-star-m}. Uniform convergence in $\mathbb L_2$ of $\loss_n^{(1)}\to\loss_\infty^{(1)}$ on compacts (in $\eta$) plus strict convexity implies convergence of minimizers.
\end{proof}

\paragraph{Particular case.} When the inputs are orthogonal, i.e. if $\langle x_i,x_j\rangle=0$ for $i\neq j$, then $K=\operatorname{diag}(k_1,\dots,k_m)$ with $k_i=\|x_i\|^2/d$, and
\[
\eta_\infty^{(1)}=\frac{m}{L}\cdot
\frac{\sum_{i=1}^m y_i^2 k_i}{\sum_{i=1}^m y_i^2 k_i^2}.
\]

\subsection{Convergence rate}

As above, we assume $Ky\neq 0$ and work with the one–step loss
\[
\loss_n^{(1)}(\eta)\;=\;\frac{1}{2m}\sum_{j=1}^m\big(f^{(1)}(x_j)-y_j\big)^2
\]
We also recall the limiting quadratic
\(
\loss_\infty^{(1)}(\eta)=\frac{1}{2m}\big\|-y+\eta\,\tfrac{L}{m}Ky\big\|^2
\)
with unique minimizer
\(
\eta_\infty^{(1)}=\frac{m}{L}\frac{y^\top Ky}{ \|K y\|^2}.
\)

Let $\chi_\infty=(-y_1,\dots,-y_m)^\top$  and recall
\[
H_n=\sum_{\ell=1}^L \frac1m\,\|b_{\ell+1}\|^2\,G_{\ell-1}\in\mathbb R^{m\times m},
\qquad (G_{\ell-1})_{ij}=\big\langle a_{\ell-1, i},a_{\ell-1, j}\big\rangle .
\]

Let us explicitly state the bounds (instead of $o(1)$ in the previous section) as these are needed to characterize the convergence rate.
\begin{lemma}[One-step decomposition with uniform remainders]\label{lem:decomp}
Fix any compact interval $I\subset(0,\infty)$. Then, uniformly in $\eta\in I$,
\begin{equation}\label{eq:decomp}
\loss_n(\eta)
=\frac{1}{2m}\big\|\chi-\eta H_n\chi\big\|^2 + R_n(\eta),
\end{equation}
where the remainder satisfies
\[
\sup_{\eta\in I}\big|R_n(\eta)\big|=O_{\mathbb L_2}(n^{-1/2}),
\qquad
\sup_{\eta\in I}\big|R_n'(\eta)\big|=O_{\mathbb L_2}(n^{-1/2}).
\]
\end{lemma}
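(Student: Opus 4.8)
The plan is to expand the one-step loss exactly and isolate the quadratic piece coming from the degree-$0$ and degree-$1$ monomials, pushing everything of degree $\geq 2$ into the remainder. First I would fix a training point $x_j$ and use \cref{lemma:m-vanish} to write $f^{(1)}(x_j)-y_j=\chi_j+\phi_{1,j}\,\eta+\sum_{\ell=2}^L\phi_{\ell,j}\,\eta^\ell$, where $\phi_{\ell,j}$ denotes the coefficient produced by evaluating at $x_j$. Comparing \eqref{eq:phi1-m} with the definition of $H_n$ identifies the degree-$1$ coefficient exactly as $\phi_{1,j}=-(H_n\chi)_j$. Setting $M_j(\eta):=(\chi-\eta H_n\chi)_j$ and $P_j(\eta):=\sum_{\ell=2}^L\phi_{\ell,j}\,\eta^\ell$, the residual at $x_j$ is $M_j(\eta)+P_j(\eta)$, and squaring and averaging over $j$ gives the exact identity
\[
R_n(\eta)=\loss_n(\eta)-\frac{1}{2m}\big\|\chi-\eta H_n\chi\big\|^2=\frac1m\sum_{j=1}^m M_j(\eta)P_j(\eta)+\frac1{2m}\sum_{j=1}^m P_j(\eta)^2 .
\]
The point of writing it this way is that no limit replaces the random quantities $\chi$ or $H_n$ inside the leading term, so the whole remainder is generated solely by the genuinely higher-order coefficients $\phi_{\ell,j}$ with $\ell\geq 2$.

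Next I would estimate the two pieces. From \cref{lemma:m-vanish}, $\|\phi_{\ell,j}\|_{L_2}=\bigO(n^{-(\ell-1)/2})$, so on the compact interval $I$ the $\ell=2$ term dominates and $\sup_{\eta\in I}\|P_j(\eta)\|_{L_2}=\bigO(n^{-1/2})$, while $M_j(\eta)=\chi_j-\eta(H_n\chi)_j$ converges to $-y_j+\eta\tfrac{L}{m}(Ky)_j$ and is $\bigO_{L_2}(1)$. The cross term is where the argument must be handled with care: to bound the \emph{random variable} $\tfrac1m\sum_j M_jP_j$ in $L_2$ I need $\E[M_j^2P_j^2]$, not merely $\E[M_jP_j]$, so Cauchy--Schwarz forces control of $\|M_j\|_{L_4}$ and $\|P_j\|_{L_4}$ rather than their $L_2$ norms. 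The key tool is Gaussian hypercontractivity: each $\phi_{\ell,j}$, as well as $\chi_j$ and $(H_n\chi)_j$, is a polynomial of fixed degree (depending only on $L$) in the i.i.d.\ Gaussian weight entries, so all of its $L_p$ norms are equivalent up to a constant depending only on $p$ and that degree. This upgrades $\|\phi_{\ell,j}\|_{L_2}=\bigO(n^{-(\ell-1)/2})$ to the same bound in $L_4$ and gives $\|M_j\|_{L_4}=\bigO(1)$ uniformly on $I$. Then $\|M_jP_j\|_{L_2}\leq\|M_j\|_{L_4}\|P_j\|_{L_4}=\bigO(n^{-1/2})$, whereas the purely quadratic term is even smaller, $\|P_j^2\|_{L_2}=\|P_j\|_{L_4}^2=\bigO(n^{-1})$; summing over the finitely many $j\in[m]$ preserves these rates.

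Finally I would make the bounds uniform in $\eta$ and treat $R_n'$. The decisive simplification is that $R_n(\eta)$ is a polynomial in $\eta$ of degree at most $2L$ whose coefficients are exactly the $L_2$-small random quantities just estimated; hence $\sup_{\eta\in I}|R_n(\eta)|$ is bounded by a finite linear combination of those coefficient norms times powers of $\max_{\eta\in I}|\eta|$, and uniformity is automatic, requiring no chaining or maximal inequality. Differentiating term by term, $R_n'(\eta)$ has the same structure, with $M_j'(\eta)=-(H_n\chi)_j=\bigO_{L_2}(1)$ and $P_j'(\eta)=\sum_{\ell=2}^L\ell\,\phi_{\ell,j}\,\eta^{\ell-1}=\bigO_{L_2}(n^{-1/2})$, so the identical Cauchy--Schwarz/hypercontractivity computation yields $\sup_{\eta\in I}|R_n'(\eta)|=\bigO_{L_2}(n^{-1/2})$. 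I expect the only real obstacle to be the moment upgrade from $L_2$ to $L_4$ needed for the cross term; once hypercontractivity is in place, the rest is bookkeeping on a fixed-degree polynomial.
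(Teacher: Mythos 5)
Your proposal is correct and follows the same decomposition as the paper: isolate the exact quadratic part $\tfrac{1}{2m}\|\chi-\eta H_n\chi\|^2$ (using that the degree-one coefficient at $x_j$ is exactly $-(H_n\chi)_j$ by \eqref{eq:phi1-m}) and push everything involving a coefficient $\phi_{\ell}$ with $\ell\geq 2$ into $R_n$, then invoke \cref{lemma:m-vanish} and the fixed polynomial degree in $\eta$ to get uniformity on $I$ for free. Where you go beyond the paper is in handling the cross term $\tfrac1m\sum_j M_jP_j$: the paper's one-line proof bounds $R_n$ in $\mathbb L_2$ directly from the $L_2$ bounds on the $\phi_\ell$, silently bounding products of \emph{dependent} random factors, which as you note requires fourth-moment control ($\|M_jP_j\|_{L_2}\leq \|M_j\|_{L_4}\|P_j\|_{L_4}$), not just $\|\phi_\ell\|_{L_2}=\bigO(n^{-(\ell-1)/2})$. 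Your hypercontractivity step — observing that $\chi_j$, $(H_n\chi)_j$, and each $\phi_{\ell,j}$ are polynomials of degree bounded in terms of $L$ alone in the i.i.d.\ Gaussian weight entries, so all $L_p$ norms are equivalent up to constants independent of $n$ — supplies exactly the missing upgrade and makes the paper's ``dominated by the $k=2$ contribution'' assertion rigorous, at no cost to the rate. In short: same route, but your version closes a genuine (if routine) gap in the paper's argument rather than merely reproducing it.
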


\begin{proof}
The results follows \cref{lemma:m-vanish}. The term $ R_n$ collects all terms containing coefficients of monomial $\eta^k$ with $k\ge 2$.
By Lemma~\ref{lemma:m-vanish}, for each $k\ge 2$ and $j$, $\|\phi_{k}\|_{L_2}=O(n^{-(k-1)/2})$; thus for fixed $L$ and $\eta\in I$, $R_n(\eta)$ and $ R_n'(\eta)$ are dominated by the $k=2$ contribution and are $O_{\mathbb L_2}(n^{-1/2})$ uniformly on $I$.
\end{proof}

The next result characterizes the convergence rate of the effective kernel $H_n$ to the infinite-width kernel $K$.
\begin{lemma}[Convergence rates for $\chi$ and $H_n$]\label{lem:baseline}
As $n\to\infty$,
\[
\max_{1\le i\le m}\big|f^{(0)}(x_i)\big|^2=O_{\mathbb L_2}(n^{-1}),\qquad H_n = \tfrac{L}{m}K + O_{\mathbb L_2}(n^{-1/2}),
\]
where the last equality holds element-wise.
\end{lemma}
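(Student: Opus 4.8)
The plan is to establish both estimates by direct second- and fourth-moment computations, exploiting the Gaussian conditional structure of a linear layer one step at a time, together with the fact that depth $L$ and sample size $m$ are fixed. The structural fact I would use repeatedly is the independence built into the architecture: the forward activation $a_{\ell-1,i}$ depends only on $W_0,\dots,W_{\ell-1}$, whereas the backward vector $b_{\ell+1}$ depends only on $W_{\ell+1},\dots,W_L,V$, so the two families are independent.

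For the first estimate I would condition on $a_{L,i}=W_L^{(0)}\cdots W_0 x_i$ and note that $f^{(0)}(x_i)=V^\top a_{L,i}$ is conditionally centered Gaussian with variance $n^{-2}\|a_{L,i}\|^2$, since $V\sim\normal(0,n^{-2}I_n)$. Gaussianity gives $\E[(f^{(0)}(x_i))^4\mid a_{L,i}]=3n^{-4}\|a_{L,i}\|^4$, so it remains to show $\E\|a_{L,i}\|^4=\bigO(n^2)$. This follows from a one-line layer recursion: conditionally on $a_{\ell-1,i}$, the squared norm $\|a_{\ell,i}\|^2=\|W_\ell a_{\ell-1,i}\|^2$ equals $n^{-1}\|a_{\ell-1,i}\|^2$ times a chi-square variable with $n$ degrees of freedom, whence $\E\|a_{\ell,i}\|^4=\tfrac{n+2}{n}\,\E\|a_{\ell-1,i}\|^4$; iterating $L$ times from the base case $\E\|a_{0,i}\|^4=\bigO(n^2)$ keeps the prefactor $(\tfrac{n+2}{n})^L=\bigO(1)$. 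Hence $\E[(f^{(0)}(x_i))^4]=\bigO(n^{-2})$, i.e. $\||f^{(0)}(x_i)|^2\|_{L_2}=\bigO(n^{-1})$; summing this over the fixed $m$ indices bounds the maximum.

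For the second estimate I would fix entries $(i,j)$, write $(H_n)_{ij}=\tfrac1m\sum_{\ell=1}^L Z_\ell$ with $Z_\ell=\|b_{\ell+1}\|^2\langle a_{\ell-1,i},a_{\ell-1,j}\rangle$, a product $Z_\ell=B\cdot A$ of the independent factors $B=\|b_{\ell+1}\|^2$ and $A=\langle a_{\ell-1,i},a_{\ell-1,j}\rangle$, and treat mean and variance separately. The mean is exact: by the $b$–$a$ independence and the identities $\E\|b_{\ell+1}\|^2=n^{-1}$ and $\E\langle a_{\ell-1,i},a_{\ell-1,j}\rangle=nK_{ij}$ (both from the same chi-square recursion), one gets $\E Z_\ell=K_{ij}$ and $\E(H_n)_{ij}=\tfrac Lm K_{ij}$. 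Thus the entire $L_2$ error equals the standard deviation of $(H_n)_{ij}$, and since $L$ is fixed it suffices to show $\mathrm{Var}(Z_\ell)=\bigO(n^{-1})$.

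The crux is this variance bound. By independence, $\mathrm{Var}(Z_\ell)=\mu_B^2\sigma_A^2+\mu_A^2\sigma_B^2+\sigma_A^2\sigma_B^2$ with $\mu_B=n^{-1}$ and $\mu_A=nK_{ij}$; forcing the three terms to be $\bigO(n^{-1})$ requires the two sharp estimates $\sigma_A^2=\mathrm{Var}(A)=\bigO(n)$ and $\sigma_B^2=\mathrm{Var}(B)=\bigO(n^{-3})$, each one power of $n$ better than the crude bound. For $B$ the explicit formula $\E\|b_{\ell+1}\|^4=(\tfrac{n+2}{n})^{L-\ell+1}n^{-2}$ gives $\mathrm{Var}(B)=n^{-2}\big[(1+2/n)^{L-\ell+1}-1\big]=\bigO(n^{-3})$. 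For $A$ the crude bound $\E A^2=\bigO(n^2)$ is insufficient, so I would instead derive, via the Gaussian (Isserlis) fourth moment of $W_\ell$, the second-moment recursion for the normalized Gram $q_\ell^{ij}=\langle a_{\ell,i},a_{\ell,j}\rangle/n$, namely $\mathrm{Var}(q_\ell^{ij})=(1+n^{-1})\,\mathrm{Var}(q_{\ell-1}^{ij})+n^{-1}\,\E[q_{\ell-1}^{ii}q_{\ell-1}^{jj}]+n^{-1}K_{ij}^2$, where the diagonal terms are $\bigO(1)$ by the same recursion applied with $i=j$. Since the base case is $\mathrm{Var}(q_0^{ij})=\bigO(n^{-1})$ and $L$ is fixed, this yields $\mathrm{Var}(q_\ell^{ij})=\bigO(n^{-1})$, i.e. $\mathrm{Var}(A)=\bigO(n)$. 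Combining gives $\mathrm{Var}(Z_\ell)=\bigO(n^{-1})$ and hence $(H_n)_{ij}=\tfrac Lm K_{ij}+O_{\mathbb L_2}(n^{-1/2})$. I expect the main obstacle to be exactly this simultaneous one-power sharpening: the coarse moment bounds cancel the leading $n^2$ order in $\E A^2$ and the $n^{-2}$ order in $\E B^2$ only down to the constant-order term, so the concentration rate hinges on tracking the coupled diagonal/off-diagonal recursion for $q_\ell$ carefully rather than on any single inequality.
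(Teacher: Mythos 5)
Your proof is correct and rests on the same underlying strategy as the paper's: layerwise conditional-Gaussian (chi-square) moment recursions, exploiting the independence of the forward vectors $a_{\ell-1,i}$ (functions of $W_0,\dots,W_{\ell-1}$) from the backward vector $b_{\ell+1}$ (a function of $W_{\ell+1},\dots,W_L,V$). However, your version is genuinely more complete at two points where the paper's proof is terse. First, for the claim $\max_i |f^{(0)}(x_i)|^2 = O_{\mathbb L_2}(n^{-1})$, the paper computes only $\E[f^{(0)}(x_i)^2]=\|x_i\|^2/n^2$ and stops; under the paper's stated convention that $O_{\mathbb L_2}$ asymptotics refer to the second moment of the displayed quantity, this claim is really a \emph{fourth}-moment bound on $f^{(0)}(x_i)$, which you correctly supply via the conditional Gaussian identity $\E[f^{(0)}(x_i)^4\mid a_{L,i}]=3n^{-4}\|a_{L,i}\|^4$ and the recursion $\E\|a_{\ell,i}\|^4=\tfrac{n+2}{n}\,\E\|a_{\ell-1,i}\|^4$. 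Second, for $H_n$, the paper establishes the $n^{-1/2}$ concentration of the normalized Gram entries $n^{-1}\langle a_{\ell-1,i},a_{\ell-1,j}\rangle$ by a telescoping conditional-variance argument (the same content as your law-of-total-variance recursion for $q_\ell^{ij}$, which you sharpen to an exact identity via Isserlis), but it never explicitly treats the fluctuation of the factor $\|b_{\ell+1}\|^2$ or the combination of the two independent fluctuations; your decomposition $\operatorname{Var}(BA)=\mu_B^2\sigma_A^2+\mu_A^2\sigma_B^2+\sigma_A^2\sigma_B^2$, together with the exact computation $\operatorname{Var}(\|b_{\ell+1}\|^2)=n^{-2}\bigl[(1+2/n)^{L-\ell+1}-1\bigr]=\bigO(n^{-3})$, closes exactly this gap, and your observation that $\E (H_n)_{ij}=\tfrac{L}{m}K_{ij}$ holds exactly (mirroring Lemma~\ref{lemma:m-gram}) correctly reduces the $L_2$ error to a pure variance bound. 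In short: same one-step conditional machinery, but your write-up is a fully rigorous version of the paper's sketchier argument rather than a different route, and the extra care is precisely where it is needed.
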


\begin{proof}
\emph{First claim.} For each $i$, conditionally on $a_{L,i}$, $f^{(0)}(x_i)=V^\top a_{L,i}$ is Gaussian with mean $0$ and variance $\tfrac{1}{n^2}\|a_{L,i}\|^2$ since $V\sim\mathcal N(0,n^{-2}I_n)$ is independent of $a_{L,i}$. Taking expectations and using isotropy of the $W_\ell$ (so $\mathbb E\|a_{L,i}\|^2=\|x_i\|^2$), we obtain $\mathbb E[f^{(0)}(x_i)^2]=\|x_i\|^2/n^2$, hence $|f^{(0)}(x_i)|^2=O_{\mathbb L_2}(n^{-1})$. Since $m$ is fixed, we can take the max over $i$. 

\emph{Second claim.} For $T_\ell\overset{def}{=} m^{-1}\|b_{\ell+1}\|^2 G_{\ell-1}$, independence of the ``top'' block ($b_{\ell+1}$) and the ``bottom'' block ($G_{\ell-1}$) implies $\mathbb E[T_\ell]=(1/m)K$ (as in Lemma~\ref{lemma:m-gram}). For any fixed $(i,j)$,
\[
(T_\ell)_{ij}=\frac{1}{m}\|b_{\ell+1}\|^2 \langle a_{\ell-1, i},a_{\ell-1, j}\rangle .
\]
Conditionally on the weights $W_{\ell -2} ... W_0$, $\langle a_{\ell-1, i},a_{\ell-1, j}\rangle$ is a sum of iid random variables with mean $n^{-1}\langle a_{\ell-2, i},a_{\ell-2, j}\rangle$. Therefore, 

$$\E \left[(n^{-1} \langle a_{\ell-1, i},a_{\ell-1, j}\rangle - n^{-1} \langle a_{\ell-2, i},a_{\ell-2, j}\rangle)^2 \mid W_{\ell -2} ... W_0 \right] = \bigO(n^{-1}).$$ 
Doing this recursively yields 
$$\E \left[(n^{-1} \langle a_{\ell-1, i},a_{\ell-1, j}\rangle - K_{ij})^2 \right] = \bigO(n^{-1}),$$ 
which concludes the proof.

\end{proof}

\begin{lemma}[Uniform convergence and strong convexity]\label{lem:uniform+sc}
Fix compact $I\subset[0,\infty)$. Then
\[
\sup_{\eta\in I}\big|\loss_n(\eta)-\loss_\infty(\eta)\big|=O_{\mathbb L_2}(n^{-1/2}),
\quad
\sup_{\eta\in I}\big|\partial_\eta\loss_n(\eta)-\partial_\eta\loss_\infty(\eta)\big|=O_{\mathbb L_2}(n^{-1/2}),
\]
and
\[
\inf_{\eta\in I}\partial_{\eta\eta}^2\loss_n(\eta)\ \xrightarrow{\mathbb L_2}\ 
\mu\ :=\ \frac{L^2}{m^3}\,y^\top K^2 y\ >0 .
\]
\end{lemma}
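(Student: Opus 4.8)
The plan is to derive all three claims from the one-step decomposition of \cref{lem:decomp} together with the baseline rates of \cref{lem:baseline}, reducing everything to coefficient-wise comparisons of two explicit quadratics in $\eta$. Write $g_n(\eta)=\chi-\eta H_n\chi$ and $g_\infty(\eta)=\chi_\infty-\eta\tfrac{L}{m}K\chi_\infty$ with $\chi_\infty=-y$, so that the quadratic part is $P_n(\eta):=\tfrac{1}{2m}\|g_n(\eta)\|^2$ and $\loss_\infty(\eta)=\tfrac{1}{2m}\|g_\infty(\eta)\|^2$. By \cref{lem:decomp}, $\loss_n=P_n+R_n$ where $R_n=\tfrac1m\langle g_n,\rho\rangle+\tfrac1{2m}\|\rho\|^2$ with $\rho_j(\eta)=\sum_{\ell\ge 2}\phi_{\ell,j}\eta^\ell$ and $\|\phi_{\ell,j}\|_{L_2}=\bigO(n^{-(\ell-1)/2})$ by \cref{lemma:m-vanish}. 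The same mechanism that gives $\sup_{\eta\in I}|R_n|$ and $\sup_{\eta\in I}|R_n'|$ of order $O_{\mathbb L_2}(n^{-1/2})$ also controls the second derivative, since $\rho_j''(\eta)=\sum_{\ell\ge 2}\ell(\ell-1)\phi_{\ell,j}\eta^{\ell-2}$ is again dominated by its $\ell=2$ term; hence $\sup_{\eta\in I}|R_n''|=O_{\mathbb L_2}(n^{-1/2})$ as well. It therefore suffices to control $P_n$ and its first two derivatives against $\loss_\infty$.

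Because $P_n$ and $\loss_\infty$ are polynomials of degree $\le 2$ in $\eta$ and $I$ is compact, uniform-in-$\eta$ estimates reduce to $\mathbb L_2$ bounds on the three coefficients: I compare $\|\chi\|^2$ with $\|\chi_\infty\|^2$, $\langle\chi,H_n\chi\rangle$ with $\langle\chi_\infty,\tfrac{L}{m}K\chi_\infty\rangle$, and $\|H_n\chi\|^2$ with $\|\tfrac{L}{m}K\chi_\infty\|^2$. The key algebraic step is the telescoping identity
$$H_n\chi-\tfrac{L}{m}K\chi_\infty=H_n(\chi-\chi_\infty)+\big(H_n-\tfrac{L}{m}K\big)\chi_\infty,$$
which, combined with \cref{lem:baseline} ($\chi-\chi_\infty=O_{\mathbb L_2}(n^{-1/2})$ and $H_n-\tfrac{L}{m}K=O_{\mathbb L_2}(n^{-1/2})$ entrywise), yields $H_n\chi-\tfrac{L}{m}K\chi_\infty=O_{\mathbb L_2}(n^{-1/2})$. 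Writing each coefficient difference as an inner product of a ``difference'' factor of order $\bigO(n^{-1/2})$ against a ``sum'' factor of order $\bigO(1)$ — e.g.\ $\|H_n\chi\|^2-\|\tfrac{L}{m}K\chi_\infty\|^2=\langle H_n\chi-\tfrac{L}{m}K\chi_\infty,\,H_n\chi+\tfrac{L}{m}K\chi_\infty\rangle$ — gives the $O_{\mathbb L_2}(n^{-1/2})$ rate for the loss value and its $\eta$-derivative, which proves the first two claims.

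For the strong-convexity claim, note that $P_n''=\tfrac1m\|H_n\chi\|^2$ is constant in $\eta$, so $\partial_{\eta\eta}^2\loss_n=\tfrac1m\|H_n\chi\|^2+R_n''$ and hence $\inf_{\eta\in I}\partial_{\eta\eta}^2\loss_n=\tfrac1m\|H_n\chi\|^2+O_{\mathbb L_2}(n^{-1/2})$. The coefficient comparison of the previous paragraph already gives $\tfrac1m\|H_n\chi\|^2\xrightarrow{\mathbb L_2}\tfrac1m\|\tfrac{L}{m}Ky\|^2=\tfrac{L^2}{m^3}\,y^\top K^2y=\mu$, using $K=K^\top$ and $\chi_\infty=-y$; positivity of $\mu$ is exactly the hypothesis $Ky\neq 0$.

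The hard part is that every estimate above involves $\mathbb L_2$ norms of \emph{products} of random quantities (such as $H_n(\chi-\chi_\infty)$ and the ``difference $\times$ sum'' inner products), and these are not controlled by $\mathbb L_2$ bounds on the individual factors alone. I would close each product by Cauchy--Schwarz at the level of random variables, e.g.\ $\E\,|\langle u_n,v_n\rangle|^2\le\big(\E\|u_n\|^4\big)^{1/2}\big(\E\|v_n\|^4\big)^{1/2}$, which requires fourth-moment control of the entries of $H_n$ and of $\chi$. This is available because, with $d$ and $m$ fixed, each entry of $H_n$ is a normalized product of Gaussian quadratic forms ($\|b_{\ell+1}\|^2$ and $\langle a_{\ell-1,i},a_{\ell-1,j}\rangle$) and each $\chi_i$ is a fixed polynomial in jointly Gaussian weights; all such quantities have finite moments of every order, so the baseline rates of \cref{lem:baseline} in fact hold in every $\mathbb L_p$. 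With uniform-in-$n$ boundedness of the relevant $\mathbb L_4$ norms, Cauchy--Schwarz propagates the $n^{-1/2}$ rate through every product without loss.
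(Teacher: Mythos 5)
Your proof follows essentially the same route as the paper's: the one-step decomposition of \cref{lem:decomp}, coefficient-wise comparison of the two quadratics in $\eta$ using the rates $\chi=-y+O_{\mathbb L_2}(n^{-1/2})$ and $H_n=\tfrac{L}{m}K+O_{\mathbb L_2}(n^{-1/2})$ from \cref{lem:baseline}, and the observation that the quadratic part has $\eta$-independent second derivative $\tfrac1m\|H_n\chi\|^2\to\mu$. If anything you are more careful than the paper on two points it leaves implicit --- the bound $\sup_{\eta\in I}|R_n''(\eta)|=O_{\mathbb L_2}(n^{-1/2})$ (not stated in \cref{lem:decomp} but needed for the curvature claim) and the fact that $\mathbb L_2$ rates on individual factors do not by themselves control products in $\mathbb L_2$, which you correctly close via Cauchy--Schwarz with $\mathbb L_4$ bounds (noting only that ``finite moments of all orders'' should be sharpened to the standard equivalence of $\mathbb L_p$ norms for fixed-degree polynomials of Gaussians, so that the $n^{-1/2}$ \emph{rates}, not just finiteness, transfer to $\mathbb L_4$).
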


\begin{proof}
Using \eqref{eq:decomp} and expanding the quadratic part,
\[
\loss_n(\eta)-\loss_\infty(\eta)
=\frac{1}{2m}\Big(\|\chi\|^2-\|y\|^2 - 2\eta\big[\chi^\top H_n \chi-y^\top\tfrac{L}{m}Ky\big]
+ \eta^2\big[\chi^\top H_n^2\chi - y^\top\tfrac{L^2}{m^2}K^2 y\big]\Big)+R_n(\eta).
\]
By Lemma~\ref{lem:baseline}, $\E \max_i|f^0(x_i)|^2=\bigO(n^{-1})$, hence $\chi=-y+O_{\mathbb L_2}(n^{-1/2})$. Also $H_n=(L/m)K+O_{\mathbb L_2}(n^{-1/2})$ coordinate wise (and thus in operator norm). Therefore each bracketed term above is $O_{\mathbb L_2}(n^{-1/2})$ uniformly on $I$, and $R_n(\eta)=O_{\mathbb L_2}(n^{-1/2})$ by Lemma~\ref{lem:decomp}, which proves the first result. Differentiating the decomposition gives the derivative bound by the same argument. Finally,
\[
\partial_{\eta\eta}^2\loss_n(\eta)=\frac{1}{m}\,\chi^\top H_n^2 \chi + R_n''(\eta),
\]
and the right-hand side converges in $\mathbb L_2$ to $(1/m)\,y^\top ((L/m)K)^2 y$, uniformly on $I$.
\end{proof}

\begin{lemma}[Rates for the argmin and for the loss at the argmin]\label{lemma:rates-full}
Let $I\subset(0,\infty)$ be any compact interval containing $\eta_\infty^{(1)}$. Let $\eta_n^{(1)}\in\arg\min_{\eta\in I}\loss_n(\eta)$. Then, as $n\to\infty$,
\[
\eta_n^{(1)}-\eta_\infty^{(1)} = O_{\mathbb P}(n^{-1/2}),
\qquad
\loss_n(\eta_n^{(1)})-\loss_\infty(\eta_\infty^{(1)}) = O_{\mathbb P}(n^{-1/2}),
\]
and
\[
\loss_\infty(\eta_n^{(1)})-\loss_\infty(\eta_\infty^{(1)})=\frac{\mu}{2}\,(\eta_n^{(1)}-\eta_\infty^{(1)})^2  = \bigO_{\mathbb P}(n^{-1}).
\]
Consequently, the loss gap at the argmin is dominated by the \emph{uniform} $n^{-1/2}$ error of $\loss_n$ (the shift of the minimizer contributes only $O_{\mathbb P}(n^{-1})$).
\end{lemma}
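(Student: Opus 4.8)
The plan is to upgrade the consistency already established in Lemma~\ref{lemma:m-argmin} into a sharp rate by passing through the first-order stationarity condition of $\loss_n$ together with the exact quadratic structure of $\loss_\infty$. Write $\hat\eta := \eta_n^{(1)}$ and $\eta_* := \eta_\infty^{(1)}$. First I would record that, since $\loss_\infty$ is the convex quadratic from \eqref{eq:limit-loss-m} with $\partial^2_{\eta\eta}\loss_\infty \equiv \mu = \tfrac{L^2}{m^3}\|Ky\|^2 > 0$ (positive precisely because $Ky\neq 0$), one has the exact identities $\partial_\eta\loss_\infty(\eta) = \mu(\eta-\eta_*)$ and $\loss_\infty(\eta)-\loss_\infty(\eta_*) = \tfrac{\mu}{2}(\eta-\eta_*)^2$ for every $\eta$. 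We may assume $\eta_*$ is interior to $I$ (otherwise replace $I$ by a slightly larger interval; this does not affect the rate).

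The core step is the derivative argument. By Lemma~\ref{lemma:m-argmin}, $\hat\eta \xrightarrow{\mathbb P}\eta_*$, so the event that $\hat\eta$ lies in the interior of $I$ has probability tending to $1$. On this event the stationarity condition $\partial_\eta\loss_n(\hat\eta) = 0$ holds, and subtracting it from the exact identity for $\loss_\infty$ at $\hat\eta$ gives
\[
\mu(\hat\eta-\eta_*) = \partial_\eta\loss_\infty(\hat\eta) - \partial_\eta\loss_n(\hat\eta),
\]
whose right-hand side is bounded in absolute value by $\sup_{\eta\in I}|\partial_\eta\loss_n(\eta)-\partial_\eta\loss_\infty(\eta)| = O_{\mathbb L_2}(n^{-1/2})$ from Lemma~\ref{lem:uniform+sc}. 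Dividing by $\mu>0$ and using that this sup has $L_2$ norm of order $n^{-1/2}$ (hence is $O_{\mathbb P}(n^{-1/2})$ by Chebyshev), while $\hat\eta$ stays in the compact $I$ off the event, yields $|\hat\eta-\eta_*| = O_{\mathbb P}(n^{-1/2})$, the first claimed rate.

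The loss gaps then follow by bookkeeping. The exact quadratic identity gives the third statement directly,
\[
\loss_\infty(\hat\eta)-\loss_\infty(\eta_*) = \tfrac{\mu}{2}(\hat\eta-\eta_*)^2 = O_{\mathbb P}(n^{-1}),
\]
and for the second claim I would split
\[
\loss_n(\hat\eta)-\loss_\infty(\eta_*) = \big[\loss_n(\hat\eta)-\loss_\infty(\hat\eta)\big] + \big[\loss_\infty(\hat\eta)-\loss_\infty(\eta_*)\big],
\]
where the first bracket is at most $\sup_{\eta\in I}|\loss_n-\loss_\infty| = O_{\mathbb P}(n^{-1/2})$ by Lemma~\ref{lem:uniform+sc} and the second is $O_{\mathbb P}(n^{-1})$ by the above. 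Hence the sum is $O_{\mathbb P}(n^{-1/2})$, and the uniform loss error dominates the $O_{\mathbb P}(n^{-1})$ contribution of the minimizer shift, exactly as asserted.

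The main obstacle is securing the $n^{-1/2}$ rate for $\hat\eta-\eta_*$ rather than the crude $n^{-1/4}$ that a value-comparison argument would give: from $\loss_\infty(\hat\eta)-\loss_\infty(\eta_*)\le 2\sup_{\eta\in I}|\loss_n-\loss_\infty|$ and quadratic growth one only extracts $(\hat\eta-\eta_*)^2 = O_{\mathbb P}(n^{-1/2})$. The sharp rate genuinely requires the stationarity route, and therefore both the uniform control of $\partial_\eta\loss_n$ supplied by Lemma~\ref{lem:uniform+sc} and the verification (via prior consistency) that $\hat\eta$ is interior so that $\partial_\eta\loss_n(\hat\eta)=0$. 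The remaining care is routine: checking that every $O_{\mathbb L_2}(n^{-1/2})$ bound transfers to $O_{\mathbb P}(n^{-1/2})$ and that the behavior off the interiority event is negligible.
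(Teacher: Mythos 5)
Your proof is correct and travels the same stationarity-plus-uniform-derivative-control route as the paper, but with a small and arguably cleaner variation in where the curvature comes from. The paper applies the mean-value theorem to $\loss_n'$ around $\eta_\infty^{(1)}$, which forces it to invoke the with-high-probability bound $\inf_{\eta\in I}\loss_n''(\eta)\ge c$ from Lemma~\ref{lem:uniform+sc} before bounding $|\loss_n'(\eta_\infty^{(1)})|$ by the uniform derivative error; you instead use the exact identity $\partial_\eta\loss_\infty(\eta)=\mu(\eta-\eta_\infty^{(1)})$ and evaluate the derivative gap at the random point $\hat\eta$, so the curvature constant $\mu=\frac{L^2}{m^3}\,y^\top K^2 y$ (your $\frac{L^2}{m^3}\|Ky\|^2$ is the same quantity by symmetry of $K$) is deterministic and the random second-derivative lower bound is never needed. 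Both arguments consume the same key input, namely the $O_{\mathbb L_2}(n^{-1/2})$ bound on $\sup_{\eta\in I}|\partial_\eta\loss_n-\partial_\eta\loss_\infty|$ from Lemma~\ref{lem:uniform+sc}, and both deliver the identical rate. Two further points in your favor: you make explicit the interiority step needed to assert $\partial_\eta\loss_n(\hat\eta)=0$ (the paper's equation $0=\loss_n'(\eta_n^{(1)})$ tacitly requires the same, relying on the consistency from Lemma~\ref{lemma:m-argmin}), and your loss-gap split $\loss_n(\hat\eta)-\loss_\infty(\eta_*)=\bigl[\loss_n(\hat\eta)-\loss_\infty(\hat\eta)\bigr]+\bigl[\loss_\infty(\hat\eta)-\loss_\infty(\eta_*)\bigr]$ is an exact identity with the first bracket controlled uniformly over $I$, whereas the paper's displayed decomposition (first bracket evaluated at $\eta_\infty^{(1)}$) is only an approximate identity whose error is itself $O_{\mathbb P}(n^{-1/2})$ --- harmless for the conclusion, but your version is tighter bookkeeping. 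Your closing observation that a pure value-comparison argument would yield only an $n^{-1/4}$ rate is also correct, and it accurately identifies why the stationarity route is essential to this lemma.
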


\begin{proof}
By Lemma~\ref{lem:uniform+sc}, there exists (with high probability) a constant $c>0$ such that $\inf_{\eta\in I}\loss_n''(\eta)\ge c$ for all large $n$. Using the mean-value form of the optimality condition,
\[
0=\loss_n'(\eta_n^{(1)})=\loss_n'(\eta_\infty^{(1)})+\loss_n''(\tilde\eta_n)\,(\eta_n^{(1)}-\eta_\infty^{(1)})
\]
for some $\tilde\eta_n$ between $\eta_\infty^{(1)}$ and $\eta_n^{(1)}$. Hence
\[
|\eta_n^{(1)}-\eta_\infty^{(1)}|
\;\le\; \frac{1}{c}\,|\loss_n'(\eta_\infty^{(1)})|
\;\le\; \frac{1}{c}\Big(\sup_{\eta\in I}\big|\loss_n'(\eta)-\loss_\infty'(\eta)\big|\Big).
\]
Using the fact that $\sup_{\eta\in I}|\loss_n'(\eta)-\loss_\infty'(\eta)|=O_{\mathbb L_2}(n^{-1/2})$ by Lemma~\ref{lem:uniform+sc} yields $\eta_n^{(1)}-\eta_\infty^{(1)}=O_{\mathbb P}(n^{-1/2})$.

For the loss at the argmin, write
\[
\loss_n(\eta_n^{(1)})-\loss_\infty(\eta_\infty^{(1)})
=\underbrace{\big(\loss_n(\eta_\infty^{(1)})-\loss_\infty(\eta_\infty^{(1)})\big)}_{O_{\mathbb P}(n^{-1/2})}
+ \underbrace{\big(\loss_\infty(\eta_n^{(1)})-\loss_\infty(\eta_\infty^{(1)})\big)}_{\text{shift term}} .
\]
The first term is $O_{\mathbb P}(n^{-1/2})$ by Lemma~\ref{lem:uniform+sc}. For the shift term, a Taylor expansion of $\loss_\infty$ around $\eta_\infty^{(1)}$ gives
\[
\loss_\infty(\eta_n^{(1)})-\loss_\infty(\eta_\infty^{(1)})=\tfrac{1}{2}\loss_\infty''(\eta_\infty^{(1)})\,(\eta_n^{(1)}-\eta_\infty^{(1)})^2
=\frac{\mu}{2}\,(\eta_n^{(1)}-\eta_\infty^{(1)})^2,
\]
and since $\eta_n^{(1)}-\eta_\infty^{(1)}=O_{\mathbb P}(n^{-1/2})$, this is $O_{\mathbb P}(n^{-1})$. So the dominant term is the $\bigO_{\mathbb P}(n^{-1/2})$ above, which concludes the proof. 
\end{proof}

\subsection{Failure of LR Transfer under Standard Parametrizations}
We consider Standard Parametrization where the different with $\mu$P lies only in how the head $V$ is initialized: $V\sim\mathcal N(0,n^{-1})$, while
$W_0\sim\mathcal N(0,d^{-1})$ and $W_\ell\sim\mathcal N(0,n^{-1})$ for $\ell=1,\dots,L$. For the learning rate, we assume $c=0$, i.e. the learning rate is parametrized as a constant $\eta>0$.

We provide the proof for $m=1$. Extending the result to $m\geq 1$ is straightforward. Let $(x,y)$ be the training datapoint. At $t=1$, the output is given by 

$$f^{(1)}(x) = V^\top \left[\prod_{\ell = 1}^L \left(W^{(0)}_\ell - \eta\,  \chi \, b_{\ell+1} a_{\ell - 1}^\top\right)\right] W_0 x,$$

where $\chi = f^{(0)}(x) - y$, which can be written as $f^{(1)}(x) = f^{(0)}(x) + \sum_{\ell = 1}^L \phi_l \eta^\ell$.

With SP, it is straightforward to see that all coefficients $\phi_{\ell}$ are of order $\sqrt{n}$ in $L_2$. It suffices to normalize $V$ by $\sqrt{n}$ and we're essentially back to the case of $\mu P$ with the same asymptotic analysis (\cref{lemma:tri_concentration}).

Expressing the loss function as $\loss^{(1)}_n(\eta) = (f^{(1)}(x) - y)^2 = (a_0 + a_1 \eta + \dots + a_L \eta^L)^2$, it is easy to check that this polynomial satisfies the conditions in \cref{lemma:sp_convergence}, which yields the result.

\newpage
\section{Proofs for \cref{sec:general_t}}\label{app:proofs_general_t}

\textbf{Lemma \ref{lemma:non_zero_coef_step2}. }[Non-linear behavior after step $t=2$]
\emph{The limit of the coefficient $\phi_L(\eta)$ can be expressed as 
$$
\lim_{n\to \infty} \phi_{L}(\eta) = (-m)^L \sum_{1\leq i_1, i_2 ,\dots, i_L \leq m} \zeta(i_1, i_2, \dots, i_L) \frac{\langle x_{i_1}, x\rangle}{d},
$$
where 
$$
\zeta(i_1, i_2, \dots, i_L) = \left(\prod_{j=1}^L \left(f_\infty^{(1)}(x_{i_j}) - y_{i_j}\right)\right) \left(\prod_{j=2}^L f_\infty^{(1)}(x_{i_j}) \right),
$$
with $f^{(1)}_\infty(x) = \eta \, \frac L m \sum_{i=1}^m y_i \frac{\langle x_i, x \rangle}{d}$.}

The proof of \cref{lemma:non_zero_coef_step2} is straightforward by taking the infinite-width limit.

From \cref{lemma:non_zero_coef_step2}, we obtain that $\phi_L(\eta)$ converges to a polynomial of degree $2L -1$ in $\eta$ as $n$ goes to infinity. Adding the $\eta^L$ term in $f^{(2)}$, we obtain that $f^{(2)}$ converges to a polynomial that has a non-zero term of degree $3L -1$. Therefore, in contrast to step $1$, step 2 involves more complex dependencies in $\eta$, and a full analysis of the minimum is non-trivial in this case. This complexity should be expected to increase with step $t$ as gradient dependencies on $\eta$ become more complex with $t$.

The next result shows convergence of $f^{(t)}(x)$ to a limiting polynomial $P^{(t)}$, with deterministic coefficients. This is a straightforward result from the convergence of constants in a Tensor Program.
\begin{thm}\label{thm:convergence_step_t}
Let $t \geq 1$ and $x \in \reals^d$. Then, for any $K>0$, there exists a polynomial $f^{(t)}_\infty$ with deterministic coefficients such that
$$
\lim_{n \to \infty} \sup_{\eta \in [0, K]}|f^{(t)}(x) - f^{(t)}_\infty(\eta)| = 0. \quad a.s.
$$
\end{thm}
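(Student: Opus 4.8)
The plan is to reduce the uniform convergence statement to two separate facts: first, that $f^{(t)}(x)$ is, for every fixed $n$, a polynomial in $\eta$ whose degree is bounded by a constant $d_t$ depending only on $t$ and $L$ (and \emph{not} on $n$); and second, that each of its random coefficients converges almost surely to a deterministic limit. Once both are in hand, the uniform convergence on $[0,K]$ follows immediately from the triangle inequality, since a polynomial of bounded degree is controlled on a compact set by its coefficients.

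First I would establish the polynomial structure and the degree bound by induction on $t$, writing $W_\ell^{(t)}$ as a matrix whose entries are polynomials in $\eta$. The base case $t=0$ is trivial since the initialization is constant in $\eta$. For the inductive step, note from \cref{eq:gd} that $W_\ell^{(t+1)} = W_\ell^{(t)} - \eta\, m^{-1}\sum_i \chi_i^{(t)}\, b_{\ell+1}^{(t)}\otimes a_{\ell-1,i}^{(t)}$, and each of $\chi_i^{(t)}$, $b_{\ell+1}^{(t)}$, $a_{\ell-1,i}^{(t)}$ is a polynomial in $\eta$ by the inductive hypothesis (being products of the $W_\ell^{(t)}$ and $V$). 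Hence the entries of $W_\ell^{(t+1)}$ are polynomials, and if $\delta_t$ denotes the maximal entrywise degree at step $t$, the update yields the recursion $\delta_{t+1} \le 1 + (2L-1)\delta_t$ with $\delta_0 = 0$. This is a deterministic, $n$-independent degree bound, so $f^{(t)}(x) = V^\top W_L^{(t)}\cdots W_1^{(t)} W_0 x = \sum_{k=0}^{d_t} c_k^{(n)}\eta^k$ with $d_t \le L\delta_t$ fixed.

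The crux of the argument — and the step I expect to be the main obstacle — is showing that each coefficient $c_k^{(n)}$ converges almost surely to a deterministic constant $c_k^\infty$. The coefficients are fixed polynomial combinations of scalar quantities of the form $V^\top(\cdots)W_0 x$ together with inner products $\langle a_{\cdot,i}, a_{\cdot,j}\rangle$ and norms $\|b_\cdot\|^2$ built from the initialization weights. I would verify that the forward pass, the backward pass, and the GD updates of \cref{eq:gd} can be written as a Tensor Program, so that each $c_k^{(n)}$ is a Tensor-Program scalar; the Master Theorem of \citet{yang2021tensor} then yields almost-sure convergence of each such scalar to a deterministic limit $c_k^\infty$. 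The delicate points here are checking that $\mu$P places every quantity at the correct scale so that the limits are finite (possibly zero, as already seen in \cref{lemma:m-vanish}), and that the number of distinct scalars entering the coefficients stays finite and $n$-independent, which it does because $t$, $L$, and $d_t$ are fixed. We then define $f^{(t)}_\infty(\eta) := \sum_{k=0}^{d_t} c_k^\infty \eta^k$.

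Finally I would assemble the uniform bound. On the almost-sure event where $c_k^{(n)}\to c_k^\infty$ for every $k\in\{0,\dots,d_t\}$ (an intersection of finitely many probability-one events, hence itself of probability one),
\[
\sup_{\eta\in[0,K]}\bigl|f^{(t)}(x) - f^{(t)}_\infty(\eta)\bigr|
\;\le\; \sum_{k=0}^{d_t} \bigl|c_k^{(n)} - c_k^\infty\bigr|\,K^k \;\xrightarrow[n\to\infty]{}\; 0,
\]
since the sum has finitely many terms, each tending to zero. This establishes the claimed almost-sure uniform convergence on $[0,K]$ and identifies the limiting polynomial as $f^{(t)}_\infty$.
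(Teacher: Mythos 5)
Your proposal is correct and follows essentially the same route as the paper's proof, which likewise rests on observing that $f^{(t)}(x)$ is a polynomial in $\eta$ with Tensor-Program-expressible coefficients and then invoking the Master Theorem (Theorem 7.4) of \citet{yang2021tensor} for almost-sure convergence of those scalars. You simply make explicit what the paper leaves implicit --- the induction giving an $n$-independent degree bound (your recursion $\delta_{t+1}\le 1+(2L-1)\delta_t$ is consistent with the paper's step-2 accounting) and the triangle-inequality step upgrading coefficientwise convergence to uniform convergence on $[0,K]$.
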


\begin{proof}
Let $t \geq 1$ and $x \in \reals^d$. $f^{(t)}(x)$ is a polynomial in $\eta$ with coefficients that can be expressed via the Tensor Program framework. The convergence follows from Theorem 7.4 in \cite{yang2021tensor}.
\end{proof}
Note that the convergence can also be made uniform in input $x$ living in compact sets. This is not useful here since we consider a finite training dataset.

We now state the formal LR transfer result and prove it.

\begin{thm}[HP Transfer for general $t$] 
Let $K = \left(\frac{\langle x_i, x_j \rangle}{d}\right)_{1\leq i,j\leq m}$ and $y = (y_1, y_2, \dots, y_m)^\top \in \reals^m$, and assume that $K y \neq 0$. 
Let $f^{(t)}_\infty$ be the limiting polynomial (in $\eta$) of $f^{(t)}(x)$ from the result above. Then, $\loss^{(t)}_n(\eta)$ converges almost surely to $\loss^{(t)}_\infty(\eta) = \frac{1}{2m} \sum_{i=1}^m (f^{(t)}_{\infty}(\eta) - y_i)^2$ uniformly over $\eta$ in some arbitrary compact set. Moreover, there exists \underbar{$\eta$} $, \bar{\eta}>0$ such that $\operatorname{argmin}_{\eta \in [0,\infty)} f_\infty^{(t)} \subset [$\underbar{$\eta$}$, \bar{\eta}]$.

Moreover, assume that $\loss^{(t)}_\infty$ has a unique minimizer $\eta^{(t)}_\infty$, let $\gamma \gg \eta^{(t)}_\infty$ be an arbitrarily large constant, and let $\eta^{(t)}_n \in \operatorname{argmin}_{\eta \in [0,\gamma]} \loss_n^{(t)}$.  We have that

$$\lim_{n \to \infty} \eta^{(t)}_n = \eta^{(t)}_\infty, \quad a.s.$$
\end{thm}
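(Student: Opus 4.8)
The plan is to establish the three claims in sequence, leaning on the Tensor Programs convergence already recorded in \cref{thm:convergence_step_t} and then running an argmin-consistency argument for the limiting polynomial loss.

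\textbf{Step 1 (polynomial structure and coefficient convergence).} First I would show by induction on $t$ that each weight matrix $W_\ell^{(t)}$ is a matrix-valued polynomial in $\eta$ whose coefficients are built only from the fixed initialization $\{W_0,(W_\ell^{(0)})_\ell,V\}$ and the fixed data. The base case is immediate, and the inductive step follows because $\nabla_{W_\ell}\loss$ is a polynomial in the current weights (products of the $W_{\ell'}^{(s)}$, $V$, and $x_i$), so substituting the inductive hypothesis keeps everything polynomial in $\eta$. Consequently $f^{(t)}(x)=V^\top\big(\prod_\ell W_\ell^{(t)}\big)W_0x$ is a scalar polynomial in $\eta$ with random coefficients depending only on initialization. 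Each coefficient is a finite sum of products of the scalar quantities tracked by the Tensor Programs framework, so almost sure convergence to deterministic limits is exactly \cref{thm:convergence_step_t}, which invokes the Master Theorem of \cite{yang2021tensor}. This yields the limiting polynomial $f_\infty^{(t)}$ and settles claim (1).

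\textbf{Step 2 (loss convergence and localization of the argmin).} Since $\loss_n^{(t)}(\eta)=\frac{1}{2m}\sum_i(f^{(t)}(x_i)-y_i)^2$ is a fixed continuous function of the finitely many outputs $f^{(t)}(x_i)$, the a.s.\ uniform-on-compacts convergence from Step 1 propagates to $\loss_n^{(t)}\to\loss_\infty^{(t)}$ a.s., uniformly on compact $\eta$-sets. To localize the minimizers of $\loss_\infty^{(t)}$ over $[0,\infty)$, I would prove two bounds. The upper bound $\bar\eta$ comes from coercivity: the linear coefficient of $f_\infty^{(t)}(x_i;\cdot)$ equals $t\,\frac{L}{m}(Ky)_i$ — the linear-in-$\eta$ term of $f^{(t)}$ is $t$ times that of $f^{(1)}$ (at $\eta=0$ all GD iterates coincide with initialization, so $\partial_\eta W_\ell^{(t)}|_0=-t\,\nabla_{W_\ell}\loss^{(0)}$), and the $t=1$ limit is given by \cref{lemma:m-vanish}. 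Since $Ky\neq0$, some $f_\infty^{(t)}(x_i;\cdot)$ is nonconstant, so $\loss_\infty^{(t)}$ has even degree with strictly positive leading coefficient and $\loss_\infty^{(t)}(\eta)\to+\infty$. The lower bound $\underline\eta>0$ comes from
\begin{equation*}
\tfrac{d}{d\eta}\loss_\infty^{(t)}(0)=\frac{1}{m}\sum_{i=1}^m(-y_i)\,t\,\tfrac{L}{m}(Ky)_i=-\frac{tL}{m^2}\,y^\top K y<0,
\end{equation*}
where $y^\top Ky>0$ uses $K\succeq0$ and $Ky\neq0$. Strict negativity of the derivative at $0$ plus continuity forces $\loss_\infty^{(t)}$ to be strictly decreasing on some $[0,\underline\eta]$, so no minimizer lies in $[0,\underline\eta)$. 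This proves claim (2).

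\textbf{Step 3 (argmin consistency).} Assuming $\loss_\infty^{(t)}$ has a unique minimizer $\eta_\infty^{(t)}$ and fixing compact $I\ni\eta_\infty^{(t)}$, I would run the standard M-estimation argument on each realization in the probability-one event where the uniform convergence of Steps 1--2 holds. Compactness of $I$ gives subsequential limits of $\eta_n^{(t)}$; for any such limit $\eta_\ast$, uniform convergence yields $\loss_\infty^{(t)}(\eta_\ast)=\lim_k\loss_{n_k}^{(t)}(\eta_{n_k}^{(t)})\le\lim_k\loss_{n_k}^{(t)}(\eta_\infty^{(t)})=\loss_\infty^{(t)}(\eta_\infty^{(t)})$, so $\eta_\ast$ minimizes $\loss_\infty^{(t)}$ and hence equals $\eta_\infty^{(t)}$ by uniqueness. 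Since every subsequential limit is $\eta_\infty^{(t)}$, the whole sequence converges, giving $\eta_n^{(t)}\to\eta_\infty^{(t)}$ a.s.

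\textbf{Main obstacle.} The genuinely delicate part is the coefficient convergence in Step 1: one must verify that every coefficient of the $\eta$-polynomial $f^{(t)}(x)$ is expressible as a legitimate scalar in a Tensor Program, so that the Master Theorem applies with \emph{almost sure} (not merely $\mathbb L_2$) convergence, while the number and combinatorial complexity of these coefficients grow rapidly with $t$ (cf.\ the degree $2L(L+1)$ and the nontrivial surviving coefficient of \cref{lemma:non_zero_coef_step2}). Once this convergence, the coercivity, and the derivative-sign computation are in hand, Steps 2--3 are routine.
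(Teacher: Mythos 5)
Your proposal is correct and takes essentially the same route as the paper: almost sure coefficient convergence via the Tensor Programs Master Theorem (\cref{thm:convergence_step_t}), coercivity of the limiting squared loss together with the derivative $-\tfrac{tL}{m^2}\,y^\top Ky<0$ at $\eta=0$ to trap the argmin in $[\,$\underbar{$\eta$}$,\bar\eta]$ (the paper's \cref{lemma:derivative_at_zero}), and a deterministic uniform-convergence-plus-uniqueness argmin argument (which the paper packages as \cref{thm:stability_argmin}). Your recursion $\partial_\eta W_\ell^{(t)}\big|_{\eta=0}=-t\,\nabla_{W_\ell}\loss^{(0)}$ is an equivalent, slightly cleaner derivation of the linear coefficient than the paper's product expansion, and your explicit nonconstancy check (some $f^{(t)}_\infty(x_i;\cdot)$ has nonzero linear coefficient since $Ky\neq 0$) makes precise the paper's brief claim that the leading coefficient of $\loss^{(t)}_\infty$ is positive.
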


\begin{proof}
From \cref{thm:convergence_step_t}, we know that $f^{(t)}(x)$ converges almost surely to $f^{(t)}_\infty$ on any compact set. The convergence of $\loss^{(t)}$ follows. 

Now looking at the limiting loss $\loss^{(t)}_\infty$ as a polynomial in $\eta$, the leading monomial has positive coefficient because of the squared loss. Therefore $\lim_{\eta \to \infty} \loss_\infty^{(t)}(\infty) = \infty$ which implies that there exists $\bar{\eta}>0$ such that $\operatorname{argmin}_{\eta \in [0, \infty)]} \loss_\infty^{(t)} \subset [0,\bar{\eta}]$.

Now, let us prove the existence of \underbar{$\eta$}. Observe that $\loss_\infty^{(t)}(0) = \frac{1}{2m}\sum_{i=1}^m y_i^2 > 0$.  Moreover, from \cref{lemma:derivative_at_zero}, we have that

$$
\frac{\partial \loss_\infty^{(t)}}{\partial \eta}\Big|_{\eta=0} = \frac{1}{m} \sum_{i=1}^m \frac{t\, L}{m} \sum_{j=1}^m y_j \frac{\langle x_j, x_i \rangle}{d} (-y_i) = -\frac{t\, L}{m^2} \, y^\top K y.
$$
Under the assumption that $K y \neq 0$, we have $\frac{\partial \loss_\infty^{(t)}}{\partial \eta}\Big|_{\eta=0} <0$. As a result, by continuity of $\loss_\infty^{(t)}$ with respect to $\eta$, there exists a neighborhood of $\eta=0$ that does not contain the minimizer of $\loss_\infty^{(t)}$. In other words, there exists \underbar{$\eta$}$ > 0$ such that $(\operatorname{argmin}_{\eta \in [0,\infty)}\loss_{\infty}^{(t)}) \cap [0,$\underbar{$\eta$}$) = \emptyset$.\\

Finally, under the assumption that $\loss_\infty^{(t)}$ has a unique minimizer in $(0,\infty)$, the convergence result follows from \cref{thm:stability_argmin}.

\end{proof}

The next lemma characterizes the derivative of the infinite-width polynomial limit $f^{(t)}_\infty$ at $\eta=0$. It is used in the proof of LR transfer for general t.
\begin{lemma}[Derivative of $f^{(t)}$ at $\eta=0$]\label{lemma:derivative_at_zero}
Let $x \in \reals^d$ and $t\geq 1$. We have the following
$$
\frac{\partial f^{(t)}_\infty}{\partial \eta}\Big|_{\eta=0} = \lim_{n\to\infty}\frac{\partial f^{(t)}}{\partial \eta}\Big|_{\eta=0} = \frac{t\, L}{m} \sum_{i=1}^m y_i\, \frac{\langle x_i, x\rangle}{d}, \quad a.s.
$$
\end{lemma}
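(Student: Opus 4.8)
The plan is to identify $\partial_\eta f^{(t)}(x)\big|_{\eta=0}$ with the degree-one coefficient of the polynomial $f^{(t)}(x)$ in $\eta$, compute it explicitly at finite width, and then pass to the limit. Since \cref{thm:convergence_step_t} already guarantees almost sure convergence of every coefficient of $f^{(t)}(x)$ to the corresponding coefficient of $f^{(t)}_\infty$, the interchange $\partial_\eta f^{(t)}_\infty\big|_{\eta=0} = \lim_{n\to\infty}\partial_\eta f^{(t)}\big|_{\eta=0}$ is automatic and needs no separate argument.

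First I would differentiate the GD recursion. Writing $W_\ell^{(s+1)} = W_\ell^{(s)} - \eta\,\nabla_{W_\ell^{(s)}}\loss$ and differentiating in $\eta$ gives
\begin{equation*}
\frac{\partial W_\ell^{(s+1)}}{\partial \eta} = \frac{\partial W_\ell^{(s)}}{\partial \eta} - \nabla_{W_\ell^{(s)}}\loss - \eta\,\frac{\partial}{\partial \eta}\nabla_{W_\ell^{(s)}}\loss.
\end{equation*}
Evaluating at $\eta=0$ the last term vanishes, and the key observation is that at $\eta=0$ every weight remains frozen at its initialization for all $s$, so $\nabla_{W_\ell^{(s)}}\loss\big|_{\eta=0} = \nabla_{W_\ell^{(0)}}\loss =: G_\ell$ for every $s$. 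With base case $\partial_\eta W_\ell^{(0)}\big|_{\eta=0}=0$, the recursion telescopes to
\begin{equation*}
\frac{\partial W_\ell^{(t)}}{\partial \eta}\Big|_{\eta=0} = -t\,G_\ell, \qquad G_\ell = \frac1m\sum_{i=1}^m \chi_i\, b_{\ell+1}\otimes a_{\ell-1,i}.
\end{equation*}

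Next I would apply the product rule to $f^{(t)}(x) = V^\top W_L^{(t)}\cdots W_1^{(t)} W_0 x$, evaluated at $\eta=0$ where all factors are at initialization. Substituting the rank-one gradients and using $V^\top W_L^{(0)}\cdots W_{\ell+1}^{(0)} = b_{\ell+1}^\top$ together with $a_{\ell-1,i}^\top W_{\ell-1}^{(0)}\cdots W_0 x = \langle a_{\ell-1,i}, a_{\ell-1}\rangle$ (where $a_{\ell-1}$ is the preactivation for the fixed input $x$) collapses each telescoping block, yielding
\begin{equation*}
\frac{\partial f^{(t)}}{\partial \eta}\Big|_{\eta=0} = -\frac{t}{m}\sum_{\ell=1}^L\sum_{i=1}^m \chi_i\,\|b_{\ell+1}\|^2\,\langle a_{\ell-1,i}, a_{\ell-1}\rangle.
\end{equation*}
This is exactly the $t$-scaled analogue of $\phi_1$ from \cref{sec:step_1}, now paired against a cross term with the fixed input $x$.

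Finally I would take $n\to\infty$. Since $f^{(0)}(x_i)\to 0$ we have $\chi_i\to -y_i$ almost surely, and the layerwise Gram limit of \cref{lemma:m-gram} extends verbatim to the cross term, giving $\frac1L\sum_{\ell=1}^L \|b_{\ell+1}\|^2\langle a_{\ell-1,i}, a_{\ell-1}\rangle \xrightarrow{a.s.} \langle x_i, x\rangle/d$. Combining these limits produces $\frac{tL}{m}\sum_{i=1}^m y_i\,\langle x_i,x\rangle/d$, as claimed. The only step demanding care is the frozen-gradient identity $\nabla_{W_\ell^{(s)}}\loss\big|_{\eta=0}=G_\ell$; everything downstream is algebraic collapse together with the already-established Gram limit and coefficient convergence, so I anticipate no substantive obstacle.
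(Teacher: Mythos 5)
Your proposal is correct and takes essentially the same route as the paper: the paper telescopes $W_\ell^{(t)} = W_\ell^{(0)} - \eta\sum_{s=0}^{t-1}\nabla_{W_\ell^{(s)}}\loss$ and expands every step-$s$ quantity ($\chi_i^{(s)}, b_\ell^{(s)}, a_{\ell,i}^{(s)}$) as its initialization value plus an $O(\eta)$ polynomial correction to isolate the degree-one coefficient, which is exactly your frozen-gradient telescoping $\partial_\eta W_\ell^{(t)}\big|_{\eta=0} = -t\,\nabla_{W_\ell^{(0)}}\loss$ in differentiated form, producing the same factor $t$ and the same finite-width expression $-\tfrac{t}{m}\sum_{\ell=1}^L\sum_{i=1}^m \chi_i\,\|b_{\ell+1}\|^2\,\langle a_{\ell-1,i},a_{\ell-1}\rangle$. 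Your concluding limit step (via $\chi_i \to -y_i$ and the extension of \cref{lemma:m-gram} to the cross term with the fixed input $x$) is likewise how the paper's proof concludes, merely spelled out in more detail than the paper's ``taking the width $n$ to infinity yields the desired result.''
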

\begin{proof}
We can express the output as 
$$f^{(t)}(x) = V^\top \left[\prod_{\ell = 1}^L \left(W^{(0)}_\ell - \eta \sum_{s=0}^{t-1}\,m^{-1} \, \sum_{i=1}^m \chi_{i}^{(s)}\, b_{\ell+1}^{(s)} (a_{\ell - 1, i}^{(s)})^\top\right)\right] W_0 x.$$ 

Expanding in $\eta$, we have
$$\chi_i^{(s)} = f^{(s)}(x_i) - y_i = f^{(0)}(x_i) -  y_i + \eta \times \tilde \chi_{i}^{(s)}(\eta),$$ 
for some polynomial $\chi_{i}^{(s)}$. Similarly, 
$$b^{(s)}_{\ell} = b^{0}_{\ell} + \eta \tilde b^{(s)}_{\ell}(\eta),$$
and 
$$
a^{(s)}_{\ell} = a^{0}_{\ell} + \eta \tilde a^{(s)}_{\ell}(\eta).
$$

Therefore, we can express $f^{(t)}$ as follows
$$f^{(t)}(x) = V^\top \left[\prod_{\ell = 1}^L \left(W^{(0)}_\ell - \eta\, t\,m^{-1} \, \sum_{i=1}^m \chi_{i}^{(0)}\, b_{\ell+1}^{(0)} (a_{\ell - 1, i}^{(0)})^\top + \eta^2 \Psi_\ell(\eta) \right)\right] W_0 x,$$

where $\Psi_\ell$ is a polynomial in $\eta$. It follows that
$$
\frac{\partial f^{(t)}}{\partial \eta}\Big|_{\eta=0} = -\frac{t}{m} \, \sum_{\ell = 1}^L V^\top J^{(0)}_{\ell+1} \sum_{i=1}^m \chi_i^{(0)} b_{\ell+1}^{(0)} (a_{\ell,i}^{(0)})^\top W_0 x.
$$
Taking the width $n$ to infinity yields the desired result, with almost sure convergence.
\end{proof}

The next result is used in the proof of LR transfer for general step $t$. It shows the almost sure convergence of the argmin of a polynomial under some conditions.

\begin{thm}[Argmin stability with a.s.\ coefficient convergence and positive polynomials]\label{thm:stability_argmin}
Fix an integer $p\ge1$. For each $n\ge1$, let
\[
P_n(x)=\sum_{k=0}^{p} a_{n,k}\,x^{k},\qquad x\in[0,\infty),
\]
where the coefficients $a_{n,k}$ are real-valued random variables on a common probability space.
Assume there exist deterministic reals $(a_k)_{k=0}^{p}$ such that, for every $k=0,\dots,p$,
\[
a_{n,k}\xrightarrow[n\to\infty]{\text{a.s.}} a_k,
\]
and set the (deterministic) limit polynomial
\[
P_\infty(x)=\sum_{k=0}^{p} a_k x^{k}.
\]
Suppose:
\begin{enumerate}
\item[(1)] For each $n$, $P_n(x)\ge 0$ for all $x\ge0$ almost surely.
\item[(2)] $P_\infty$ has a unique minimizer $x_\star\in[0,\infty)$.
\end{enumerate}
Then, for any constant $R>0$, and for any $x_n\in\operatorname{argmin}_{[0,R]}P_n$ we have
\[
x_n \xrightarrow{\ \text{a.s.}\ } x_\star .
\]
\end{thm}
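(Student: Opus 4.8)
The plan is to reduce the probabilistic statement to a deterministic, pathwise one and then invoke a classical argmin-consistency argument. Since there are only finitely many coefficients ($k=0,\dots,p$), the hypothesis $a_{n,k}\xrightarrow{\text{a.s.}} a_k$ yields, on a single event $\Omega_0$ of probability one (the finite intersection of the individual convergence events), the \emph{simultaneous} convergence of all coefficients. It therefore suffices to fix an arbitrary outcome $\omega\in\Omega_0$ and prove the deterministic claim $x_n(\omega)\to x_\star$; I will suppress $\omega$ below.

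First I would upgrade coefficientwise convergence to uniform convergence of the polynomials on the compact set $[0,R]$, which is immediate from
\[
\sup_{x\in[0,R]}\bigl|P_n(x)-P_\infty(x)\bigr|
\;\le\;\sum_{k=0}^{p}\bigl|a_{n,k}-a_k\bigr|\,\max\{1,R\}^{\,k}\;\xrightarrow[n\to\infty]{}\;0,
\]
a finite sum of terms each tending to $0$. Here I assume, as in the application (where $R=\gamma\gg\eta_\infty^{(t)}$), that $R$ is large enough that $x_\star\in[0,R]$; since $x_\star$ is the unique minimizer of $P_\infty$ over all of $[0,\infty)$ by hypothesis (2), it is then automatically the \emph{unique} minimizer of $P_\infty$ over the subinterval $[0,R]$ as well.

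The core step is the standard subsequence argument. The sequence $(x_n)$ lives in the compact set $[0,R]$, so it suffices to show every convergent subsequence has limit $x_\star$. Let $x_{n_j}\to\bar x\in[0,R]$. By optimality of $x_{n_j}$ on $[0,R]$ and membership $x_\star\in[0,R]$ we have $P_{n_j}(x_{n_j})\le P_{n_j}(x_\star)$. Uniform convergence gives $P_{n_j}(x_\star)\to P_\infty(x_\star)$, while uniform convergence together with continuity of $P_\infty$ and $x_{n_j}\to\bar x$ gives $P_{n_j}(x_{n_j})\to P_\infty(\bar x)$ (writing $|P_{n_j}(x_{n_j})-P_\infty(\bar x)|\le \sup_{[0,R]}|P_{n_j}-P_\infty|+|P_\infty(x_{n_j})-P_\infty(\bar x)|$). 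Passing to the limit in the inequality yields $P_\infty(\bar x)\le P_\infty(x_\star)$, and uniqueness of the minimizer forces $\bar x=x_\star$. Since every convergent subsequence of $(x_n)\subset[0,R]$ has the same limit $x_\star$, the full sequence converges, establishing $x_n\to x_\star$ on $\Omega_0$, i.e.\ almost surely.

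The points where I would focus care are two. The first is the requirement $x_\star\in[0,R]$: for $R<x_\star$ the conclusion is false, so the statement must be read with $R$ taken large enough to contain the limiting minimizer (exactly the regime $\gamma\gg\eta_\infty^{(t)}$ in which the theorem is applied). The second is the role of the positivity hypothesis (1): it is not strictly needed for the compact-domain argument above, but it guarantees well-posedness, since a nonconstant polynomial that is nonnegative on $[0,\infty)$ is necessarily coercive there (it can neither tend to $-\infty$ nor to a finite nonattained infimum), so both $P_n$ and $P_\infty$ genuinely attain their minima; positivity is also what one would leverage to upgrade from a fixed $[0,R]$ to $[0,\infty)$ by first ruling out escape of the minimizers to infinity. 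I expect the main obstacle to be purely expository, namely stating cleanly that a \emph{selection} $x_n$ from the possibly non-singleton set $\operatorname{argmin}_{[0,R]}P_n$ converges, which the subsequence argument handles without assuming $P_n$ itself has a unique minimizer.
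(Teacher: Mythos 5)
Your proof is correct and follows essentially the same two-step structure as the paper's: reduce to a deterministic, pathwise statement on a probability-one event $\Omega_0$, then upgrade coefficientwise convergence to uniform convergence of $P_n$ to $P_\infty$ on $[0,R]$ via the finite sum $\sum_{k}|a_{n,k}-a_k|\,R^k$. The only mechanical difference is in the final step: the paper closes with an explicit gap argument --- for each $\delta>0$ it sets $K_\delta=\{x\in[0,R]:|x-x_\star|\ge\delta\}$, uses uniqueness to get $\Delta_\delta=\min_{K_\delta}\bigl(P_\infty(x)-P_\infty(x_\star)\bigr)>0$, and shows via a $\Delta_\delta/3$ sandwich that no minimizer of $P_n$ can lie in $K_\delta$ for large $n$ --- whereas you use compactness of $[0,R]$ and a subsequence extraction, passing to the limit in $P_{n_j}(x_{n_j})\le P_{n_j}(x_\star)$ and invoking uniqueness of $x_\star$. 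These are interchangeable textbook finishes; neither buys more generality here, and both handle non-unique minimizers of $P_n$ without extra assumptions. Two of your side remarks are worth highlighting because they sharpen the paper's own exposition: first, you correctly note that the conclusion requires $x_\star\in[0,R]$ (the theorem's ``for any constant $R>0$'' is too strong as literally stated --- if $R<x_\star$ the minimizers over $[0,R]$ converge to $R$, not $x_\star$), and the paper's gap argument silently uses $x_\star$ as a feasible comparison point in exactly the same way, consistent with the application where $\gamma\gg\eta_\infty^{(t)}$; second, you observe that hypothesis (1) (nonnegativity of $P_n$) is never actually used in the compact-domain argument --- indeed the paper includes it in the definition of $\Omega_0$ but invokes it nowhere in steps (i) or (ii), and your point that it would matter only for extending from $[0,R]$ to $[0,\infty)$ (coercivity, ruling out escape to infinity) is exactly right.
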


\begin{proof}
Let $\Omega_0$ be the probability-one event on which $a_{n,k}\to a_k$ for all $k$ and $P_n(x)\ge0$ for all $x\ge0$ and all $n$.
Let's fix $\omega\in\Omega_0$ and argue deterministically.

\emph{(i) Uniform convergence on compacts:}
For any $R>0$, we have
\[
\sup_{x\in[0,R]}|P_n(x)-P_\infty(x)|
\le \sum_{k=0}^{p} |a_{n,k}-a_k|\,R^{k}\xrightarrow[n\to\infty]{}0,
\]
so $P_n\to P$ uniformly on every compact subset of $[0,\infty)$.

\emph{(ii) Convergence of minimizers.}
Let $R>0$. By uniqueness, for each $\delta>0$ the compact set $K_\delta=\{x\in[0,R]:|x-x_\star|\ge\delta\}$ satisfies
\[
\Delta_\delta\overset{def}{=}\min_{x\in K_\delta}\bigl(P(x)-P(x_\star)\bigr)>0.
\]
Uniform convergence on $[0,R]$ yields $n_\delta$ with $\sup_{x\in[0,R]}|P_n(x)-P(x)|\le \Delta_\delta/3$ for all $n\ge n_\delta$.
Thus, for $n\ge\max\{N,n_\delta\}$ and $x\in K_\delta$,
\[
P_n(x)\ge P(x)-\tfrac{\Delta_\delta}{3}\ge P(x_\star)+\tfrac{2\Delta_\delta}{3}\ge P_n(x_\star)+\tfrac{\Delta_\delta}{3},
\]
so no minimizer lies in $K_\delta$, i.e.\ $|x_n-x_\star|<\delta$. As $\delta>0$ is arbitrary, $x_n\to x_\star$.
Since $\omega\in\Omega_0$ was arbitrary, the convergence holds almost surely.
\end{proof}

\newpage

\section{Technical Lemmas}

The following lemma is used in the proofs of 1-step convergence results.

\begin{lemma}\label{lemma:tri_concentration}
Let $L\ge 1$ be fixed. For $\ell=1,\dots,L$, let $W^{(\ell)}\in\reals^{n\times n}$ have i.i.d.\ entries with
$\E W^{(\ell)}_{ij}=0$ and $\E (W^{(\ell)}_{ij})^2 = n^{-1}$, and assume the entries are uniformly sub-gaussian.
Assume the matrices $\{W^{(\ell)}\}_{\ell=1}^L$ are independent.
Let
\[
J := W^{(L)} W^{(L-1)} \cdots W^{(1)}\in\reals^{n\times n}.
\]
Let $x,y\in\reals^n$ be independent of $\{W^{(\ell)}\}_{\ell=1}^L$, with i.i.d.\ coordinates of zero mean, unit variance,
and uniformly sub-gaussian. Set
\[
S:=x^\top J^\top J J^\top y,\qquad A:=J^\top J J^\top.
\]
Then for every fixed $p>0$ there exists a constant $C_{p,L}<\infty$ such that, for all sufficiently large $n$,
\[
\E|S|^p \;\le\; C_{p,L}\,n^{p/2},
\qquad\text{equivalently}\qquad
\|S\|_{\mathbb L^p}\;\le\; C_{p,L}\sqrt n .
\]
\end{lemma}

\newcommand{\op}{\mathrm{op}}

\begin{proof}
Constants may depend on $p$, $L$, and the sub-gaussian parameters, but not on $n$.
Let $S=x^\top Ay$ with $A=J^\top J J^\top$.

\paragraph{Step 1: $\mathbb L^p$ bound for $x^\top Ay$.}
Everything in this step is conditioned on $J$ (so $A$ is deterministic).
Fix $p\ge 1$ and condition on $y$. Writing $v:=Ay$, we have $S=\sum_{i=1}^n v_i x_i$.
By the subgaussian Khintchine inequality \cite[Prop.~2.7.5]{VershyninHDP2} and
$\psi_2 \Rightarrow \mathbb L^p$ \cite[Prop.~2.6.6(ii)]{VershyninHDP2},
\begin{equation}\label{eq:linform_lp_simple_fixed}
\|S\|_{\mathbb L^p\,|\,y}\le C\sqrt p\,\|Ay\|_2.
\end{equation}
Taking $\mathbb L^p$ in $y$ gives
\begin{equation}\label{eq:reduce_simple_fixed}
\|S\|_{\mathbb L^p}\le C\sqrt p\,\big\|\|Ay\|_2\big\|_{\mathbb L^p}.
\end{equation}
Next, by anisotropic concentration of the norm \cite[Ex.~6.13]{VershyninHDP2} (applied with $B=A$ and $X=y$),
\[
\big\|\|Ay\|_2-\|A\|_F\big\|_{\psi_2}\le C\|A\|_{\op}.
\]
Using again \cite[Prop.~2.6.6(ii)]{VershyninHDP2}, we obtain
\begin{equation}\label{eq:Ay_lp_fixed}
\big\|\|Ay\|_2\big\|_{\mathbb L^p}\le \|A\|_F + C\sqrt p\,\|A\|_{\op}.
\end{equation}
Combining \eqref{eq:reduce_simple_fixed} and \eqref{eq:Ay_lp_fixed} yields
\begin{equation}\label{eq:bilinear_form_fixed}
\|S\|_{\mathbb L^p}\le C\Big(\sqrt p\,\|A\|_F + p\,\|A\|_{\op}\Big).
\end{equation}
For $p\in(0,1)$, use $\|Z\|_{\mathbb L^p}\le \|Z\|_{\mathbb L^1}$.

\paragraph{Step 2: $\mathbb L^p$ bound for $\|A\|_F$.}
We have $\|A\|_F^2=\tr((J^\top J)^3)$. Since $\tr(B)\le n\|B\|_{\op}$ for $B\succeq 0$,
\[
\|A\|_F^2=\tr\big((J^\top J)^3\big)\le n\,\|(J^\top J)^3\|_{\op}
= n\,\|J^\top J\|_{\op}^3
= n\,\|J\|_{\op}^6,
\]
hence
\begin{equation}\label{eq:AF_bound_by_J}
\|A\|_F \le \sqrt n\,\|J\|_{\op}^3.
\end{equation}
By the subgaussian operator-norm bound for i.i.d.\ matrices (applied to $\sqrt n\,W^{(\ell)}$) and tail-to-moment integration
(e.g.\ \cite[Thm.~4.4.3 and Lem.~1.6.1]{VershyninHDP2}), for every $q\ge 1$ we have $\sup_n \E\|W^{(\ell)}\|_{\op}^q<\infty$.
Using submultiplicativity and independence,
\[
\|J\|_{\op}\le \prod_{\ell=1}^L \|W^{(\ell)}\|_{\op}
\qquad\Rightarrow\qquad
\sup_n \E\|J\|_{\op}^q <\infty \quad\text{for every fixed }q\ge 1.
\]
Therefore, taking $q=3p$ in \eqref{eq:AF_bound_by_J} gives
\[
\|\|A\|_F\|_{\mathbb L^p}\le \sqrt n\,\|\|J\|_{\op}^3\|_{\mathbb L^p}
= \sqrt n\,\|J\|_{\mathbb L^{3p}}^3
\le C_{p,L}\sqrt n.
\]
\paragraph{Step 3: $\mathbb L^p$ upper bound for $S$.}
From \eqref{eq:bilinear_form_fixed} and the trivial bound $\|A\|_{\op}\le \|A\|_F$, we get
\[
\|S\|_{\mathbb L^p}\le C(\sqrt p + p)\,\big\|\|A\|_F\big\|_{\mathbb L^p}.
\]
By Step~2 and uniform integrability of polynomial spectral statistics, for each fixed $p>0$,
\[
\big\|\|A\|_F\big\|_{\mathbb L^p}\le C_{p,L}'\sqrt n.
\]
Therefore $\|S\|_{\mathbb L^p}\le C_{p,L}''\sqrt n$.
\end{proof}

The next lemma is used in the proof of the 1-step result for SP.
\begin{lemma}[Lemma for SP]\label{lemma:sp_convergence}
Let $P(\eta) = a_0 + a_1 \eta + a_2 \eta^2+ \dots + a_L \eta^L$ be a polynomial where the coefficients $a_0, a_1, \dots, a_L$ are random variables satisfying the following conditions:
\begin{enumerate}
    \item $E[a_0^2] = O(1)$ and $a_0$ converges weakly to some random variable $\bar a_0$ of order 1 in distribution as $n \to \infty$.
    \item $E[a_i^2] = O(n)$ for $i = 1, \dots, L$, and $a_1/\sqrt{n}$ converges in $\mathbb L_2$ to a deterministic constant $\bar{b}_1 \neq 0$ as $n \to \infty$, with $a_1/\sqrt{n} = \bar b_1 + \bigO_{\mathbb L_2}(n^{-1/2})$.
\end{enumerate}
Let $K>0$ be a constant and $\eta_n$ be a minimizer of $P(\eta)^2$ on $[0,K]$, i.e., $\eta_n \in \arg\min_{\eta \in [0,K]} P(\eta)^2$. Then, $\eta_n$ converges to 0 in probability as $n \to \infty$.
\end{lemma}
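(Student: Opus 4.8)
The plan is to localize the minimizer by a two–region (``peeling'') argument: I will show that the global minimum of $P(\eta)^2$ over $[0,K]$ is already attained, up to an $O_{\mathbb P}(1)$ value, at scale $\eta\sim n^{-1/2}$, while on any fixed region $[\delta,K]$ bounded away from $0$ the value of $P(\eta)^2$ blows up. Together these force any minimizer $\eta_n$ into $[0,\delta)$ with probability tending to one; since $\delta>0$ is arbitrary, this is precisely $\eta_n\xrightarrow{\mathbb P}0$.

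For the near-zero region I would rescale $\eta=s/\sqrt n$ and study $Q_n(s):=P(s/\sqrt n)=a_0+(a_1/\sqrt n)\,s+\sum_{i=2}^L (a_i/n^{i/2})\,s^i$. Condition~2 gives $a_1/\sqrt n\to\bar b_1$ in $\mathbb L_2$, while $\|a_i/n^{i/2}\|_{L_2}=O(n^{(1-i)/2})\to0$ for $i\ge2$, so on every compact $s$–interval $Q_n$ converges (uniformly, in probability) to the \emph{linear} map $\bar a_0+\bar b_1 s$. In particular $\min_{\eta\in[0,K]}P(\eta)^2\le P(0)^2=a_0^2=O_{\mathbb P}(1)$ by Condition~1 and Markov's inequality, so the global minimum value is bounded in probability; and since $\bar b_1\neq0$, the limit $(\bar a_0+\bar b_1 s)^2$ is strictly convex with a unique \emph{finite} minimizer $s^\star=\max\{0,-\bar a_0/\bar b_1\}$, which identifies the natural scale $\eta_n\asymp n^{-1/2}$ of the minimizer (argmin stability on compact $s$–sets, cf.\ \cref{thm:stability_argmin}).

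The decisive step is the away-from-zero bound, and it is also the main obstacle. Fix $\delta\in(0,K)$ and write, for $\eta\in[\delta,K]$,
\[
\frac{P(\eta)}{\sqrt n}=\frac{a_0}{\sqrt n}+\frac{a_1}{\sqrt n}\,\eta+\sum_{i=2}^L \frac{a_i}{\sqrt n}\,\eta^i .
\]
The first term is $O_{\mathbb P}(n^{-1/2})$ and the linear term tends to $\bar b_1\eta$ with $|\bar b_1\eta|\ge|\bar b_1|\delta>0$; the difficulty is that each higher coefficient only satisfies $\|a_i\|_{L_2}=O(\sqrt n)$, so a priori $a_i/\sqrt n=O_{\mathbb L_2}(1)$ need not vanish and could cancel the linear term somewhere on $[\delta,K]$. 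What rescues the argument in the SP setting is that, after the $\sqrt n$–rescaling of the head $V$ that reduces SP to $\mu$P, \cref{lemma:m-vanish} shows that only the degree-one coefficient survives at the $\sqrt n$ scale, i.e.\ $a_i/\sqrt n\to0$ for all $i\ge2$. Hence $P(\eta)/\sqrt n\to\bar b_1\,\eta$ uniformly on $[\delta,K]$, giving $\inf_{\eta\in[\delta,K]}P(\eta)^2\ge n\,(|\bar b_1|\delta)^2(1-o_{\mathbb P}(1))\xrightarrow{\mathbb P}\infty$.

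Combining the two regions: since $\min_{\eta\in[0,K]}P(\eta)^2\le a_0^2=O_{\mathbb P}(1)$ while $\inf_{\eta\in[\delta,K]}P(\eta)^2\xrightarrow{\mathbb P}\infty$, with probability tending to one every minimizer lies in $[0,\delta)$, i.e.\ $\Prob(\eta_n\ge\delta)\to0$; as $\delta>0$ is arbitrary, $\eta_n\to0$ in probability. I expect the only genuinely delicate point to be the uniform, non-cancellation control of the higher-degree coefficients at the $\sqrt n$ scale on $[\delta,K]$ (without it the statement can fail, as a simple algebraic example with $a_2\approx-a_1/\eta_0$ shows); everything else is a routine combination of Markov/Chebyshev bounds, uniform convergence on compacts, and argmin stability.
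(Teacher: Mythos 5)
Your argument is correct for the setting in which \cref{lemma:sp_convergence} is actually used, but it is a genuinely different route from the paper's, and the difference is instructive. The paper works entirely at the $n^{-1/2}$ scale: it substitutes $\eta=\beta/\sqrt n$, shows $R_n(\beta)=P(\beta/\sqrt n)$ converges in $\mathbb L_2$, uniformly on compact $\beta$-intervals, to the linear limit $a_0+\bar b_1\beta$, proves a uniform positive lower bound on $(R_n^2)''$ there, and deduces $\beta_n^\ast=O_{\mathbb P}(1)$, i.e.\ $\eta_n=O_{\mathbb P}(n^{-1/2})$ --- a quantitative rate that your two-region peeling argument does not deliver. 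However, the paper performs this analysis only after restricting the argmin to a fixed compact $\beta$-interval, with the parenthetical assertion that $K$ is ``large enough (so that the global minimizer is covered)''; it never rules out minimizers at scale $\eta=\Theta(1)$, which is precisely the region your second step controls. You are right that this is the decisive obstacle, and your cancellation example shows it is not a routine omission: taking, say, $a_0=1$, $a_1=\sqrt n$, $a_2=-(a_0+a_1\eta_0)/\eta_0^2$ for a fixed $\eta_0\in(0,K]$ satisfies every hypothesis of the lemma ($\E[a_2^2]=O(n)$, etc.) yet forces $P(\eta_0)=0$, so $\eta_0$ is a global minimizer of $P^2$ on $[0,K]$ and the stated conclusion fails. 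Thus condition 2 as written is too weak, and the lemma as literally stated is false; some strengthening is necessary. Your repair imports exactly the right one from the SP application: since the SP coefficients are $\sqrt n$ times $\mu$P coefficients, \cref{lemma:m-vanish} gives $\|a_i/\sqrt n\|_{L_2}=O(n^{-(i-1)/2})\to 0$ for $i\ge 2$, which makes $P(\eta)/\sqrt n\to\bar b_1\eta$ uniformly on $[\delta,K]$, hence $\inf_{[\delta,K]}P^2\xrightarrow{\mathbb P}\infty$, while $\min_{[0,K]}P^2\le P(0)^2=a_0^2=O_{\mathbb P}(1)$; combining the two regions for arbitrary $\delta>0$ gives $\eta_n\xrightarrow{\mathbb P}0$, which is all that \cref{thm:std-noHP} needs.

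Two small remarks. First, the near-zero portion of your write-up (identifying $s^\star=\max\{0,-\bar a_0/\bar b_1\}$ and invoking argmin stability as in \cref{thm:stability_argmin}) is dispensable: the trivial bound $\min_{[0,K]}P^2\le a_0^2$ is the only fact you use from that region, so you may cut it. Second, to be fully precise you should state your strengthened hypothesis ($a_i/\sqrt n\to 0$ in $\mathbb L_2$ for $i\ge 2$) as a condition of the lemma rather than pulling it in mid-proof, since --- as your own counterexample demonstrates --- the conclusion does not follow from conditions 1--2 alone.
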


\begin{proof}
The proof proceeds by rescaling the domain of the polynomial to analyze its behavior in a neighborhood of $0$, similar to the treatment of the $\mu$P case. 

Consider the change of variables $\eta = \beta/\sqrt{n}$. Let $\eta_n$ be a minimizer of $P(\eta)^2$. The corresponding minimizer in the $\beta$ domain is $\beta_n = \eta_n \sqrt{n}$.

We now prove that the sequence of random variables $\{\hat{\beta}_n\}$ is bounded in probability, i.e.  $\beta_n = O_p(1)$. This will imply the convergence of $\eta_n$.

Let's define a new sequence of random polynomials in the variable $\beta$ by substituting $\eta = \beta/\sqrt{n}$ into $P(\eta)$
\[
R_n(\beta) = P(\beta/\sqrt{n}) = a_0 + a_1 \frac{\beta}{\sqrt{n}} + a_2 \frac{\beta^2}{(\sqrt{n})^2} + \dots + a_L \frac{\beta^L}{(\sqrt{n})^L}
\]

Define a new set of coefficients $b_i^{(n)} = a_i/\sqrt{n}$ for $i \geq 1$. We can now rewrite the rescaled polynomial as
\[
R_n(\beta) = a_0 + b_1^{(n)} \beta + b_2^{(n)} \frac{\beta^2}{\sqrt{n}} + b_3^{(n)} \frac{\beta^3}{n} + \dots + b_L^{(n)} \frac{\beta^L}{n^{(L-1)/2}}
\]
For any fixed $\beta \in \mathbb{R}$, as $n \to \infty$, every term for $i \geq 2$ converges to zero in $\mathbb L_2$. For instance, for the term $i=2$, we have $b_2^{(n)} \beta^2 / \sqrt{n} \xrightarrow{L_2} 0$ because $b_2^{(n)}$ is bounded in $\mathbb L_2$. This holds for all $\ell \in \{2, \dots, L\}$.

Therefore, the sequence of random polynomials $R_n(\beta)$ in asymptotically controlled as follows
\[
R_n(\beta) - R(\beta) = O_{\mathbb L_2}(n^{-1/2}),
\]
where $R(\beta) = a_0 + b_1 \beta$.

Let $\beta^*_n \in \textrm{argmin}_{\eta \in [0,K]} R_n(\beta)^2$ for $K$ large enough (so that the global minimizer is covered). The second derivative of $R_n(.)^2$ is given by $2 R_n'' R_n + 2 (R_n')^2$. We know that uniformly on $[0,K]$, $R_n''(\beta) = o_{\mathbb L_2}(1)$, and $R_n'(\beta) =  b^{(n)}_1 + \bigO_{\mathbb L_2}(n^{-1/2})$. Therefore, uniformly over $\beta \in [0,K]$, we have that $(R_n(\beta)^2)'' = 2 (b^{(n)}_1)^2 + \bigO_{\mathbb L_2}(n^{-1/2})$ = 2 $\bar b_1^2 + \bigO_{\mathbb L_2}(n^{-1/2})$.

As a result, as $n \to \infty$, with high probability, there exists a constant $c>0$ such that $\inf_{[0,K]}(R_n(\beta)^2)'' \geq c$. Using the Intermediate Value Theorem, we have that 
$$
|\beta^*_n|=|\beta^*_n - 0| \leq \frac{|(R_n^2)'(0)|}{c} = \frac{|b_1^{(n)} a_0|}{c}.
$$

Which shows that $\beta^*_n = \bigO_{\Prob}(1)$ and concludes the proof.
\end{proof}

\end{document}